%% file: example_paper.tex
\documentclass{article}

\usepackage{microtype}
\usepackage{graphicx}
\usepackage{subcaption}
\usepackage{booktabs} 

\usepackage{hyperref}

\usepackage[accepted]{icml2026}
\input{math_commands.tex}

\renewcommand{\algorithmiccomment}[1]{\hfill $\triangleright$ #1}

\usepackage{amsmath}
\usepackage{amssymb}
\usepackage{mathtools}
\usepackage{amsthm}
\usepackage{array}

\usepackage[capitalize,noabbrev]{cleveref}

\theoremstyle{plain}
\newtheorem{theorem}{Theorem}[section]
\newtheorem{proposition}[theorem]{Proposition}
\newtheorem{lemma}[theorem]{Lemma}

\theoremstyle{definition}

\theoremstyle{remark}

\newcommand{\kl}[2]{\textnormal{KL}({#1}  \| {#2})}

\usepackage{multirow}

\usepackage[textsize=tiny]{todonotes}

\icmltitlerunning{Towards the Explainability of Temporal Graph Networks via Memory Backtracking and Topological Attribution}

\begin{document}

\twocolumn[
  \icmltitle{Towards the Explainability of Temporal Graph Networks via Memory Backtracking and Topological Attribution}



  \icmlsetsymbol{equal}{*}

  \begin{icmlauthorlist}
    \icmlauthor{Yazheng Liu}{yyy}
    \icmlauthor{Xi Zhang}{comp}
    \icmlauthor{Sihong Xie}{equal,yyy}
    \icmlauthor{Hui Xiong}{yyy}
  \end{icmlauthorlist}

  \icmlaffiliation{yyy}{Artificial Intelligence Trust, The Hong Kong University of Science and Technology (Guangzhou)}
  \icmlaffiliation{comp}{Key Laboratory of Trustworthy Distributed Computing and Service (MoE), Beijing University of Posts and Telecommunications}
  
  \icmlcorrespondingauthor{Sihong Xie}{sihongxie@hkust-gz.edu.cn}

  \icmlkeywords{Machine Learning, ICML}

  \vskip 0.3in
]



\printAffiliationsAndNotice{}  

\begin{abstract}
Temporal graphs are ubiquitous in real-world applications and Temporal Graph Networks (TGNs) have achieved superior predictive accuracy. 
Understanding which historical events drive 
model predictions can enhance trustworthiness of TGNs. Existing explanation methods overlook the memory module, the core component that records and updates node histories, leaving the influence of past events unexplored. To address this, we attribute TGNs predictions through the topology attribution tree and memory backtracking tree. The topology attribution tree captures the influence of neighbors and their memory vectors, then the memory backtracking tree quantifies how historical events shape node memory vectors. We apply the LRP in TGNs, 
ensuring that the total contribution of events equals the model’s logits. Finally, top-k selection may be unfaithful due to the nonlinear mapping from logits to probabilities, we design optimization objectives to identify the important events. Experiments on nine temporal graph datasets, spanning node property prediction, link prediction tasks and graph classification tasks, show that our method provides faithful explanations and outperforms state-of-the-art baselines. The code is available at \href{https://github.com/yazhengliu/MemExplainer}{https://github.com/yazhengliu/MemExplainer}.
\end{abstract}

\input{introduction}
\input{related_work}
\input{preliminaries}

\input{method}
\input{experiments}

\section{Conclusions and Limitations}
We study the problem of explaining predictions in TGNs. 
Existing methods fix the memory vector and fail to account for the temporal evolution of node memories.
To address this, we construct a topology attribution tree to quantify the spatial contribution of neighboring events and the temporal contribution of the node memories. We construct memory backtracking tree to track the long-term impact of historical events on the node memories. We design an objective function to select important events as explanations. Experiments show our method outperforms baselines across all datasets. One limitation is its computational cost on large temporal graphs, where the memory backtracking tree can become deep and wide, increasing runtime and memory usage. A practical solution is to limit the tree depth or prune branches with small relevance scores, which we leave for future work.

\section*{Acknowledgements}
Hui Xiong was supported in part by the National Key R\&D Program of China (Grant No.2023YFF0725001), in part by the National Natural Science Foundation of China (Grant No.92370204), in part by the guangdong Basic and Applied Basic Research Foundation (Grant No.2023B1515120057), in part by the Key-Area Special Project of Guangdong Provincial Ordinary Universities(2024ZDZX1007). Sihong Xie was supported by the Department of Science and Technology of Guangdong Province (2023CX10X079), National Key R\&D Program of China (Grant No.2023YFF0725001), the Guangzhou-HKUST(GZ) Joint Funding Program (Grant No.2023A03J0008),
and Education Bureau Guangzhou Municipality. Xi Zhang was supported by the National Key Research and Development Program (Grant No. 2023YFC3303800), and the Natural Science Foundation of China (No. 62372057). 

\section*{Impact Statement}
Our paper is a general algorithmic contribution, and we do
not foresee direct negative societal impacts. On the positive side, our method can be used to help users understand TGNs.

\bibliography{example_paper}
\bibliographystyle{icml2026}

\newpage
\appendix
\onecolumn
\input{appendix}


\end{document}

%% file: math_commands.tex
\usepackage{amsmath,amsfonts,bm}

\def\eqref#1{equation~\ref{#1}}

\def\1{\bm{1}}

\def\rvd{{\mathbf{d}}}

\def\rvg{{\mathbf{g}}}
\def\rvh{{\mathbf{h}}}

\def\rvm{{\mathbf{m}}}

\def\rvr{{\mathbf{r}}}
\def\rvs{{\mathbf{s}}}

\def\rvx{{\mathbf{x}}}
\def\rvy{{\mathbf{y}}}
\def\rvz{{\mathbf{z}}}

\def\rmC{{\mathbf{C}}}

\def\rmM{{\mathbf{M}}}

\DeclareMathAlphabet{\mathsfit}{\encodingdefault}{\sfdefault}{m}{sl}
\SetMathAlphabet{\mathsfit}{bold}{\encodingdefault}{\sfdefault}{bx}{n}

\def\gE{{\mathcal{E}}}

\def\gG{{\mathcal{G}}}

\def\gN{{\mathcal{N}}}

\def\gV{{\mathcal{V}}}

\def\sR{{\mathbb{R}}}

\DeclareMathOperator*{\argmin}{arg\,min}

%% file: introduction.tex
\section{Introduction}
Temporal Graph Networks (TGNs)~\citep{tgn} have gained increasing attention in real-world applications such as fraud detection~\citep{kim2024temporal} and healthcare forecasting~\citep{hancox2024developing,lin2025kat}. TGNs take as input a temporal graph. 
A temporal graph can be represented as a sequence of timestamped events on the graph. Each event is represented as $e_k=(v, u,t_k)$, indicating that source
node $v$ and destination node $u$ have an interaction event at timestamp $t_k$. In TGNs, each node maintains a memory vector to store its historical information. For example, in a social network, a node’s memory vector might represent the history of its interactions with other users, while in a recommendation system, it could store the user’s past preferences and activities. 

TGNs process temporal graphs in two stages: the memory module and the embedding module.  
In the memory module (Figure~\ref{fig:framework} (b)), events are processed in batches to improve computational efficiency. Target node updates its memory based on the received message. This message includes the source node memory, the destination node memory, and the event feature. The updated memory is passed to subsequent batches for further processing. In the embedding module (Figure~\ref{fig:framework} (c)), each node $u$ obtains its neighboring events $\mathcal{N}_{u}[0,t]=\{e_k=(v,u,t_k)|e_k \in \gE(t) \}$, where $t$ is the current time and $\gE(t)$ is the set of events observed before timestamp. The memory vectors are used to generate these neighboring event embeddings. The event embedding also includes the source node memory, the destination node memory and the event feature. Each node aggregates these neighboring event embeddings to form its own node embedding. To predict the presence of an edge between two target nodes, the embeddings of these nodes are used for the prediction. 
The memory and embedding modules enable TGN to capture both structural and temporal dependencies within the temporal graph and enhance prediction accuracy.

\begin{figure*}[ht]
     \includegraphics[width=1\textwidth]
{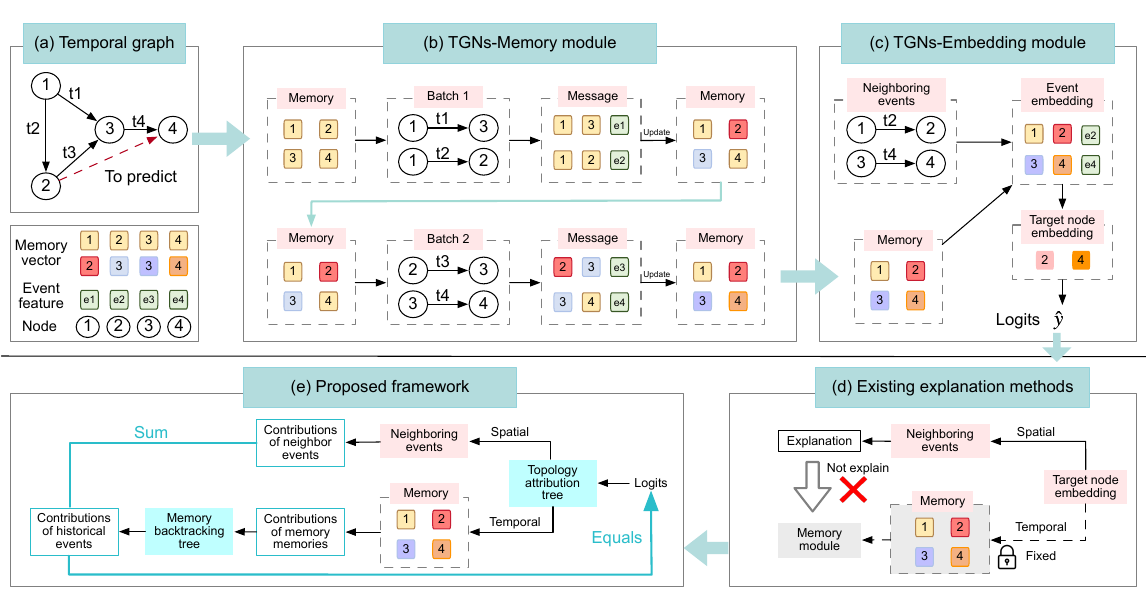}
\centering
    \caption{\small Overview of TGNs and our explainability framework. (\textbf{a}) Temporal graph: each node has a memory vector and each event has features. 
    (\textbf{b}) The memory module in TGNs: events are processed in batches. In the batch 1, the node $3$ and $2$ update their memory, because they receive the message from event $e_1$ and $e_2$. For event $e_1=(1,3,t_1)$, the message is constructed from the memories of source node 1, destination node 3 and the feature of $e_1$.
    The updated memory is passed to batch 2 for further processing. (\textbf{c}) The embedding module in TGNs: for target nodes 2 and 4, the neighboring events are $e_2$ and $e_4$. The embedding of $e_4$ is based on memories of nodes 3 and 4, alongside the event feature. The event embeddings are aggregated to generate embeddings for node 4, which are used for predictions.
     (\textbf{d}) Existing explanation methods. 
     Existing explanation methods fix the memories such that they ignore how historical events update the memory of node. As a result, they may obtain unfaithful explanations. 
(\textbf{e}) The proposed framework: we construct a topology attribution tree to quantify the spatial contribution of neighboring events and the temporal contribution of the node memories (Section~\ref{sec:spatial}). Then we construct memory backtracking tree to track the long term impact of historical events on node memories (Section~\ref{sec:temporal}). The total contributions of the neighboring events and historical events equals the logits. }
    \label{fig:framework}
\end{figure*}

Despite their strong predictive performance, TGNs remain black-box models. TGNs offer little transparency into how predictions depend on historical events. Enhancing explainability is critical to improving the trustworthiness of TGNs and ensuring their safe deployment in high-stakes areas such as fraud detection and healthcare. Various explanation methods on static graph neural networks have been proposed, such as GNNExplainer~\citep{gnnexplainer}, PGExplainer~\citep{pgexplainer}, and FlowX~\citep{flowx}. These methods typically identify a small subset of important edges, nodes, or subgraphs that contribute the most to the prediction on static graphs. There have been some recent attempts at TGNs explainability, such as T-GNNExplainer~\citep{tgnnexplainer} and TempME~\citep{tempme}. 
However, all of the above methods fix the final memory vector (Figure~\ref{fig:framework} (d)). Fixing the memory vector means they ignore how historical events update the memory of node. In TGNs, the target node embeddings capture both temporal information from node memories and spatial information from neighboring events. Fixing the memory vector thus restricts the explanation method to the topological structure of the temporal graph, neglecting the critical role of the memory module.
As a result, these explanation methods lead to inaccurate attributions and unfaithful explanations.
To address the challenge of temporal graph explanations, we propose a framework that attributes the predictions of TGNs to neighboring events and historical events, considering both spatial interactions among neighbors and the updates of node memories (Figure~\ref{fig:framework} (e)). First, we construct a topology attribution tree to quantify the spatial contribution of neighboring events and the temporal contribution of the node memories. Then we construct memory backtracking tree to track the long-term impact of historical events on the node memories.
We apply Layer-wise Relevance Propagation (LRP)~\citep{LRP} method to the TGN model and ensure that the sum contributions of neighbor events and node memories equals the logits when constructing the topology attribution tree.
In memory backtracking trees, the sum contributions of events equals the total memory contributions of all nodes. Thus, our method achieves conservation, where the sum of all event contributions equals the logits. This conservation property allows us to derive KL divergence and formulate an optimization problem to select important events as explanations. Experiments on nine temporal graph datasets, spanning node property prediction, link prediction, and graph classification demonstrate the effectiveness of our method, which consistently outperforms four state-of-the-art baselines, highlighting the advantage of tracing node memories. A t-test between our method and the second-best baseline shows statistical significance in \textbf{77}\% of cases for $\text{Fidelity}_{\text{KL}}$ and \textbf{74}\% for $\text{Fidelity}_{\text{prob}}$.

%% file: related_work.tex
\section{Related work}
\textbf{GNNs Explainability} Explainability methods for Graph Neural Networks can be broadly categorized into instance-level and model-level approaches~\citep{survey}. Instance level explanation methods focus on explaining model predictions by identifying important subgraphs that strongly correlate with predictions. For example,  the Gradient/Features-based techniques, such as CAM and GradCAM~\citep{pope2019explainability} identify important nodes using gradients. Perturbation-based methods, such as GNNexplainer~\citep{gnnexplainer}, PGExplainer~\citep{pgexplainer}, learn edge masks by maximizing mutual information to explain the predicted class distribution. Decomposition-based methods, such as GNN-LRP~\citep{gnnlrp}, extend the original LRP~\citep{LRP} algorithm to GNNs and attribute the importance to graph walks. Surrogate-based methods, like GraphLime~\citep{graphlime}, build a surrogate model with kernel-based feature selection to provide node feature explanations. Model level explanation methods~\cite{xgnn} produce a high-level explanation about the general behaviors of GNNs. However, these
methods are designed for static graphs and cannot explain temporal graph models. They fail to capture the temporal dependency mixed with the graph topology.

\textbf{TGNs Explainability} TGNNExplainer~\citep{tgnnexplainer} is the first explainer tailored for TGNs, which relies on the MCTS algorithm to search for a combination of the explanatory events. TempME~\citep{tempme} extracts the most
interaction-related motifs based on the information bottleneck principle. Recent work~\citep{he2022explainer} utilizes the probabilistic graphical model to
generate explanations for discrete time series on the graph, leaving the continuous-time setting underexplored. However, these methods overlook the memory module in TGNs, which is responsible for maintaining and updating node states over time. As a result, they fail to capture how historical interactions accumulate in memory and directly shape future predictions, leaving the core mechanism of TGNs unexplained.

%% file: preliminaries.tex
\section{Preliminaries on TGNs}
A temporal graph is defined as a function of timestamp $t$, denoted by $\gG(t)=\{\gV(t),\gE(t)\}$, where $\gV(t)$ and $\gE(t)$ represent the set of nodes and events observed before timestamp $t$. Each event $e_k=(v,u,t_k) \in \gE(t)$ indicates an interaction between source node $v$ and destination node $u$ at timestamp $t_k$, where $t_k<t$. Let $\rvx_{u} \in \sR^{1 \times d_m}$ and $\rvx_{e_k} \in \sR^{1 \times d_e}$ denote feature vectors for node $u$ and event $e_k$, respectively. In TGNs, each node $u$ maintains a memory vector $\rvs^{t}_{u}\in \sR^{1 \times d_m} $ at timestamp $t$. $\rvs^{t-}_{u} \in \sR^{1 \times d_m} $ denotes the memory vector of node $u$ before timestamp $t$.

Temporal Graph Networks~\citep{tgn} can be viewed as an encoder-decoder framework. The encoder maps the temporal graph $\gG(t)$ into time-aware node embeddings, while the decoder takes one or more embeddings to obtain task-specific predictions, such as node property prediction or link prediction. The memory of node is updated whenever the node participates in an event. TGNs compute node embedding $\rvz^t_{u} \in \sR^{d_m} $ in four steps:
\begin{align}
    \rvm^t_{u} &= f_{\textnormal{message}}\left(\rvs^{t^-}_{v}, \rvs^{t^-}_{u}, \rvx_{e_k}, t - t_k, \right), \label{eq:message} \\
    \bar{\rvm}^t_{u} &= f_{\textnormal{agg}}\left( \rvm^{t_1}_{u}, \dots, \rvm^{t_b}_{u} \right), \label{eq:Aggregate} \\
    \rvs^{t}_{u} &= f_{\textnormal{update}}\left( \bar{\rvm}^t_{u}, \rvs^{t^-}_{u} \right) \label{eq:memory update},  \\
    \rvz^t_{u} &= \sum_{e_k\in \gN_{u}^n ([0, t])} f_{\textnormal{emb}}\left( \rvs^t_{u}, \rvs^t_{v}, \rvx_{e_k}, \rvx_{u}, \rvx_{v} \right).
    \label{eq:embedding}
\end{align}
Here, the \textbf{message function} $f_{\textnormal{message}}$ computes the message of destination node $u$ based on the memory  $\rvs^{t^-}_{u}$, $\rvs^{t^-}_{v}$, the event features $\rvx_{e_k}$, and the elapsed time $t-t_k$.  
$f_{\textnormal{message}}$ can be either identity map or MLP. If multiple messages are received in a batch, the \textbf{message aggregator} $f_{\textnormal{agg}}$ combines these messages, either by selecting the most recent message or by averaging, and $t_1,\cdots,t_b \leq t$. The \textbf{memory updater} $f_{\textnormal{update}}$ updates the node’s memory, typically using an LSTM or GRU. 
Let $\mathcal{N}_{u}[0,t]=\{e_k| e_k=(v,u,t_k) \in \gE(t)\}$ denote the neighborhood of node ${u}$ in time interval $t_k$ and $\mathcal{N}^n_{u}[0,t]$ denote the set of the most recent $n$ interactions from $\mathcal{N}_{u}[0,t]$, where $t$ is the current time. For each event, the \textbf{embedding} function $f_{\textnormal{emb}}$ combines the memory vectors of the target node and its neighbors, along with the associated event and node features to produce an event-specific representation. $f_{\textnormal{emb}}$ can be implemented as a temporal graph attention or a graph sum function.

The node embeddings are fed into an MLP for task-specific predictions. For link prediction, given two nodes $v$ and $u$ at time $t$, the prediction score is obtained as $\hat{y}_{v,u}=\sigma(f_{\textnormal{mlp}}([\rvz^t_{v}|| \rvz^t_{u}]))$, where $[\cdot \,\Vert\, \cdot]$ denotes vector concatenation, $f_{\textnormal{mlp}}$ is the multi-layer perceptron, and $\sigma$ is the sigmoid function. For node property prediction, the embedding of a node $u$ is mapped to its label distribution as : $\hat{y}_{u}=\textnormal{softmax}(f_{\textnormal{mlp}}([\rvz^t_{u}]))$. For graph classification, the average pooling of $\rvz^t_{u}$ across all nodes in temporal graph $\gG(t)$ produces a single
vector representation for classification.


%% file: method.tex
\section{Method}
\subsection{Overview}
\begin{figure}
    \centering
    \includegraphics[width=1\linewidth]{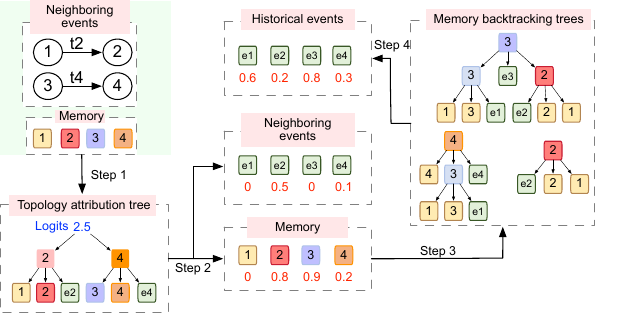}
    \caption{\small The detailed process of Figure~\ref{fig:framework} (e). Assuming the logits are 2.5, the numbers in red represent the contribution scores. The meaning of the different colored circles and squares is consistent with those in Figure~\ref{fig:framework}. 
    The number with red colors denote the contribution scores. The meanings of the different colored circles and squares are consistent with those in Figure~\ref{fig:framework}. Given the node memories and neighboring events $e_2$ and $e_4$, we construct the topology attribution tree to attribute the logits to the memories of nodes 1, 2, 3, 4 and the features of neighboring events $e_2$ and $e_4$. For nodes 2, 3, 4, we construct the memory backtracking tree to attribute the contribution of node memories to the historical events $e_1$, $e_2$, $e_3$ and $e_4$. In both trees, the parent node is the target node in the event, with three child nodes: the source node, the target node, and the event triggering the interaction. The total contributions from historical and neighboring events equal the logits.}
    \label{fig:method_overview}
\end{figure}
The pipeline of our method is shown in Figure~\ref{fig:framework} (e), with the detailed process depicted in Figure~\ref{fig:method_overview}. Taking link prediction as an example, given a temporal graph and a prediction between nodes $u$ and $v$ at time $t$, we proceed in three steps.
Let $\mathcal{T}_{\text{top}}(x)$ denote the topology attribution tree and the $\rmC_{x}^t \in \sR^{ |\gE(t)| \times d_m}$ represent the contributions of all events to $\mathbf{z}_x^t$, where node $x \in \{u, v\}$.  The $\rmM^t_{p_0\to u} \in \sR^{d_m \times d_m}$ represents the contribution of node $p_0$'s memory vector to $\rvz^t_{u}$.
First, we construct $\mathcal{T}_{\text{top}}(u)$ and $\mathcal{T}_{\text{top}}(v)$ and attribute the $\mathbf{z}_u^t$, $\mathbf{z}_v^t$ to the historical events and to the node memory vectors (Section \ref{sec:spatial}). We prove that these contribution values satisfy
 \begin{equation}
     \sum_ {p_0 \in \text{leaf}(\mathcal{T}_{\text{top}}(x)) }\mathbf{1}^\top \rmM^t_{p_0 \to x}+\mathbf{1}^\top \rmC_{x}^t =\rvz^t_{x}, \quad x \in \{u, v\}
 \end{equation}
 Where $ \text{leaf}(\mathcal{T}_{\text{top}}(x))$ represent the leaf nodes of $\mathcal{T}_{\text{top}}(x)$ and $\mathbf{1}^\top$ represents the transpose of a vector of ones. 
 
Next, we further decompose the contributions of memory vectors $\rmM^t_{p_0\to u}$ to the contributions of events $\rmC_{u}^t$ through a memory backtracking tree, revealing which past interactions most strongly influence these vectors (Section \ref{sec:temporal}). We also prove that the contribution values satisfy 
\begin{equation}
     \mathbf{1}^\top \rmC_{u}^t =\rvz^t_{u}, \quad \mathbf{1}^\top \rmC_{v}^t =\rvz^t_{v}
 \end{equation}
Finally, in the MLP function for link prediction, we compute the contribution of $\rvz^t_{u}$ and $\rvz^t_{v}$ to the final logits, update the $\rmC_{u}^t$ and $\rmC_{v}^t$, and obtain the total final contribution of events: 
\begin{equation}
     \rmC^t =\rmC_{u}^t+\rmC_{v}^t, \quad \mathbf{1}^\top \rmC^t=\text{logits}
 \end{equation}
Using this conservation property, we derive the KL divergence and design objective functions to select the important events that explain the TGNs prediction (Section \ref{sec:select}). 
\subsection{Spatial Decomposition via topology attribution tree}
\label{sec:spatial}
We construct the topology attribution tree and apply the Layer-wise Relevance Propagation (LRP) method~\citep{LRP} to Eq. (\ref{eq:embedding}), decomposing the target node embedding $\rvz^{t}_{u}$ into contributions from the target memory vector $\rvs^t_{u}$, the neighbor memory vector $\rvs^t_{v}$, and the feature vectors $\rvx_{u}$, $\rvx_{v}$, and $\rvx_{e_k}$. Next, we introduce the original LRP method.

\subsubsection{Original LRP method}
LRP attributes the prediction score to the input neurons. Let the activation of a neuron at layer $l+1$ be $h^{l+1}\in\mathbb{R}$, computed as $h^{l+1} = f([h^{l}_1,\dots,h^{l}_n])$, where $f$ may be a linear function or a composition of a linear function with a nonlinear activation such as ReLU.
Given the relevance $R^{l+1} \in \mathbb{R}$, LRP distributes it $h^{l+1}$ to the inputs according to the following equation:
\begin{equation}
R^{l}_i = \frac{h^{l}_i w^{l}_{i}}{\sum_{i^\prime} h^{l}_{i^\prime }w^{l}_{i^\prime}} R^{l+1}, 
\label{eq:lrp_local}
\end{equation}
where $w^{l}_{i}$ is weight from the neuron $h^{l}_i$ to the neuron $h^{l+1}$. 

For a multi-layer network, LRP redistributes the relevance score from the output layer back to the input layer by recursively applying the Eq.  (\ref{eq:lrp_local}). 
Let the final prediction logit be assigned as relevance score $R^L$, where $L$ denotes the output layer. The relevance of an input neuron $h^{0}_i$ is then obtained by successively propagating through all intermediate layers, while preserving the conservation property: $\sum_{i}R^{0}_i=R^{L}$
\begin{equation}
R^{0}_i = \sum_{j,\dots,k}
\frac{h^{0}_i w^{0}_{i}}{\sum_{i^\prime} h^{0}_{i^\prime} w^{0}_{i^\prime}}
\cdots
\frac{h^{L-1}_k w^{L-1}_{k}}{\sum_{k^\prime} h^{L-1}_{k^\prime} w^{L-1}_{k^\prime}} 
\cdot
R^{(L)} .
\label{eq:lrp_multi}
\end{equation}

\begin{lemma}
\label{thm:lrp_matrix_linear}
    Let $\mathbf{x} \in \sR^{1\times d_1}$, $\mathbf{W} \in \sR^{d_1\times d_2}$ and $\mathbf{y}=\mathbf{x} \cdot W \in \sR^{1\times d_2}$. Suppose that the relevance contribution assigned to $\mathbf{y}$ is 
$\mathbf{R}(\mathbf{y})\in \sR^{d_2 \times d_3}$. Let $\cdot$ denote the matrix multiplication, $\operatorname{diag(\cdot)}$ denote the diagonal matrix, and $\operatorname{diag}(\cdot)^{-1}$ be the inverse of the diagonal matrix. Using the LRP method, the relevance contribution assigned to  $\mathbf{x}$ is obtained from the following formula and satisfies the conservation property: 
\begin{equation}
\begin{gathered}
\mathbf{P}
= \operatorname{diag}(\mathbf{x}) \cdot \mathbf{W} \cdot 
\operatorname{diag}(\mathbf{y})^{-1}, 
\quad
\mathbf{R}(\mathbf{x})
= \mathbf{P} \cdot \mathbf{R}(\mathbf{y}), \notag \\
\mathbf{1}^{\top} \cdot \mathbf{R}(\mathbf{x})
= \mathbf{1}^{\top} \cdot \mathbf{R}(\mathbf{y}).
\end{gathered}
\end{equation}
\end{lemma}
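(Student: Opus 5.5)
The plan has two parts. First, show that the matrix formula is exactly the entrywise LRP rule of Eq.~(\ref{eq:lrp_local}), applied column by column of $\mathbf{R}(\mathbf{y})$. Then derive conservation from the identity $\mathbf{1}^\top \mathbf{P} = \mathbf{1}^\top$, which follows directly from $\mathbf{y}=\mathbf{x}\cdot\mathbf{W}$. Throughout we assume every $y_j \neq 0$, so that $\operatorname{diag}(\mathbf{y})^{-1}$ exists. This is the usual implicit LRP assumption; one could add an $\epsilon$-stabilizer, but conservation is then only approximate.

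\textbf{Step 1 (entrywise form).} Each output neuron is $y_j=\sum_{i} x_i W_{ij}$. This is a layer of the kind covered by Eq.~(\ref{eq:lrp_local}), with $h^{l}_i=x_i$, $w^l_i=W_{ij}$ and $h^{l+1}=y_j$. The relevance of $y_j$ is the $j$-th row of $\mathbf{R}(\mathbf{y})$, say $R(\mathbf{y})_{jc}$ for $c=1,\dots,d_3$. Each of the $d_3$ channels is propagated independently, so the portion of $y_j$'s relevance sent to $x_i$ in channel $c$ is $\frac{x_i W_{ij}}{y_j} R(\mathbf{y})_{jc}$. An input $x_i$ feeds every $y_j$, so we sum over $j$, exactly as in the multi-layer rule Eq.~(\ref{eq:lrp_multi}):
\[
R(\mathbf{x})_{ic}=\sum_{j=1}^{d_2}\frac{x_i W_{ij}}{y_j}R(\mathbf{y})_{jc}.
\]
Next we check that $P_{ij}=x_iW_{ij}/y_j$. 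This holds because left multiplication by $\operatorname{diag}(\mathbf{x})$ scales row $i$ by $x_i$, and right multiplication by $\operatorname{diag}(\mathbf{y})^{-1}$ scales column $j$ by $1/y_j$. Hence $\mathbf{R}(\mathbf{x})=\mathbf{P}\cdot\mathbf{R}(\mathbf{y})$.

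\textbf{Step 2 (conservation).} First compute the column sums of $\mathbf{P}$:
\[
\sum_i P_{ij}=\frac{1}{y_j}\sum_i x_iW_{ij}=\frac{y_j}{y_j}=1 .
\]
Equivalently, $\mathbf{1}^\top\operatorname{diag}(\mathbf{x})=\mathbf{x}$, which gives $\mathbf{1}^\top\mathbf{P}=\mathbf{x}\mathbf{W}\operatorname{diag}(\mathbf{y})^{-1}=\mathbf{y}\operatorname{diag}(\mathbf{y})^{-1}=\mathbf{1}^\top$. By associativity,
\[
\mathbf{1}^\top\mathbf{R}(\mathbf{x})=(\mathbf{1}^\top\mathbf{P})\mathbf{R}(\mathbf{y})=\mathbf{1}^\top\mathbf{R}(\mathbf{y}).
\]

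\textbf{Main obstacle.} There is no real analytic difficulty here. The only delicate point is the interpretation in Step 1: the relevance of $\mathbf{y}$ is a $d_2\times d_3$ matrix rather than a scalar. We therefore have to justify that applying Eq.~(\ref{eq:lrp_local}) independently to each of the $d_3$ columns, then summing over output neurons $j$, is the intended LRP propagation. The nonvanishing of the $y_j$ must also be stated explicitly as an assumption.
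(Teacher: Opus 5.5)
Your proposal is correct and matches the paper's proof: the paper also writes $\mathbf{P}$ entrywise as $x_iW_{ij}/y_j$, observes that each column of $\mathbf{P}$ sums to one because $y_j=\sum_i x_iW_{ij}$, and concludes $\mathbf{1}^\top\mathbf{R}(\mathbf{x})=\mathbf{1}^\top\mathbf{P}\,\mathbf{R}(\mathbf{y})=\mathbf{1}^\top\mathbf{R}(\mathbf{y})$. Two parts of your write-up go beyond what the paper states: the explicit assumption $y_j\neq 0$, and the column-by-column argument linking the matrix formula to the scalar LRP rule. The paper leaves both implicit, so they are sensible additions.
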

The proof of Lemma~\ref{thm:lrp_matrix_linear} is in Appendix~\ref{app:proof of lrp matrix}. Lemma~\ref{thm:lrp_matrix_linear} represents the matrix form of LRP. The matrix $\mathbf{P}$ can be interpreted as a proportional matrix. The core of the LRP algorithm is to  design rules to compute this proportional matrix $\mathbf{P}$, ensuring that it satisfies the conservation property $\mathbf{1}^\top \cdot \mathbf{P} = \mathbf{1}^\top$. Consequently, we have $\mathbf{1}^\top \cdot \mathbf{R}(\mathbf{x}) = \mathbf{1}^\top \cdot \mathbf{P} \cdot \mathbf{R}(\mathbf{y}) = \mathbf{1}^\top \cdot \mathbf{R}(\mathbf{y})$.
We will now proceed to derive the explicit expression for the proportional matrix $\mathbf{P}$ when the LRP method is applied to $f_{\textnormal{emb}}$.

\subsubsection{LRP on $f_{\textnormal{emb}}$} 
The $\mathbf{h}_{u}^{t,l}$ denotes the node embedding for $u$ in graph layer $l$ and timestamp $t$, with $\mathbf{h}_{u}^{t,0}=\mathbf{s}_u^t$. For simplicity, we drop the superscript $t$ and write $\mathbf{h}_{u}^{l}\overset{\triangle}{=}\mathbf{h}_{u}^{t,l}$. 
 We use the temporal graph sum function as an example to illustrate how LRP attributes the logits of TGNs to $\rvs^t_{u}$, $\rvs^t_{v}$, $\rvx_{u}$, $\rvx_{v}$, and $\rvx_{e_k}$. The derivation for the graph attention function is in Appendix \ref{app:LRP_attention}, with the resulting formulas in Eq. (\ref{eq:graph_attention_final1}). For the graph sum function, Eq. (\ref{eq:embedding}) becomes: 
\begin{align}
&\hat{\mathbf{h}}_{u}^{l}=\sum_{e_k=(v, u, t_k)\in \gN_{u}^n} 
\Big(\mathbf{h}_{v}^{l-1}\Vert\rvx_{e_k} \,\Vert\, \phi(t-t_k) \Big)\mathbf{W}_1^{l} \nonumber \\
&\tilde{\mathbf{h}}_{u}^{l} = 
\text{ReLU}(\hat{\mathbf{h}}_{u}^{l}),\quad 
\mathbf{h}_{u}^{l}= 
\Big(\mathbf{h}_{u}^{l-1}\Vert\, \tilde{\mathbf{h}}_{u}^{l} \Big)\mathbf{W}_2^{l}.
\label{eq:graph_sum_embed}
\end{align}
 The function $\phi(\cdot)$ is a time encoding. Let $d_t$ denote the output dimension of  $\phi(\cdot)$. $\mathbf{W}_2 = [\mathbf{A}^l \quad \mathbf{B}^l ] \in \mathbb{R}^{d_m \times 2d_m}$ and $\mathbf{W}_1^{l} = [\mathbf{C}^l \quad \mathbf{D}^l \quad \mathbf{E}^l] \in \mathbb{R}^{ d_m \times (d_m + d_e + d_t)}$ are parameters, where $\mathbf{A}^l, \mathbf{B}^l,  \mathbf{C}^l\in \mathbb{R}^{d_m \times d_m}$, $\mathbf{D}^l \in \mathbb{R}^{d_e \times d_m}$ and $\mathbf{E}^l \in \mathbb{R}^{d_t \times d_m}$. $(\cdot \,\Vert\, \cdot \Vert\, \cdot)$ denotes vector concatenation. The final embedding is $\rvz_{u}^t=\rvh_{u}^{L} \in \sR^{1 \times d_m}$, where $L$ is the number of layers. 
\begin{proposition}
\label{thm: proposition_graph_sum}
    When the LRP method is applied to $f_{\textnormal{emb}}$, 
    the proportional matrices are 
\begin{align}
\mathbf{P}(\tilde u)&= \operatorname{diag}\!\left(\tilde{\mathbf{h}}_{u}^{l}\right)
\cdot \mathbf{B}^{l}
\cdot
\operatorname{diag}\!\left(\mathbf{h}_{u}^{l}\right)^{-1}  \nonumber \\
\mathbf{P}(v)&=\operatorname{diag}\!\left(\mathbf{h}_{v}^{l-1}\right)
\cdot \mathbf{C}^{l}
\cdot 
\operatorname{diag}\!\left(\hat{\mathbf{h}}_{u}^{l}\right)^{-1}
\cdot \mathbf{P}(\tilde u) \nonumber \\
    \mathbf{P}(e_k)&= \operatorname{diag}\!\left(\mathbf{x}_{e_k}\right)
\cdot \mathbf{D}^{l}
\cdot
\operatorname{diag}\!\left(\hat{\mathbf{h}}_{u}^{l}\right)^{-1} \cdot \mathbf{P}(\tilde u) \nonumber  \\
    \mathbf{P}(\phi)&=\operatorname{diag}\!\left(\phi(t-t_k)\right)
\cdot \mathbf{E}^{l}
\cdot
\operatorname{diag}\!\left(\hat{\mathbf{h}}_{u}^{l}\right)^{-1} \cdot \mathbf{P}(\tilde u) \nonumber \\
\mathbf{P}(u)&= \operatorname{diag}\!\left(\mathbf{h}_{u}^{l-1}\right)
\cdot \mathbf{A}^{l}
\cdot
\operatorname{diag}\!\left(\mathbf{h}_{u}^{l}\right)^{-1}
\end{align}
Let $\mathbf{R}=\mathbf{R}\left(\mathbf{h}_{u}^{l} \right)$, the contribution matrices are given by
\begin{align}
    &\mathbf{R}\left(\mathbf{h}_{v}^{l-1} \right) = \mathbf{P}(v) \cdot \mathbf{R},\quad\mathbf{R}\left(\mathbf{h}_{u}^{l-1} \right)= \mathbf{P}(u) \cdot \mathbf{R} 
     \nonumber\\
     & \mathbf{R} = \mathbf{P}(e_k) \cdot \mathbf{R}
    ,\quad 
    \mathbf{R}\left(t-t_k \right) = \mathbf{P}(\phi) \cdot \mathbf{R} 
    \label{eq:R_memory_target}
\end{align}
For $e_k = (v, u, t_k) \in 
    \gN_{u}^n$, these contribution matrices satisfies 
\begin{align}
    \mathbf{1}^\top \mathbf{R}\left(\mathbf{h}_{u}^{l} \right)&=\mathbf{1}^\top \mathbf{R}\left(\mathbf{h}_{u}^{l-1} \right) +\mathbf{1}^\top\sum_v \mathbf{R}\left(\mathbf{h}_{v}^{l-1} \right) \\
    &+\mathbf{1}^\top \sum_{e_k} \mathbf{R}\left(e_k \right) +\mathbf{1}^\top \sum_{t_k}
    \mathbf{R}\left(t-t_k \right)  \nonumber
\end{align}
\end{proposition}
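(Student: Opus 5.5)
The plan is to apply Lemma~\ref{thm:lrp_matrix_linear} twice, once to each linear map in Eq.~(\ref{eq:graph_sum_embed}), and to treat the ReLU as a pass-through for relevance. Concretely, I will use the block structure of $\mathbf{W}_2^l$ and $\mathbf{W}_1^l$ to split each proportional matrix into blocks, one per concatenated input, and then compose the blocks along the computation path.

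\textbf{Outer layer.} Write $\mathbf{h}_u^l = \mathbf{h}_u^{l-1}\mathbf{A}^l + \tilde{\mathbf{h}}_u^l \mathbf{B}^l$, reading $\mathbf{W}_2^l$ as the vertical stack of $\mathbf{A}^l$ over $\mathbf{B}^l$ so the dimensions match. Lemma~\ref{thm:lrp_matrix_linear} applied to $\mathbf{x}=(\mathbf{h}_u^{l-1}\Vert\tilde{\mathbf{h}}_u^l)$ gives
\[
\mathbf{P}=\operatorname{diag}(\mathbf{x})\,\mathbf{W}_2^l\,\operatorname{diag}(\mathbf{h}_u^l)^{-1}.
\]
Since $\operatorname{diag}(\mathbf{x})$ is block diagonal, the row blocks of $\mathbf{P}$ are exactly $\mathbf{P}(u)$ and $\mathbf{P}(\tilde u)$. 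Conservation $\mathbf{1}^\top\mathbf{P}=\mathbf{1}^\top$ is the identity
\[
\mathbf{1}^\top\operatorname{diag}(\mathbf{x})\mathbf{W}_2^l=\mathbf{x}\mathbf{W}_2^l=\mathbf{h}_u^l,
\]
followed by right multiplication by $\operatorname{diag}(\mathbf{h}_u^l)^{-1}$. This gives $\mathbf{1}^\top\mathbf{R}=\mathbf{1}^\top\mathbf{P}(u)\mathbf{R}+\mathbf{1}^\top\mathbf{P}(\tilde u)\mathbf{R}$.

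\textbf{ReLU step.} I pass relevance through the ReLU unchanged, using $\operatorname{diag}(\tilde{\mathbf{h}})\operatorname{diag}(\hat{\mathbf{h}})^{-1}$. On active coordinates this is the identity. On inactive coordinates $\tilde h_i=0$, so the row of $\mathbf{P}(\tilde u)$ is already zero and no relevance is lost. This step needs care, because on those coordinates the relevance arriving at $\tilde{\mathbf{h}}$ is zero but $\hat{\mathbf{h}}$ may be negative. As a result, $\mathbf{1}^\top$ of the redistributed relevance equals $\mathbf{1}^\top$ only up to the zeroed rows. The same issue arises anywhere I use $\operatorname{diag}(\hat{\mathbf{h}}_u^l)^{-1}$ directly. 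I expect this to be the main obstacle.

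I will handle it in one of two ways. The first is to show that the rows of $\mathbf{P}(\tilde u)$ indexed by inactive units vanish. In that case, inserting $\operatorname{diag}(\hat{\mathbf{h}})^{-1}$ together with the inner decomposition still conserves the total, because only the active coordinates carry mass and on them $\hat h=\tilde h$. The second, if needed, is to state the standard convention that LRP treats ReLU as the identity on relevance, with all denominators assumed nonzero (an $\epsilon$-stabilizer if required).

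\textbf{Inner layer.} $\hat{\mathbf{h}}_u^l$ is a sum over $e_k\in\gN_u^n$ of linear maps of $(\mathbf{h}_v^{l-1}\Vert\mathbf{x}_{e_k}\Vert\phi(t-t_k))$. Treat the whole set of inputs as one long concatenated vector multiplied by a stacked weight matrix consisting of copies of $\mathbf{C}^l,\mathbf{D}^l,\mathbf{E}^l$. Lemma~\ref{thm:lrp_matrix_linear} then gives one block per input, of the form $\operatorname{diag}(\cdot)\,\mathbf{C}^l\,\operatorname{diag}(\hat{\mathbf{h}}_u^l)^{-1}$, and similarly for $\mathbf{D}^l$ and $\mathbf{E}^l$. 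Their column sums add to $\mathbf{1}^\top$ because
\[
\sum_{e_k}\big(\mathbf{h}_v^{l-1}\mathbf{C}^l+\mathbf{x}_{e_k}\mathbf{D}^l+\phi\,\mathbf{E}^l\big)=\hat{\mathbf{h}}_u^l .
\]
Composing each block with $\mathbf{P}(\tilde u)$ yields $\mathbf{P}(v)$, $\mathbf{P}(e_k)$ and $\mathbf{P}(\phi)$. Summing over all events gives
\[
\sum_v\mathbf{1}^\top\mathbf{R}(\mathbf{h}_v^{l-1})+\sum_{e_k}\mathbf{1}^\top\mathbf{R}(e_k)+\sum_{t_k}\mathbf{1}^\top\mathbf{R}(t-t_k)=\mathbf{1}^\top\mathbf{P}(\tilde u)\mathbf{R}.
\]
Adding the outer-layer identity gives the claimed conservation equation.
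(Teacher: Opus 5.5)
Your proposal is correct and takes essentially the same route as the paper's proof. You apply Lemma~\ref{thm:lrp_matrix_linear} to the outer map with $\mathbf{W}_2^l$ split into $\mathbf{A}^l,\mathbf{B}^l$, pass relevance through the ReLU by the identity rule $\mathbf{R}(\hat{\mathbf{h}}_u^l)=\mathbf{R}(\tilde{\mathbf{h}}_u^l)$, apply Lemma~\ref{thm:lrp_matrix_linear} again to the event sum with $\mathbf{W}_1^l$ split into $\mathbf{C}^l,\mathbf{D}^l,\mathbf{E}^l$, compose with $\mathbf{P}(\tilde u)$, and sum over events. The ReLU ``obstacle'' is lighter than you fear: the inner proportional matrix has unit column sums at every coordinate $j$ with $\hat{h}^l_{u,j}\neq 0$, whatever its sign, so only nonvanishing denominators are needed, which the paper also assumes tacitly.
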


The contribution matrices $\mathbf{R}\!\left(\mathbf{h}_{v}^{l-1} \right)$ and $\mathbf{R}\!\left(\mathbf{h}_{u}^{l-1} \right) \in \mathbb{R}^{d_m \times d_m}$, $\mathbf{R}\!\left(\mathbf{x}_{e_k}\right) \in \mathbb{R}^{d_e \times d_m}$ and $\mathbf{R}\!\left(t-t_k \right) \in \mathbb{R}^{d_t \times d_m}$. Each matrix element represents the contribution of the
 $i$-th neuron in $\mathbf{h}_{v}^{l-1}, \mathbf{h}_{u}^{l-1}, \mathbf{x}_{e_k}, \phi(t-t_k)$ to the $j$-th neuron in $\mathbf{z}_{u}^{t}$, respectively. The proof is in Appendix~\ref{app:LRP}. The example of the Proposition~\ref{thm: proposition_graph_sum} is in Figure~\ref{fig:case_1}.  

\begin{figure}[htbp]
    \centering
    \includegraphics[width=1\linewidth]{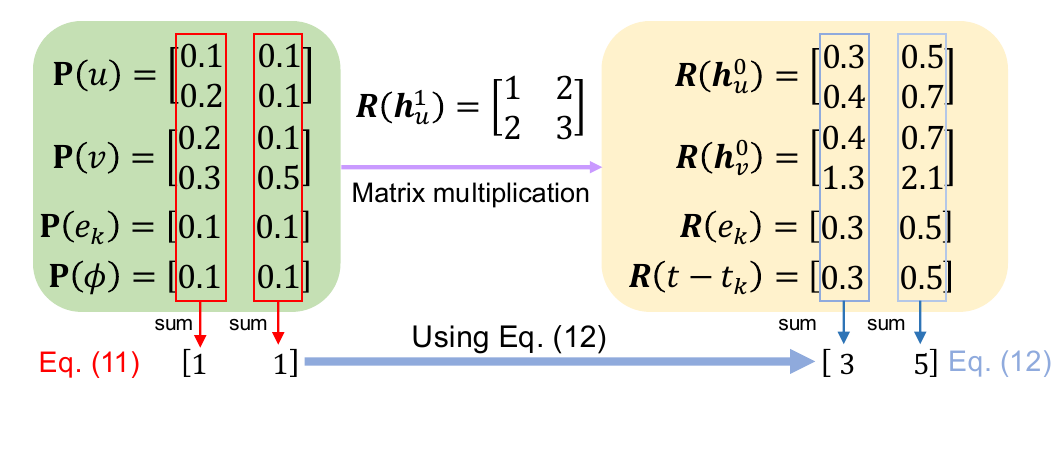}
    \caption{\small We obtain these proportional matrices according to the Proposition~\ref{thm: proposition_graph_sum}. Summing their elements column-wise results in a vector of ones. 
    Based on this property, the total contribution of $\mathbf{R}\!\left(\mathbf{h}_{v}^{l-1} \right)$, $\mathbf{R}\!\left(\mathbf{h}_{u}^{l-1} \right)$, $\mathbf{R}\!\left(\mathbf{x}_{e_k}\right)$ and $\mathbf{R}\!\left(t-t_k \right)$ equals to $\mathbf{R}\!\left(\mathbf{h}_{u}^{l} \right)$.}
    \label{fig:case_1}
\end{figure}
\subsubsection{Topology attribution tree}
We construct the topology attribution tree through hierarchical traversal, updating the contribution matrix at each layer of the tree based on Eq. (\ref{eq:R_memory_target}) and Eq. (\ref{eq:graph_attention_final1}). In this process we attribute contributions to both the events and the node memories, as described in Algorithm~\ref{alm:topology attribution tree}. As outlined in Algorithm~\ref{alm:topology attribution tree}, we can compute the contribution matrix $\rmC_{u}^t$, where each row corresponds to an event, and the row vector represents the contribution to $\rvz^t_{u}$. Additionally, we obtain a set of matrices $\{\rmM^t_{p_0\to u}| p_0 \text{ is a leaf node of } \mathcal{T}_{\text{topo}}(u \}$. Here, $\rmM^t_{p_0\to u} \in \sR^{d_m \times d_m}$ denotes the contribution of node $p_0$'s memory vector to $\rvz^t_{u}$. By the conservation property (Proposition~\ref{thm: proposition_graph_sum}), the contribution values satisfy: $\sum_ {p_0}\mathbf{1}^\top \rmM^t_{p_0 \to u}+\mathbf{1}^\top \rmC_{u}^t =\rvz^t_{u}$. 

\subsection{Temporal Decomposition via Memory Backtracking}
\label{sec:temporal}
In Algorithm~\ref{alm:topology attribution tree}, we obtain a set of the memory matrices $\{\rmM^t_{p_0\to u}| p_0 \text{ is a leaf node of }\mathcal T_{\text{top}}(u) \}$ and the event contribution matrix $\rmC^t_{u}$. For each $\rmM^t_{p_0\to u}$, we construct a memory backtracking tree to attribute the contributions of node memory to the historical events responsible for updating the target node. In this process, we will update the $\rmC^t_{u}$. After the memory backtracking process, the contribution matrix satisfies $\mathbf{1}^\top \rmC_{u}^t =\rvz^t_{u}$. We use $\rmM^t_{u\to u}$ as an example to demonstrate how to construct the memory backtracking tree and attribute the contributions to the historical events.
\subsubsection{LRP on $f_{\textnormal{update}}$}
To illustrate the backtracking construction, we set $f_\textnormal{update}$ as a GRU. If $f_\textnormal{update}$ is RNN function, the derivation is in Appendix~\ref{app:RNN}, with the resulting formulas in Eq.~(\ref{eq:lrp_rnn}). For $f_\textnormal{update}$ as a GRU, Eq.~(\ref{eq:memory update}) becomes:
\begin{align*}
\rvr^t_{u} &= \sigma\!\left(\bar \rvm^t_{u}\mathbf{W}_r + \rvs^{t-}_{u}\mathbf{U}_r + \mathbf{b}_r\right),  \\
\rvg^t_{u} &= \sigma\!\left(\bar \rvm^t_{u}\mathbf{W}_g + \rvs^{t-}_{u}\mathbf{U}_g + \mathbf{b}_g\right), \\
\tilde{\rvs}^t_{u} &= \tanh\!\left(\bar \rvm^t_{u}\mathbf{W}_h + \rvr^t_{u}\odot(\rvs^{t-}_{u}\mathbf{U}_h) + \mathbf{b}_h\right), \\
\rvs^t_{u} & = (1-\rvg^t_{u})\odot \rvs^{t-}_{u} + \rvg^t_{u}\odot \tilde{\rvs}^t_{u}.
\end{align*}
Where $\mathbf{W}_r, \mathbf{W}_g, \mathbf{W}_h, \mathbf{U}_r, \mathbf{U}_g, \mathbf{U}_h, \mathbf{b}_r, \mathbf{b}_g, \mathbf{b}_h$ are the parameters in the GRU. 
\begin{proposition}
\label{thm:proposition_memory}
Let $\operatorname{diag}$ denote the diagonalization operator that maps a vector to a diagonal matrix. Let $\mathbf{L}^t=\operatorname{diag}\!\left(\bar{\rvm}^{t}_{u}\right)
\cdot
\mathbf{W}_{h}
\cdot
\operatorname{diag}\!\left(\tilde{\rvs}^{t}_{u}-\mathbf{b}_h\right)^{-1}$, $\mathbf{F}^t=\operatorname{diag}\!\left(\frac{(1-\rvg^t_{u})\odot \rvs^{t-}_{u}}{\rvs^{t}_{u}}\right)$, and $\mathbf{H}^t=\operatorname{diag}\!\left(\rvr^{t}_{u}\odot \rvs^{t-}_{u}\right)
\cdot
\mathbf{U}_{h}
\cdot
\operatorname{diag}\!\left( \bar \rvm^t_{u}\mathbf{W}_h + \rvr^t_{u}\odot(\rvs^{t-}_{u}\mathbf{U}_h)\right)^{-1}$. When the LRP method is applied to $f_{\textnormal{update}}$, 
    the proportional matrices are  
\begin{align}
\mathbf{P} \!\left( \rvs^{t-}_{u}  \right) &= \mathbf{F}^t+ \mathbf{H}^t \cdot \operatorname{diag}\!\left(\frac{\rvg^t_{u}\odot \tilde{\rvs}^{t}_{u}}{\rvs^{t}_{u}}\right) \nonumber ,\\
\mathbf{P} \!\left(\bar{\rvm}^t_{u} \right)
&= \mathbf{L}^t \cdot \operatorname{diag}\!\left(\frac{\rvg^t_{u}\odot \tilde{\rvs}^{t}_{u}}{\rvs^{t}_{u}}\right) \nonumber,\\
\mathbf{1}^\top &=\mathbf{1}^\top \mathbf{P} \!\left( \rvs^{t-}_{u}  \right)+ \mathbf{1}^\top \mathbf{P} \!\left(\bar{\rvm}^t_{u} \right) \nonumber.
\end{align}
Then, the contribution matrices are given by 
\begin{align}
    \mathbf{R} \!\left( \rvs^{t-}_{u}  \right)&=  \mathbf{P}(\rvs^{t-}_{u}) \cdot \mathbf{R} \!\left( \rvs^{t}_{u}  \right) \nonumber , \\
\mathbf{R} \!\left(\bar{\rvm}^t_{u} \right)
&= \mathbf{P}(\bar{\rvm}^t_{u}) \cdot \mathbf{R} \!\left( \rvs^{t}_{u}  \right) \nonumber, \\
\mathbf{1}^\top \mathbf{R} \!\left( \rvs^{t}_{u}  \right)
& =\mathbf{1}^\top \mathbf{R} \!\left( \rvs^{t-}_{u}  \right)+\mathbf{1}^\top\mathbf{R} \!\left(\bar{\rvm}^t_{u} \right). \nonumber 
\end{align}
Then we decompose $\mathbf{R} \!\left(\bar{\rvm}^t_{u} \right)$ as 
\begin{align*}
    \mathbf{R} \!\left(\bar{\rvm}^t_{u} \right)=[\mathbf{R} \!\left(\rvs^{t-}_{v} \right), \mathbf{R} \!\left(\rvs^{t-}_{u} \right), \mathbf{R} \!\left(\rvx_{e_k} \right),\mathbf{R} \!\left(t-t_k \right)].
\end{align*}
\end{proposition}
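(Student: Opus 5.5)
The plan is to run LRP backward through the GRU one operation at a time, treating $\mathbf{R}(\rvs^t_u)$ as given. Each step will be an instance of Lemma~\ref{thm:lrp_matrix_linear}, or of its obvious variant for a sum of terms. For a sum $\mathbf{y}=\sum_i \mathbf{a}_i$, the proportional matrix for the term $\mathbf{a}_i$ is $\operatorname{diag}(\mathbf{a}_i/\mathbf{y})$, and these matrices add up to the identity. Multiplying by $\mathbf{1}^\top$ then gives conservation at every stage. Following the usual LRP convention, the gates $\rvr^t_u$ and $\rvg^t_u$ are treated as fixed multiplicative weights, so no relevance flows into them, and the bias $\mathbf{b}_h$ absorbs no relevance.

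\textbf{Step 1: the final interpolation.} Write
\[
\rvs^t_u=(1-\rvg^t_u)\odot\rvs^{t-}_u+\rvg^t_u\odot\tilde{\rvs}^t_u .
\]
The first summand receives $\mathbf{F}^t\mathbf{R}(\rvs^t_u)$. The second receives $\operatorname{diag}(\rvg^t_u\odot\tilde{\rvs}^t_u/\rvs^t_u)\,\mathbf{R}(\rvs^t_u)$. These two diagonal matrices sum to the identity.

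\textbf{Step 2: through the $\tanh$.} We pass relevance through the nonlinearity and assign it to the pre-activation minus the bias,
\[
\bar\rvm^t_u\mathbf{W}_h+\rvr^t_u\odot(\rvs^{t-}_u\mathbf{U}_h).
\]
The statement writes the denominator of $\mathbf{L}^t$ as $\tilde{\rvs}^t_u-\mathbf{b}_h$ and the denominator of $\mathbf{H}^t$ as this pre-activation. I will read both as the same vector, i.e.\ interpret $\tilde{\rvs}^t_u-\mathbf{b}_h$ as the bias-free pre-activation. I would flag this notational point explicitly in the proof, since with $\tilde{\rvs}^t_u=\tanh(\cdot)$ the two are not literally equal.

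\textbf{Step 3: splitting the pre-activation.} The bias-free pre-activation is a sum of two linear maps, $\bar\rvm^t_u\mathbf{W}_h$ and $(\rvr^t_u\odot\rvs^{t-}_u)\mathbf{U}_h$. The identity $\rvr\odot(\rvs\mathbf{U})=$ this second form is not exact in general. I would instead use the reading forced by the formula for $\mathbf{H}^t$: the input to $\mathbf{U}_h$ is $\rvr^t_u\odot\rvs^{t-}_u$, with the elementwise gate factored onto the input side. With that reading, Lemma~\ref{thm:lrp_matrix_linear} applied to each linear term, using the common denominator, gives the factors $\mathbf{L}^t$ and $\mathbf{H}^t$. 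Their column sums add to $\mathbf{1}^\top$, because
\[
\mathbf{1}^\top\operatorname{diag}(\mathbf{x})\mathbf{W}=\mathbf{x}\mathbf{W}.
\]

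\textbf{Step 4: composing the matrices.} Composing Steps 1–3 gives $\mathbf{P}(\bar\rvm^t_u)=\mathbf{L}^t\operatorname{diag}(\cdot)$. The input $\rvs^{t-}_u$ is reached by two paths: directly through $\mathbf{F}^t$, and through $\mathbf{H}^t$ followed by the gate diagonal. Summing the two paths gives the stated $\mathbf{P}(\rvs^{t-}_u)$. Left-multiplying by $\mathbf{1}^\top$ and using Step 3, the $\mathbf{L}^t$ and $\mathbf{H}^t$ terms together contribute $\mathbf{1}^\top\operatorname{diag}(\rvg\odot\tilde{\rvs}/\rvs)$. The $\mathbf{F}^t$ term contributes $\mathbf{1}^\top\operatorname{diag}((1-\rvg)\odot\rvs^{t-}/\rvs)$. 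These two add to $\mathbf{1}^\top$, which is the claimed identity. The relevance identities then follow by right-multiplying by $\mathbf{R}(\rvs^t_u)$.

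\textbf{Step 5: splitting $\mathbf{R}(\bar\rvm^t_u)$.} If $f_{\textnormal{message}}$ is the identity, the message is the concatenation of $\rvs^{t-}_v$, $\rvs^{t-}_u$, $\rvx_{e_k}$ and $\phi(t-t_k)$. Its rows then split into the four stated blocks, and conservation holds trivially. If $f_{\textnormal{message}}$ is an MLP, one more application of Lemma~\ref{thm:lrp_matrix_linear} handles it.

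\textbf{Main obstacle.} The hard part is Steps 2–3, i.e.\ justifying where the $\tanh$, the bias and the reset gate sit. The conservation identity only holds under the convention that relevance through $\tanh$ is assigned to the bias-free pre-activation, and that the gate is absorbed as a fixed weight. I would state these conventions up front and check the column-sum identity under them.
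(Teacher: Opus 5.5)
Your route is the paper's own: signal-take-all on the gates (your ``fixed multiplicative weights''), then the split of the final interpolation into $\mathbf{F}^t$ and $\operatorname{diag}(\rvg^t_u\odot\tilde{\rvs}^t_u/\rvs^t_u)$. After that comes the identity rule through $\tanh$ with the bias excluded, and Lemma~\ref{thm:lrp_matrix_linear} on the two linear terms over the common denominator $\mathbf{A}=\bar\rvm^t_u\mathbf{W}_h+\rvr^t_u\odot(\rvs^{t-}_u\mathbf{U}_h)$. The proof ends by composing the two paths into $\rvs^{t-}_u$ and splitting $\mathbf{R}(\bar\rvm^t_u)$ into row blocks. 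Reading $\tilde{\rvs}^t_u-\mathbf{b}_h$ as $\mathbf{A}$ is exactly what the paper's proof does.

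Your reset-gate flag is also correct, and the paper's proof does not notice it. The paper simply asserts $\mathbf{1}^\top\mathbf{R}(\tilde{\rvs}^t_u)=\mathbf{1}^\top\mathbf{R}(\bar\rvm^t_u)+\mathbf{1}^\top\mathbf{R}(\rvs^{t-}_u)$. With $\mathbf{H}^t$ as stated, however, the column sums of $\mathbf{L}^t+\mathbf{H}^t$ are $\bigl(\bar\rvm^t_u\mathbf{W}_h+(\rvr^t_u\odot\rvs^{t-}_u)\mathbf{U}_h\bigr)\oslash\mathbf{A}$. This equals $\mathbf{1}^\top$ only if $(\rvr^t_u\odot\rvs^{t-}_u)\mathbf{U}_h=\rvr^t_u\odot(\rvs^{t-}_u\mathbf{U}_h)$, for example when $\mathbf{U}_h$ is diagonal. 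That is why the appendix toy example, with $\mathbf{U}_h=0.1\mathbf{I}$, checks out.

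What you need to tighten is that your repair is not yet applied consistently. In Step 2 you fix the common denominator to be the displayed pre-activation $\mathbf{A}$. In Step 3 you switch to a model whose bias-free pre-activation is $\bar\rvm^t_u\mathbf{W}_h+(\rvr^t_u\odot\rvs^{t-}_u)\mathbf{U}_h$. The identity $\mathbf{1}^\top\operatorname{diag}(\mathbf{x})\mathbf{W}=\mathbf{x}\mathbf{W}$ yields $\mathbf{1}^\top$ only if the denominator is this second vector.

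So your reading amounts to replacing the displayed GRU by the variant $\tilde{\rvs}^t_u=\tanh(\bar\rvm^t_u\mathbf{W}_h+(\rvr^t_u\odot\rvs^{t-}_u)\mathbf{U}_h+\mathbf{b}_h)$. The common denominator of $\mathbf{L}^t$ and $\mathbf{H}^t$, and the forward values of $\tilde{\rvs}^t_u$ and $\rvs^t_u$, then change with it.

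If you want to keep the GRU exactly as displayed, your own fixed-weight convention gives a different matrix: $\mathbf{H}^t=\operatorname{diag}(\rvs^{t-}_u)\,\mathbf{U}_h\,\operatorname{diag}(\rvr^t_u)\,\operatorname{diag}(\mathbf{A})^{-1}$. Its column sums are $\bigl(\rvr^t_u\odot(\rvs^{t-}_u\mathbf{U}_h)\bigr)\oslash\mathbf{A}$, so conservation holds exactly with the stated denominator.

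Either way, state explicitly that the proposition's third identity holds only after one of these modifications, or for diagonal $\mathbf{U}_h$. Your Steps 1, 4 and 5 then go through unchanged.
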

 Where $\mathbf{R} \!\left(\rvs^{t-}_{u} \right)$ and $\mathbf{R} \!\left(\rvs^{t-}_{v} \right) \in \mathbb{R}^{d_m \times d_m}$, $\mathbf{R} \!\left(\rvx_{e_k} \right) \in \mathbb{R}^{d_e\times d_m}$, and $\mathbf{R} \!\left(t-t_k \right) \in \mathbb{R}^{d_t \times d_m}$ denote the contribution of $\rvs^{t-}_{u}$, $\rvs^{t-}_{v}$, $\rvx_{e_k}$ and $\phi(t-t_k)$ to $\mathbf{z}_{u}^{t}$, respectively. The proof is in Appendix \ref{app:GRU}. 
 
The overview of proposition~\ref{thm:proposition_memory} is provided in Figure~\ref{fig:gru_decomposition}.
we first decompose
$\mathbf{R}(\mathbf{s}_{u}^{t})$ into $\mathbf{R}(\mathbf{s}_{u}^{t-})$ 
and $\mathbf{R}(\tilde{\mathbf{s}}_{u}^{t})$.
Then, $\mathbf{R}(\tilde{\mathbf{s}}_{u}^{t})$ is further decomposed into $\mathbf{R}(\mathbf{s}_{u}^{t-})$ and $\mathbf{R}(\bar{\mathbf{m}}_{u}^{t})$. Finally, $\mathbf{R}(\bar{\mathbf{m}}_{u}^{t})$ is attributed to $\mathbf{R}(\mathbf{s}_{v}^{t-})$, $\mathbf{R}(\mathbf{s}_{u}^{t-})$, $\mathbf{R}(t-t_k)$, and $\mathbf{R}(\mathbf{x}_{e_k})$. The toy running example is in Appendix~\ref{app:example_gru}.

\subsubsection{Memory backtracking tree}
The memory backtracking tree is organized chronologically: the root corresponds to the target node’s memory at the prediction time, the first layer contains the most recent events that directly updated this memory, and deeper layers trace earlier historical events. According to the proposition~\ref{thm:proposition_memory}, the contribution of each parent node in the tree can be recursively attributed to its child nodes.

Figure~\ref{fig:case_memory_tree} presents an example of constructing memory backtracking trees. In Figure~\ref{fig:case_memory_tree} (b), events are processed in batches, and memory updates are carried forward to subsequent batches. Figure~\ref{fig:case_memory_tree} (c) shows the memory update process of nodes 2 and 3. For node 3, its memory is updated from $\mathbf{s}_{3}^{t_0}$ after receiving the message from event $e_1$, remains unchanged at $t_2$,  is updated again by event $e_3$ to $\mathbf{s}_{3}^{t_3}$, and stays unchanged at $t_4$. Node 2 follows a similar process.

Figure~\ref{fig:case_memory_tree} (d) shows the memory backtracking tree for node 3. Starting from the  $\mathbf{R}(\mathbf{s}_{3}^{t_4})$, we first trace it back to the most recent event $e_3$, and obtain the tree in Figure~\ref{fig:case_memory_tree} (d) (1). The $\mathbf{R}(\mathbf{s}_{3}^{t_2})$, $\mathbf{R}(\mathbf{s}_{2}^{t_2})$ and $\mathbf{R}(e_3)$ can be obtained according to the proposition~\ref{thm:proposition_memory}. Then $\mathbf{R}(\mathbf{s}_{3}^{t_2})$ is further traced back to event $e_1$ as shown in Figure~\ref{fig:case_memory_tree} (d) (2), while $\mathbf{R}(\mathbf{s}_{2}^{t_2})$ is traced back to event $e_2$, as shown in Figure~\ref{fig:case_memory_tree} (d) (3). By recursively expanding the tree in this way, we obtain the contributions of all relevant historical events. 

To trace contribution of node memory vectors, we record which events triggered updates. 
Let $\mathcal{H}(u,t)
  = \{\, e_k = (v,u, t_k) \in \mathcal{E} (t)\;:\; t_k < t \;\wedge\; \text{Update}(u,t_k) \,\}$, where $\text{Update}(u,t_k)$ is true if and only if $u$'s memory is updated at time $t_k$. For example, in Figure~\ref{fig:case_memory_tree} (b), the memory of node 3 is updated by events $e_1$ and $e_3$, thus, 
  the $\mathcal{H}(3,t_4)=\{e_1=(1,3,t_1), e_3=(2,3,t_3)\}$. Based on the $\mathcal{H}(u,t)$, we construct memory backtracking trees and assign event contributions. The recursive procedure is given in Algorithm~\ref{alm: memory backtracking tree}, where the maximum depth $T_L$ controls how far the backtracking proceeds: larger $T_L$ allows deeper tracing into historical events.   
  \begin{figure*}[htbp]
      \centering
      \includegraphics[width=1\linewidth]{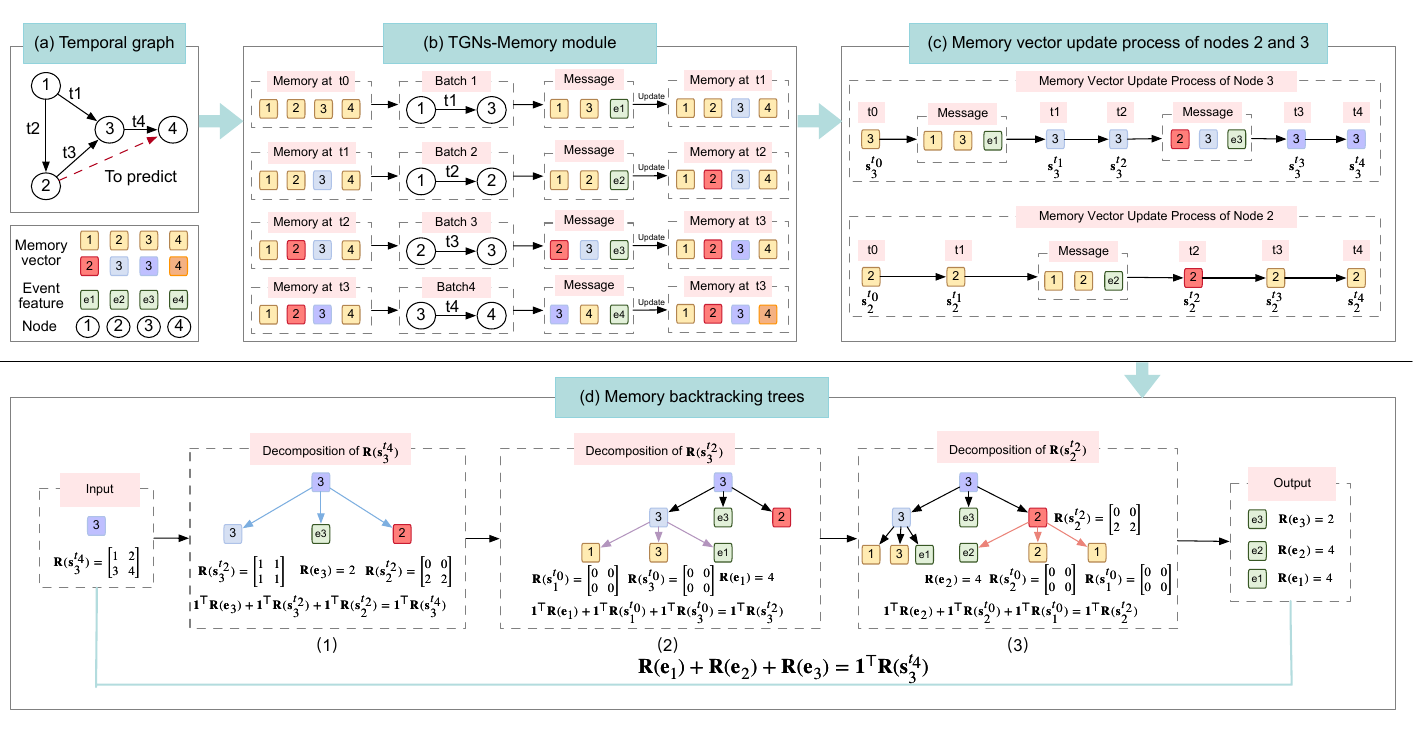}
      \caption{\small The example of building a memory backtracking tree.}
      \label{fig:case_memory_tree}
  \end{figure*}

\subsection{Selecting Important Events}
\label{sec:select}
We derive the KL divergence and formulate an optimization problem to select important events that faithfully explain the model's behavior. For the link prediction task, the predicted probability is $\hat{y}_{v,u}=\sigma(f_{\textnormal{mlp}}([\rvz^t_{v}|| \rvz^t_{u}]))$. Using the Algorithm~\ref{alm:topology attribution tree} and Algorithm~\ref{alm: memory backtracking tree}, we can compute the event contributions $\rmC^t_{u}$ and $\rmC^t_{v}$. Then, for the MLP function used in link prediction, we compute the contributions according to Lemma~\ref{thm:lrp_matrix_linear} and update $\rmC^t_{u}$ and $\rmC^t_{v}$. Due to the conservation property of Proposition~\ref{thm: proposition_graph_sum} and Proposition~\ref{thm:proposition_memory}, we have $\mathbf 1^\top \rmC^t_{u}+\mathbf 1^\top \rmC^t_{v}=f_{\textnormal{mlp}}([\rvz^t_{v}|| \rvz^t_{u}])=y_{v,u}$. The final contribution matrix is $\rmC^t=\rmC^t_{u}+\rmC^t_{v}$. 

For the link prediction, let  $p_1$ and $p_2$ denote arbitrary prediction probabilities. The KL divergence between $p_1$ and $p_2$ is defined as:
\begin{gather}
    \text{KL}(p_2 \parallel p_1) 
    = \sigma(z_2) (z_2 - z_1) - \text{sp}(z_2) + \text{sp}(z_1), 
\label{eq:KL divergence}
\end{gather}
where $\text{sp}(z)=\frac{1}{1+e^{-z}}$, $\sigma$ is the sigmoid function. The derivation process is in Appendix \ref{app:derivation_select}. 

Let $\gE^* \subset \gE(t)$ be the selected important events, and we aim to minimize the KL divergence between $\hat{y}_{v,u}\big(\gE(t)\big)$ and $\hat{y}_{v,u}\big(\gE^*\big)$. Let $d_c$ denote the number of events in $\rmC^t$ with non-zero contributions, and $\mathbf{d} \in \{0,1\}^{d_c}$ be the selection vector, where the element $d_l$ in $\mathbf{d}$ indicates whether the $l$-th event $e_l$ is selected. We define the following problem: 
\begin{equation}
\mathbf{d}^\ast=\argmin_{\substack{\rvd \in \{0, 1\}^{d_c} , \\ \| \rvd \|_1=n}}
    \hspace{.1in} - \hat{y}_{v,u}\big(\gE(t)\big) \sum_{l=1}^{d_c} d_l \rmC^t_{e_l}  + \textbf{sp}(\sum_{l=1}^{d_c} d_l \rmC^t_{e_l})
    \label{eq:select edges for link}
\end{equation}
The derivation process is in Appendix \ref{app:derivation_select}. By solving Eq. (\ref{eq:select edges for link}), we can obtain the most important events for the link prediction task. The Algorithm~\ref{alm:framework_link} shows
the overall process of selecting important events for link prediction task. The node property prediction is in Appendix~\ref{app:node select} and Algorithm~\ref{alm:framework_node}, while the graph classification task is in Appendix~\ref{app:graph select} and Algorithm~\ref{alm:framework_graph}.
\begin{algorithm}
\caption{The overall framework for link prediction }
\label{alm:framework_link}
\begin{algorithmic}[1] 
\STATE \textbf{Input}: target interaction $(v, u)$,  time $t$, and the max depth of memory backtracking tree $T_L$
\STATE Compute $\{\rmM^t_{p_0\to u}| p_0 \text{ is a leaf node of } \mathcal{T}_{\text{top}}(u) \}$, $\{\rmM^t_{p_0\to v}| p_0 \text{ is a leaf node of } \mathcal{T}_{\text{top}}(v) \}$, $\rmC^t_{u}$, and $\rmC^t_{v}$ using Algorithm~\ref{alm:topology attribution tree}. \textcolor{blue}{\COMMENT{topology attribution}}
\FOR{each $\rmM^t_{p\to u} \in \{\rmM^t_{p_0\to u} \}$}
\STATE Update $\rmC^t_{u}$ using Algorithm~\ref{alm: memory backtracking tree} \textcolor{blue}{\COMMENT{Memory attribution}}
\ENDFOR
\FOR{each $\rmM^t_{p \to v} \in \{\rmM^t_{p_0 \to v} \}$}
\STATE Update $\rmC^t_{v}$ using Algorithm~\ref{alm: memory backtracking tree} \textcolor{blue}{\COMMENT{Memory attribution}}
\ENDFOR
\STATE Update $\rmC^t_{u}$ and $\rmC^t_{v}$ using Lemma~\ref{thm:lrp_matrix_linear}, and $\rmC^t=\rmC^t_{u}+\rmC^t_{v}$
\STATE Select important events $\gE^*$ using Eq. (\ref{eq:select edges for link})
\STATE \textbf{Output:} The important events $\gE^*$
\end{algorithmic} \end{algorithm}
\subsection{Complexity Analysis}
\textbf{Construct the topology attribution tree}: if $f_{\textnormal{emb}}$ is the graph sum function, the complexity is $\mathcal{O}\big(N \cdot d_m+2^L\cdot B\cdot n \cdot d_m)$. If the $f_{\textnormal{emb}}$ is the graph attention function, the complexity is $\mathcal{O}\big(N \cdot d_m+2^L\cdot B\cdot (n+n^2) \cdot d_m)$. \textbf{Construct the memory backtracking tree}: the complexity is $\mathcal{O}\big(2^{T_L}\cdot T_L\cdot d_m\cdot d_m)$. 
\textbf{Select the important events}: the complexity is $\mathcal{O}\big(d_c^3)$. Where $d_m$ is the dimension of the memory vector, $n$ is the number of most recent interactions considered, $L$ is the number of graph layers, $N = |\gV(t)|$ is the number of nodes, $B$ is the batch size, $T_L$ is the depth of memory backtracking tree, and $d_c$ is the number of events in contribution matrix $\rmC^t$ 

%% file: experiments.tex
\section{Experiments}

\textbf{Datasets}. We evaluate the effectiveness of our method, MemExplainer, on nine real-world temporal graph datasets: Wikipedia, Reddit, Enron and UCI~\citep{hamilton2017inductive,jure2014snap,poursafaei2022towards} for the link prediction task, tgbn-trade, tgbn-genre, and tgbn-reddit~\citep{huang2023temporal} for node property prediction, HMDB51 dataset~\cite{hmdb} and Penn Action~\citep{penn} dataset for graph classification (human pose-based action classification) task. 
Detailed dataset statistics are provided in Appendix~\ref{app:datasets}.

\textbf{Baselines}. We compare the performance of MemExplainer with several baselines: \textbf{TGNNExplainer}~\citep{tgnnexplainer}, \textbf{TempME}~\citep{tempme}, \textbf{GNNExplainer}~\citep{gnnexplainer}, and \textbf{PGExplainer}~\citep{pgexplainer}. The GNNExplainer and PGExplainer are the explanation methods for static graph models. We adapt it for TGNs following the same setting in
prior work~\citep{tgnnexplainer}. TempME introduces a graph generation approach to capture temporal motifs for TGN predictions, while TGNNExplainer combines a navigator with Monte Carlo Tree Search, guiding the sampling process to construct explanation subgraphs. We also include several ablation variants as baselines. \textbf{w/o memory} uses only neighboring-event contributions (no memory backtracking tree); \textbf{w/o topology} uses only historical-event contributions (no topology tree); \textbf{w/o selection} skips objective-based selection and directly returns the top-$k$ events by accumulated contribution.

\textbf{Experimental setup}. In the graph classification (pose-based action classification) task, for each video, we first run YOLOPose~\cite{yolopose} to detect human keypoints and construct a skeleton graph whose nodes are body joints and whose edges follow the human kinematic structure. This produces a sequence of skeleton graphs for each frame, and then we feed these skeleton graphs into TGN to predict the action label. Notably, in Penn Action dataset, we use labeled data instead of extracting it using YOLOPose. 
We set the maximum depth of the memory backtracking tree to 5. Given the target events, nodes or graphs, we apply Algorithm~\ref{alm:framework_link}, Algorithm~\ref{alm:framework_node} or Algorithm~\ref{alm:framework_graph} to identify the important events $\gE^*$.

\textbf{Evaluation metrics} We evaluate performance using fidelity and sparsity. 
Let $\gE(t)$ represent the original temporal events. Let $\gE^*$ denote the selected important events. The KL-based fidelity metrics are $\text{Fidelity}_{\text{KL}}=\kl {\hat{y}_{v, u}(\gE(t))}{\hat{y}_{v, u}(\gE^*)}$ (link prediction) as defined in prior work~\citep{differential}. We use the probability-based fidelity metric: $\text{Fidelity}_{\text{prob}}=|\hat{y}_{v, u}(\gE(t))-\hat{y}_{v, u}(\gE^*)|$ (link prediction) as defined in~\citep{survey}. Lower values indicate better performance. $\text{Sparsity}=\frac{|\gE^*|}{|\gE(t)|}$, where  $|\gE^*|$ and $|\gE(t)|$ denote the number of events in $\gE^*$ and $\gE(t)$. The metrics for node property prediction and graph classification are shown in Appendix~\ref{app:metrics}. 

\begin{figure*}[h]
    \centering
    \includegraphics[width=1\textwidth]{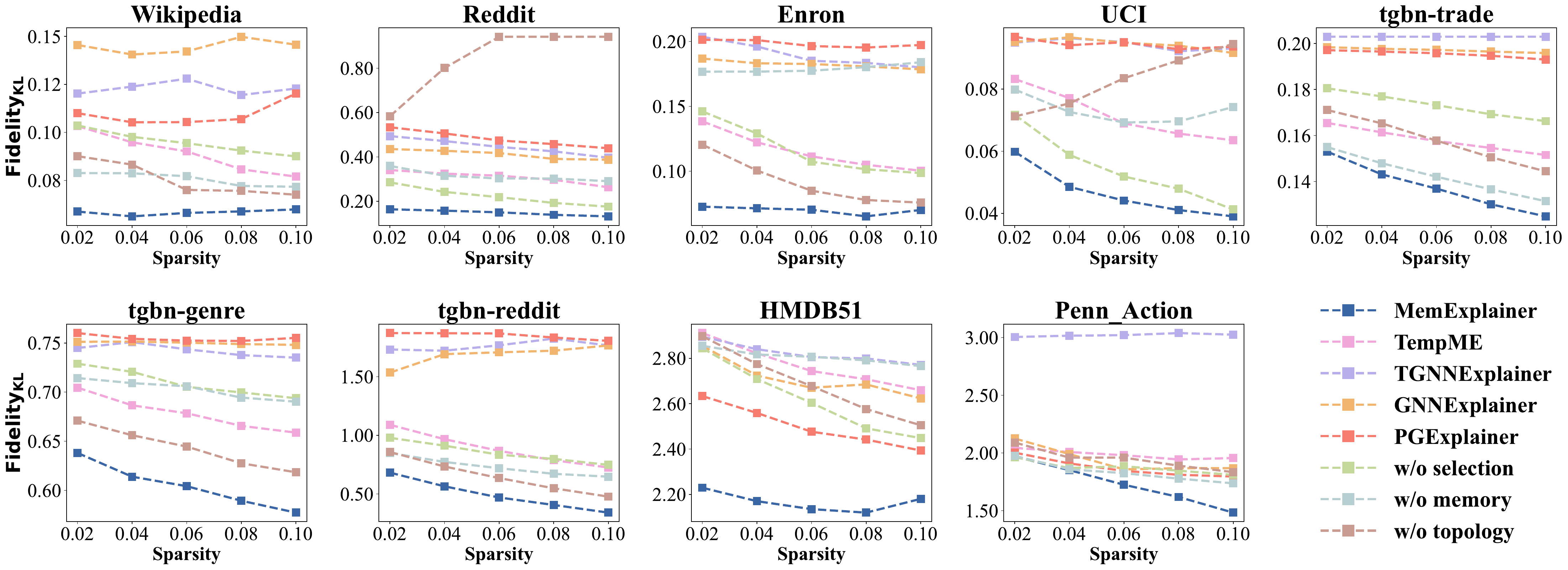}
    \caption{\small The performance of $\text{Fidelity}_{\text{kl}}$. Each figure corresponds to a different dataset. Lower value indicates better performance.}
    \label{fig:kl}
\end{figure*}
\begin{figure*}[h]
    \centering
    \includegraphics[width=1\textwidth]{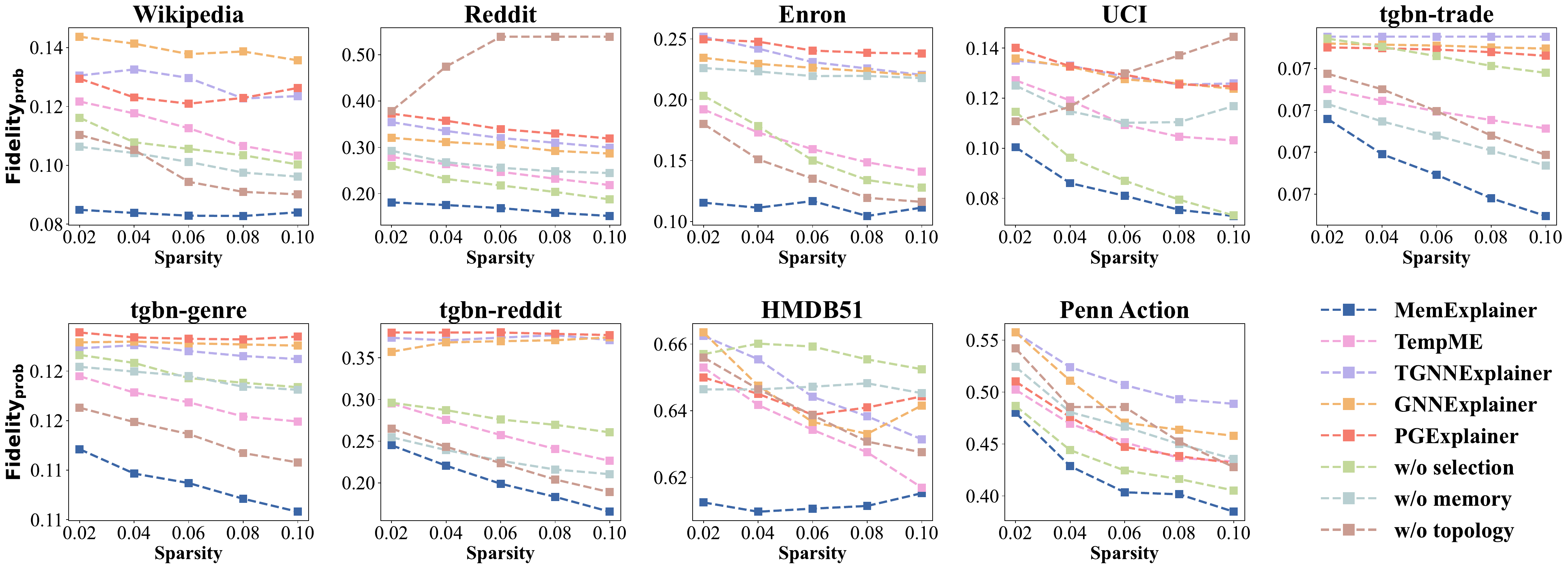}
    \caption{\small The performance of $\text{Fidelity}_{\text{prob}}$. Each figure corresponds to a different dataset. Lower value indicates better performance.}
    \label{fig:prob}
\end{figure*}
\textbf{Performance}.
We report the $\text{Fidelity}_{\text{KL}}$ and the $\text{Fidelity}_{\text{prob}}$ results in Figures~\ref{fig:kl}-\ref{fig:prob}, respectively, when the $f_{\textnormal{emb}}$ is graph sum function. Appendix~\ref{app:t-test} provides the mean and standard deviation of these metrics in Tables \ref{tab:statistics_kl} and \ref{tab:statistics_prob}. A t-test between our method and the second-best baseline shows statistical significance in \textbf{77}\% of cases for $\text{Fidelity}_{\text{KL}}$ and \textbf{74}\% for $\text{Fidelity}_{\text{prob}}$. In Figures~\ref{fig:kl}-\ref{fig:prob}, our method, MemExplainer, outperforms all baseline explainers across all datasets for both metrics, highlighting its ability to maintain high fidelity even with low sparsity. TempME can only capture temporal motifs and does not trace node memory vectors. TGNNExplainer relies on sampling; when many events are present, it only considers the sampled candidates, and if these have little influence, the resulting explanations have low fidelity. The performances of GNNExplainer and PGExplainer are poorer, as these methods are designed for static graphs and are not suitable for TGNs. 

\textbf{Running time}. We partition the computation time into three components: \textbf{topology time}, for constructing the topology attribution tree and computing neighbor contributions; \textbf{memory time}, for constructing the memory backtracking tree and computing contributions of historical events; and \textbf{selection time}, for solving Eq.~(\ref{eq:select edges for link}) or Eq.~(\ref{eq:select events for node}). In Appendix \ref{app:running_time}, Figure~\ref{fig:running_time} reports the average computation time on all datasets. When $T_L = 10$, the total running time takes at most 6 seconds, which is still acceptable. The details of running time is provided in Appendix \ref{app:running_time}. 

\textbf{The performance of different $f_{\textnormal{emb}}$ and $f_{\textnormal{update}}$ in TGNs.} In Figure \ref{fig:kl_attention} and Figure \ref{fig:prob_attention}, we show the performance of $\text{Fidelity}_{\text{prob}}$ and $\text{Fidelity}_{\text{KL}}$ when the $f_{\text{emb}}$ is graph attention model. In Figure \ref{fig:kl_rnn} and Figure \ref{fig:prob_rnn}, we report $\text{Fidelity}_{\text{prob}}$ and $\text{Fidelity}_{\text{KL}}$ for the setting where the memory updater $f_{\text{update}}$ is implemented as an RNN. Our method, MemExplainer, significantly outperforms baseline explainers across all datasets for both metrics. 

\textbf{Sensitivity analysis}.
In Figure \ref{fig:kl_degree} and Figure \ref{fig:prob_degree}, we report $\text{Fidelity}_{\text{prob}}$ and $\text{Fidelity}_{\text{KL}}$ when the number of events from neighboring samples is fixed to $n=20$. Our method, MemExplainer, achieve the best performance on both metrics across all datasets. We also perform a sensitivity analysis on the maximum depth of the memory backtracking tree, varying the depth from 2 to 10, while maintaining the same number of events for each target node or edge across all depths. The results for $\text{Fidelity}_{\text{KL}}$ and $\text{Fidelity}_{\text{prob}}$ are shown in Figures~\ref{fig:ablation_kl} and~\ref{fig:ablation_prob}. For tgbn-trade and tgbn-genre datasets, two metrics initially decrease and then increase with depth. Deeper backtracking traces more historical events, but also enlarges the search space, complicating event identification and reducing fidelity. Enron and UCI datasets perform better at lower depths. The optimal depth varies across datasets due to their distinct characteristics.

\textbf{Case study}. We show the explanation results on pose-based action classification task. Table \ref{tab:video_pullup}, Table \ref{tab:video_climbs}, Table \ref{tab:video_situps}, and Table \ref{tab:video_runs} show case studies for the pull-up, climb, sit-up, and run classes (in Appendix~\ref{app:case_study}). Our method typically selects a much smaller subset of edges per frame, which leads to more visualized frames under the same total edge budget. Across actions, our method highlights the task-relevant kinematic chains: upper and lower limb chains for pull-ups, the hip–knee–ankle chain for running, the supporting limbs for climbing, and the torso–hip–knee chain for sit-ups. 
 In contrast, across all four actions, PGExplainer, GNNExplainer, and TGNNExplainer usually select most skeletal links in one frame, making it hard to see which joints actually drive the prediction. TempME selects edges misaligned with the key biomechanics of the movement.

%% file: appendix.tex
\section{Appendix}
\subsection{Notations}
\begin{table}[h]
    \caption{Notations and their meanings}
    \label{tab:symbols}
    \centering
    \scriptsize
    \renewcommand\arraystretch{1.2}
    \begin{tabular}{c|c}
    \hline
      Notations   & Definitions and Descriptions \\
    \hline
    $t, t_k$& Timestamp\\
    $u, v$ & Node\\
    $e_k=(v,u,t_k)$ & Event \\
    $\rvx_{u}$ & The feature vectors of node $u$\\
    $\rvx_{e_k}$ & The feature vectors of event $e_k$\\
    $\rvz^t_{v}$ & The node embedding for node $u$ at timestamp t\\
     $\rvs^{t}_{u} $ & Memory vector for node $u$ at timestamp $t$\\
     $\mathcal{N}_{u}[0,t]=\{e_k |e_k=(v,u,t_k) \in \gE(t) $ & The neighborhood of node ${u}$ in time interval $t_k$ \\
     $\mathcal{N}^n_{u}[0,t]$  & The set of the most recent $n$ interactions from $\mathcal{N}_{u}[0,t]$ \\
     $\mathbf{R}\!\left(\mathbf{h}_{u}^{t,l-1} \right) \in \mathbb{R}^{d_m \times d_m}$  & Each matrix element represents the contribution of $i$-th neuron in $\mathbf{h}_{u}^{t,l-1} $ to the $j$-th in $\mathbf{z}_u^t$ \\
     $\mathbf{R}\!\left(\mathbf{h}_{v}^{t,l-1} \right) \in \mathbb{R}^{d_m \times d_m}$  & Each matrix element represents the contribution of $i$-th neuron in $\mathbf{h}_{v}^{t,l-1} $ to the $j$-th in $\mathbf{z}_u^t$ \\
     $\mathbf{R}\!\left(\mathbf{x}_{e_k} \right) \in \mathbb{R}^{d_e \times d_m}$  & Each matrix element represents the contribution of $i$-th neuron in $\mathbf{x}_{e_k} $ to the $j$-th in $\mathbf{z}_u^t$ \\
     $\mathbf{R}\!\left(t-t_k \right) \in \mathbb{R}^{d_t \times d_m}$  & Each matrix element represents the contribution of $i$-th neuron in $\phi(t-t_k)$ to the $j$-th in $\mathbf{z}_u^t$ \\
     $\rmM^t_{p_0\to u} \in \sR^{d_m \times d_m} $ & Each matrix element represents the contribution of $i$-th neurons in $\mathbf{s}_{p_0}^t$ to the $j$-th in $\mathbf{z}_u^t$ \\
       $\mathbf{C}_u^t \in \mathbb{R}^{|\gE(t)| \times d_m}$  & Each matrix element represents the contribution of $i$-th events in $\gE(t)$ to the $j$-th in $\mathbf{z}_u^t$ \\
       $\mathcal{T}_{top}(u)$  & The topology attribution tree of node $u$ \\
    \hline
    \end{tabular}
\end{table}

\subsection{Proof of Lemma~\ref{thm:lrp_matrix_linear}.}
\label{app:proof of lrp matrix}
\begin{proof}
    Let $\mathbf{x}=[x_1,\cdots x_{d_1}]$ and $\mathbf{y}=[y_1,\cdots y_{d_2}]$. We first broadcast the $\mathbf{x}^\top$ and $\mathbf{y}$ to align the shapes of $\mathbf{W}$. 
\begin{equation*}
\operatorname{diag}(\mathbf{x})
=
\begin{bmatrix}
x_1 & 0 & \cdots & 0 \\
0 & x_2 & \cdots & 0 \\
\vdots & \vdots & \ddots & \vdots \\
0 & 0 & \cdots & x_{d_1}
\end{bmatrix},\quad
\operatorname{diag}(\mathbf{y})^{-1}
=
\begin{bmatrix}
\frac{1}{y_1} & 0 & \cdots & 0 \\
0 & \frac{1}{y_2} & \cdots & 0 \\
\vdots & \vdots & \ddots & \vdots \\
0 & 0 & \cdots & \frac{1}{y_{d_2}}
\end{bmatrix}.
\end{equation*}
Then, we can obtain the following equation:
\begin{equation}
    \operatorname{diag}(\mathbf{x}) \cdot \mathbf{W} \cdot \operatorname{diag}(\mathbf{y})^{-1}=\begin{bmatrix}
x_{1}W_{1, 1}   & \dots  & x_{1}W_{1, d_2} \\
x_{2}W_{2, 1}  & \dots  & x_{2} W_{2, d_2}\\
\vdots & \ddots & \vdots \\
x_{d_1} W_{d_1, 1} & \dots  & x_{d_1}W_{d_1, d_2}
\end{bmatrix}\cdot \begin{bmatrix}
\frac{1}{y_1} & 0 & \cdots & 0 \\
0 & \frac{1}{y_2} & \cdots & 0 \\
\vdots & \vdots & \ddots & \vdots \\
0 & 0 & \cdots & \frac{1}{y_{d_2}}
\end{bmatrix}=\begin{bmatrix}
\frac{x_{1}W_{1, 1}}{y_1}   & \dots  & \frac{x_{1}W_{1, d_2}}{y_{d_2}} \\
\frac{x_{2}W_{2, 1}}{y_1}  & \dots  & \frac{x_{2} W_{2, d_2}}{y_{d_2}}\\
\vdots & \ddots & \vdots \\
\frac{x_{d_1} W_{d_1, 1}}{y_1} & \dots  & \frac{x_{d_1}W_{d_1, d_2}}{y_{d_2}}
\end{bmatrix}
\end{equation}
Let $\mathbf{P}=\operatorname{diag}(\mathbf{x}) \cdot \mathbf{W} \cdot \operatorname{diag}(\mathbf{y})^{-1}$, and $\mathbf{y}=\mathbf{x} \cdot \mathbf{W}$, then, 
\begin{equation*}
    \mathbf{1}^\top\cdot \mathbf{P}=\mathbf{1}^\top \cdot \operatorname{diag}(\mathbf{x}) \cdot \mathbf{W} \cdot \operatorname{diag}(\mathbf{y})^{-1}=\begin{bmatrix}
\frac{\sum_i x_{i}W_{i, 1}}{y_1}   & \frac{\sum_i x_{i}W_{i, 2}}{y_2} \dots  & \frac{\sum_i x_{i}W_{i, d_2}}{y_{d_2}}
\end{bmatrix}=\mathbf{1}^\top
\end{equation*}
Finally, the relevance contribution satisfies the conservation property.
\begin{equation*}
    \mathbf{R}(\mathbf{x})=\mathbf{P}\cdot \mathbf{R}(\mathbf{y}),\quad  \mathbf{1}^\top \cdot\mathbf{R}(\mathbf{x})=
    \mathbf{1}^\top\cdot \mathbf{P}\cdot\mathbf{R}(\mathbf{y})=\mathbf{1}^\top\cdot\mathbf{R}(\mathbf{y})
\end{equation*}
\end{proof}
\subsection{Proof of proposition~\ref{thm: proposition_graph_sum}}
\label{app:LRP}
\begin{proof}
 Let $\mathbf{W}_2^l = \begin{bmatrix} \mathbf{A}^l \\ \mathbf{B}^l \end{bmatrix} \in \mathbb{R}^{2d_m \times d_m}$, where $\mathbf{A}^l, \mathbf{B}^l \in \mathbb{R}^{d_m \times d_m}$. Let $R\!\left(\mathbf{h}_{u}^{l-1} \right) \in \mathbb{R}^{ d_m\times d_m}$ and $R\!\left(\tilde{\mathbf{h}}_{u}^{l} \right) \in \mathbb{R}^{ d_m\times d_m}$ denote the contributions of $\mathbf{h}_{u}^{l-1}$ and  $\tilde{\mathbf{h}}_{u}^{l}$ to $\mathbf{z}_{u}^{t}$, respectively. The element in the $i$-th row and $j$-th column of each matrix represents the contribution of the $i$-th neuron in $\mathbf{h}_{u}^{l-1}$ and  $\tilde{\mathbf{h}}_{u}^{l}$ to the $j$-th neuron in $\mathbf{z}_{u}^{t}$, respectively. According to the Lemma~\ref{thm:lrp_matrix_linear}, 
the matrices $\mathbf{R}\!\left(\mathbf{h}_{u}^{l-1} \right)$ and $\mathbf{R}\!\left(\tilde{\mathbf{h}}_{u}^{l} \right)$ are given by the following equations. 

\begin{align}
\mathbf{R}\!\left(\mathbf{h}_{u}^{l-1} \right) 
&= \operatorname{diag}\!\left(\mathbf{h}_{u}^{l-1}\right)
\cdot \mathbf{A}^{l}
\cdot
\operatorname{diag}\!\left(\mathbf{h}_{u}^{l}\right)^{-1} \cdot \mathbf{R}\!\left(\mathbf{h}_{u}^{l} \right) ,\nonumber \\
\mathbf{R}\!\left(\tilde{\mathbf{h}}_{u}^{l} \right)
& = \operatorname{diag}\!\left(\tilde{\mathbf{h}}_{u}^{l}\right)
\cdot \mathbf{B}^{l}
\cdot
\operatorname{diag}\!\left(\mathbf{h}_{u}^{l}\right)^{-1} \cdot \mathbf{R}\!\left(\mathbf{h}_{u}^{l}  \right), \nonumber \\
\mathbf{1^\top} \mathbf{R}\!\left(\mathbf{h}_{u}^{l}  \right)& =\mathbf{1^\top}\mathbf{R}\!\left(\mathbf{h}_{u}^{l-1} \right) +\mathbf{1^\top}\mathbf{R}\!\left(\tilde{\mathbf{h}}_{u}^{l} \right)
\label{eq:derivation_linear}
\end{align}
Where $\odot$ denotes the element-wise multiplication, $-$ represents element-wise division, and $\cdot$ denotes matrix multiplication. 

Similarly, let $\hat{\mathbf{h}}_{u}^{l}=\sum_{e_k\in \gN_{u}^n ([0, t])} 
\Big( \mathbf{h}_{v}^{l-1} \,\Vert\, \rvx_{e_k} \,\Vert\, \phi(t-t_k) \Big) \mathbf{W}_1^{l}$. Let $\mathbf{W}_1^l = \begin{bmatrix} \mathbf{C}^l \\ \mathbf{D}^l \\ \mathbf{E}^l \end{bmatrix} \in \mathbb{R}^{(d_m + d_e + d_t) \times d_m}$, where $\mathbf{C}^l \in \mathbb{R}^{d_m \times d_m},  \mathbf{D}^l \in \mathbb{R}^{d_e \times d_m}, \mathbf{E}^l \in \mathbb{R}^{d_t \times d_m} 
$. We apply LRP on the third equation in Eq. (\ref{eq:graph_sum_embed}), then: 
\begin{align*}
\mathbf{R}\!\left(\mathbf{h}_{v}^{l-1} \right) 
&= \operatorname{diag}\!\left(\mathbf{h}_{v}^{l-1}\right)
\cdot \mathbf{C}^{l}
\cdot
\operatorname{diag}\!\left(\hat{\mathbf{h}}_{u}^{l}\right)^{-1} \cdot \mathbf{R}\!\left(\hat{\mathbf{h}}_{u}^{l} \right) 
 \\
\mathbf{R}\!\left(\mathbf{x}_{e_k} \right) 
&= \operatorname{diag}\!\left(\mathbf{x}_{e_k}\right)
\cdot \mathbf{D}^{l}
\cdot
\operatorname{diag}\!\left(\hat{\mathbf{h}}_{u}^{l}\right)^{-1} \cdot \mathbf{R}\!\left(\hat{\mathbf{h}}_{u}^{l} \right)\\
\mathbf{R}\!\left(t-t_k \right) 
&=\operatorname{diag}\!\left(\phi(t-t_k)\right)
\cdot \mathbf{E}^{l}
\cdot
\operatorname{diag}\!\left(\hat{\mathbf{h}}_{u}^{l}\right)^{-1} \cdot \mathbf{R}\!\left(\hat{\mathbf{h}}_{u}^{l} \right)\\
\mathbf{1}^\top \mathbf{R}\!\left(\hat{\mathbf{h}}_{u}^{l} \right)&= \sum_{e_k = (v, u, t_k)\in \gN_u^n([0,t])}
\left[
\mathbf{1}^{\top}\mathbf{R}\!\left(\mathbf{h}_{v}^{l-1}\right)
+
\mathbf{1}^{\top}\mathbf{R}\!\left(\mathbf{x}_{e_k}\right)
+
\mathbf{1}^{\top}\mathbf{R}\!\left(t-t_k\right)
\right]
\end{align*}

For the softmax function, we use the identity
relevance propagation rule, i.e. $\mathbf{R}\!\left(\hat{\mathbf{h}}_{u}^{l} \right)=\mathbf{R}\!\left(\tilde{\mathbf{h}}_{u}^{l} \right)$, Thus, 
\begin{align*}
\mathbf{R}\!\left(\mathbf{h}_{v}^{l-1} \right)
& = \operatorname{diag}\!\left(\mathbf{h}_{v}^{l-1}\right)
\cdot \mathbf{C}^{l}
\cdot
\operatorname{diag}\!\left(\hat{\mathbf{h}}_{u}^{l}\right)^{-1}
\cdot
\operatorname{diag}\!\left(\tilde{\mathbf{h}}_{u}^{l}\right)
\cdot \mathbf{B}^{l}
\cdot
\operatorname{diag}\!\left(\mathbf{h}_{u}^{l}\right)^{-1} \cdot \mathbf{R}\!\left(\tilde{\mathbf{h}}_{u}^{l} \right), \\
\mathbf{R}\!\left(\mathbf{x}_{e_k}\right) 
&= \operatorname{diag}\!\left(\mathbf{x}_{e_k}\right)
\cdot \mathbf{D}^{l}
\cdot
\operatorname{diag}\!\left(\hat{\mathbf{h}}_{u}^{l}\right)^{-1}
\cdot
\operatorname{diag}\!\left(\tilde{\mathbf{h}}_{u}^{l}\right)
\cdot \mathbf{B}^{l}
\cdot
\operatorname{diag}\!\left(\mathbf{h}_{u}^{l}\right)^{-1}
\cdot \mathbf{R}\!\left(\tilde{\mathbf{h}}_{u}^{l} \right), \\
\mathbf{R}\!\left(t-t_k \right) 
&=\operatorname{diag}\!\left(\phi(t-t_k)\right)
\cdot \mathbf{E}^{l}
\cdot
\operatorname{diag}\!\left(\hat{\mathbf{h}}_{u}^{l}\right)^{-1}
\cdot
\operatorname{diag}\!\left(\tilde{\mathbf{h}}_{u}^{l}\right)
\cdot \mathbf{B}^{l}
\cdot
\operatorname{diag}\!\left(\mathbf{h}_{u}^{l}\right)^{-1}
\cdot \mathbf{R}\!\left(\tilde{\mathbf{h}}_{u}^{l} \right). 
\end{align*}
Thus,
\begin{equation}
\mathbf{1}^{\top}\mathbf{R}\!\left(\mathbf{h}_{u}^{l}\right)
=
\mathbf{1}^{\top}\mathbf{R}\!\left(\mathbf{h}_{u}^{l-1}\right)
+
\sum_{e_k = (v, u, t_k)\in \gN_u^n([0,t])}
\left[
\mathbf{1}^{\top}\mathbf{R}\!\left(\mathbf{h}_{v}^{l-1}\right)
+
\mathbf{1}^{\top}\mathbf{R}\!\left(\mathbf{x}_{e_k}\right)
+
\mathbf{1}^{\top}\mathbf{R}\!\left(t-t_k\right)
\right].
\end{equation}
\end{proof}

\subsection{Derivation of applying LRP to $f_{\textnormal{emb}}$, when $f_{\textnormal{emb}}$ is the graph attention function}
\label{app:LRP_attention}
If the $f_{\textnormal{emb}}$ is graph attention function, let $\hat{\mathbf{h}}_{v}^{l}=[\mathbf{h}_{v}^{l-1}\Vert\rvx_{e_k} \,\Vert\, \phi(t-t_k)]$ denote the neighbor embedding. Then, $\mathbf{h}_{\text{neighbor}}^{l}$ is the set of embeddings of neighboring nodes at time step $t$ and layer $l$, constructed by stacking $\hat{\mathbf{h}}_{v}^{l}$ for each $e_k={(v,u,t_k) \in \gN_{u}^n[0,t]}$. The Eq. (\ref{eq:embedding}) becomes:
\begin{align}
 \hat{\mathbf{h}}_{v}^{l}=[\mathbf{h}_{v}^{l-1}\Vert\rvx_{e_k} \,\Vert\, \phi(t-t_k)],\quad \mathbf{h}_{\text{neighor}}^{t,l}=\begin{bmatrix}
    \cdots \\ \hat{\mathbf{h}}_{v}^{l}  \\\cdots\\
\end{bmatrix}_{(v,u,t_k) \in \gN_{u}^n[0,t]}, \quad \mathbf{K}^{l}= \mathbf{h}_{\text{neighor}}^{l}\mathbf{W}_K^{l},  \nonumber \\
\mathbf{V}^{l}=\mathbf{K}^{l}=\mathbf{h}_{\text{neighor}}^{l}\mathbf{W}_K^{l}, \quad
\mathbf{q}^{l}=\Big(\mathbf{h}_{u}^{l-1} \,\Vert\, \phi(0) \Big) \mathbf{W}_q^{l}, \quad 
\mathbf{P}^{l}=\frac{\mathbf{q}^{l} (\mathbf{K}^{l})^\top}{\sqrt{d_m}}, \quad \nonumber \\
\hat{\mathbf{P}}^{l}=\text{softmax}(\mathbf{P}^{l}), \quad
\mathbf{O}^{l}=\hat{\mathbf{P}}^{l}\mathbf{V}^{l},\quad \tilde{\mathbf{h}}_{u}^{l} =\mathbf{O}^{l} \mathbf{W}_O^{l}
, \quad 
\mathbf{h}_{u}^{l}= 
\Big(\mathbf{h}_{u}^{l-1}\Vert\, \tilde{\mathbf{h}}_{u}^{l} \Big)\mathbf{W}_2^{l}
\label{eq:graph_attention_embed}
\end{align}
Where the dimensions are $\hat{\mathbf{h}}_{v}^{l} \in \mathbb{R}^{1 \times (d_m+d_e+d_t)}$, $\mathbf{h}_{\text{neighbor}}^{l} \in \mathbb{R}^{n \times (d_m+d_e+d_t)}$, $\mathbf{W}_K^{l}  \in \mathbb{R}^{(d_m+d_e+d_t) \times d_m}$, $\mathbf{K}^{l} \in \mathbb{R}^{n \times d_m}$, $\mathbf{V}^{l} \in \mathbb{R}^{n \times d_m}$, $\mathbf{W}_q^{l}  \in \mathbb{R}^{(d_m+d_t) \times d_m}$, $\mathbf{q}^{l} \in \mathbb{R}^{1 \times d_m}$, $\mathbf{P}^{l} \in \mathbb{R}^{1 \times n}$, $\mathbf{O}^{l} \in \mathbb{R}^{1 \times d_m}$, $\mathbf{W}_O^{l}  \in \mathbb{R}^{d_m \times d_m}$, $\tilde{\mathbf{h}}_{u}^{l}  \in \mathbb{R}^{1 \times d_m}$, $\mathbf{W}_2^{l}  \in \mathbb{R}^{2 d_m \times d_m}$.

According to the Lemma~\ref{thm:lrp_matrix_linear}, we can obtain the following equation: 
\begin{align}
\mathbf{R}\!\left(\mathbf{h}_{u}^{l-1} \right) 
&= \operatorname{diag}\!\left(\mathbf{h}_{u}^{l-1}\right)
\cdot \mathbf{A}^{l}
\cdot
\operatorname{diag}\!\left(\mathbf{h}_{u}^{l}\right)^{-1}
\cdot \mathbf{R}\!\left(\mathbf{h}_{u}^{l} \right) ,\quad
\mathbf{R}\!\left(\tilde{\mathbf{h}}_{u}^{l} \right)
=\operatorname{diag}\!\left(\tilde{\mathbf{h}}_{u}^{l}\right)
\cdot \mathbf{B}^{l}
\cdot
\operatorname{diag}\!\left(\mathbf{h}_{u}^{l}\right)^{-1} \cdot \mathbf{R}\!\left(\mathbf{h}_{u}^{l}  \right), \nonumber \\
\mathbf{R}\!\left(\mathbf{O}^{l} \right)
&= \operatorname{diag}\!\left(\mathbf{O}^{l}\right)
\cdot \mathbf{W}_{O}^{l}
\cdot
\operatorname{diag}\!\left(\tilde{\mathbf{h}}_{u}^{l}\right)^{-1} \cdot \mathbf{R}\!\left( {\tilde{\mathbf{h}}_{u}^{l}} \right) 
\label{eq:graph_attention_derivation_1}
\end{align}
Where $\mathbf{R}\!\left(\mathbf{O}^{l} \right) \in \mathbb{R}^{1 \times d_m \times d_m}$ denote the contribution to $\mathbf{z}_u^t$.

The matrix multiplication $\mathbf{O}^{l}=\hat{\mathbf{P}}^{l}\mathbf{V}^{l}$ is treated as a bilinear operation using the AttnLRP method~\cite{attnlrp}. In this approach, when assigning contribution values to $\hat{\mathbf{P}}^{l}$, $\mathbf{V}^{l}$ is treated as a fixed parameters, and vice versa. Let $\mathbf{O}^{l}_{\text{ratio}}=\operatorname{diag}\!\left(\hat{\mathbf{P}}^{l}\right)
\cdot
\mathbf{V}^{l}
\cdot
\operatorname{diag}\!\left(\mathbf{O}^{l}\right)^{-1}$ and $\mathbf{O}^{l}_{\text{ratio}}[i]$ denote the $i$-th row of $\mathbf{O}^{l}_{\text{ratio}}$, we can obtain the following equations: 
\begin{align}
\mathbf{O}^{l}_{\text{ratio}}=\operatorname{diag}\!\left(\hat{\mathbf{P}}^{l}\right)
\cdot
\mathbf{V}^{l}
\cdot
\operatorname{diag}\!\left(\mathbf{O}^{l}\right)^{-1} \in\mathbb{R}^{n\times d_m}, \quad
\mathbf{R}\!\left(\hat{\mathbf{P}}^{l} \right)
& = \frac{1}{2}\mathbf{O}^{l}_{\text{ratio}} \cdot \mathbf{R}\!\left(\mathbf{O}^{l} \right) \nonumber\\
\quad\mathbf{O}^{l}_{\text{expanded}}=\begin{bmatrix}
    \operatorname{diag}(\mathbf{O}^{l}_{\text{ratio}}[1])\\ \cdots \\ \operatorname{diag}(\mathbf{O}^{l}_{\text{ratio}}[n])
\end{bmatrix}\in \mathbb{R}^{n\times d_m \times d_m},\quad
\mathbf{R}\!\left(\mathbf{V}^{l} \right)
& = \frac{1}{2}\mathbf{O}^{l}_{\text{expanded}} \cdot \mathbf{R}\!\left(\mathbf{O}^{l} \right) 
\label{eq:graph_attention_derivation_2}
\end{align}
Where $\operatorname{diag}$ denotes the diagonalization operator that maps a vector to a diagonal matrix. 
$\mathbf{R}\!\left(\mathbf{V}^{l} \right)\in \mathbb{R}^{n\times d_m\times d_m}$, $\mathbf{R}\!\left(\hat{\mathbf{P}}^{l} \right) \in \mathbb{R}^{1 \times n \times d_m}$ denote the contribution to $\mathbf{z}_u^t$.

We continue to decompose contribution of $\mathbf{V}^{l}$. Let $\mathbf{W}_K^l = \begin{bmatrix} \mathbf{W}_{KC}^l \\ \mathbf{W}_{KD}^l \\ \mathbf{W}_{KE}^l \end{bmatrix} \in \mathbb{R}^{(d_m + d_e + d_t) \times d_m}$, where $\mathbf{W}_{KC}^l \in \mathbb{R}^{d_m \times d_m},  \mathbf{W}_{KD}^l \in \mathbb{R}^{d_e \times d_m}, \mathbf{W}_{KE}^l\in \mathbb{R}^{d_t \times d_m}
$. Supposing that  $\mathbf{h}_{\text{neighbor}}^{l}[i]=\hat{\mathbf{h}}_{v}^{l}$, then
\begin{align}
\mathbf{R}\!\left(\mathbf{h}_{v}^{l-1} \right) &= \operatorname{diag}\!\left(\mathbf{h}_{v}^{l-1}\right)
\cdot
\mathbf{W}_{KC}^{l}
\cdot
\operatorname{diag}\!\left(\mathbf{V}^{l}[i]\right)^{-1} \cdot \left(\mathbf{R}\!\left( \mathbf{V}^{l} \right)[i]\right)  \nonumber \\
\mathbf{R}\!\left(\mathbf{x}_{e_k} \right)
& = \operatorname{diag}\!\left(\mathbf{x}_{e_k}\right)
\cdot
\mathbf{W}_{KD}^{l}
\cdot
\operatorname{diag}\!\left(\mathbf{V}^{l}[i]\right)^{-1} \cdot  \left( \mathbf{R}\!\left( \mathbf{V}^{l} \right)[i]\right), \\
\mathbf{R}\!\left(\phi(t-t_k) \right) 
&= \operatorname{diag}\!\left(\phi(t-t_k)\right)
\cdot
\mathbf{W}_{KE}^{l}
\cdot
\operatorname{diag}\!\left(\mathbf{V}^{l}[i]\right)^{-1} \cdot \left( \mathbf{R}\!\left( \mathbf{V}^{l} \right)[i]\right)
\label{eq:graph_attention_derivation_3}
\end{align}
We decompose contribution of $\hat{\mathbf{P}}^{l}$. For the softmax function, we use the identity
propagation rule, i.e. $\mathbf{R}\!\left(\hat{\mathbf{P}}^{l} \right)=\mathbf{R}\!\left(\mathbf{P}^{l} \right)$. Similarly, we use the AttnLRP method to handle the matrix multiplication. Let $\mathbf{P}^{l}_{\text{ratio}}[i]$ denote the $i$-th row of $\mathbf{P}^{l}_{\text{ratio}}$
\begin{align}
\mathbf{P}^{l}_{\text{ratio}}=\operatorname{diag}\!\left(\mathbf{q}^{l}\right)
\cdot
(\mathbf{K}^{l})^{\top}
\cdot
\operatorname{diag}\!\left(\mathbf{P}^{l}\right)^{-1}\in \mathbb{R}^{d_m \times n}, \quad
\mathbf{R}\!\left(\mathbf{q}^{l} \right)
& =  \frac{1}{2\sqrt{d_m}}\mathbf{P}^{l}_{\text{ratio}}\cdot \mathbf{R}\!\left( \mathbf{P}^{l} \right) \nonumber \\
\quad\mathbf{P}^{l}_{\text{expanded}}=\begin{bmatrix}
    \operatorname{diag}(\mathbf{P}^{l}_{\text{ratio}}[1])\\ \cdots \\ \operatorname{diag}(\mathbf{P}^{l}_{\text{ratio}}[n])
\end{bmatrix}\in \mathbb{R}^{d_m \times n \times n},\quad
\mathbf{R}\!\left(\mathbf{K}^{l} \right)
& = \frac{1}{2\sqrt{d_m}}\left(\mathbf{P}^{l}_{\text{expanded}} \cdot \mathbf{R}\!\left(\mathbf{P}^{l} \right) \right)^\top
\label{eq:graph_attention_derivation_4}
\end{align}
Where $\operatorname{diag}$ denotes the diagonalization operator that maps a vector to a diagonal matrix. 
$\mathbf{R}\!\left(\mathbf{K}^{l} \right)\in \mathbb{R}^{n\times d_m\times d_m}$, $\mathbf{R}\!\left(\mathbf{q}^{l} \right) \in \mathbb{R}^{1 \times n \times d_m}$ denote the contribution to $\mathbf{z}_u^t$.

Let $\mathbf{W}_q^l = \begin{bmatrix} \mathbf{W}_{qa}^l \\ \mathbf{W}_{qb}^l \end{bmatrix} \in \mathbb{R}^{(d_m+d_t) \times d_m}$, where $\mathbf{W}_{qa}^l \in \mathbb{R}^{d_m \times d_m}, \mathbf{W}_{qb}^l \in \mathbb{R}^{d_t \times d_m}$. Supposing that  $\mathbf{h}_{\text{neighbor}}^{l}[i]=\hat{\mathbf{h}}_{v}^{l}$. Let $R\!\left(\mathbf{h}_{u}^{l-1} \right) \in \mathbb{R}^{ d_m\times d_m}$, then we attribute the $\mathbf{R}\!\left(\mathbf{q}^{l} \right)$ and $\mathbf{R}\!\left(\mathbf{K}^{l} \right)$ to $\mathbf{h}_{v}^{l-1}$, $\mathbf{x}_{e_k}$ and $\phi(t-t_k)$:
\begin{align}
\mathbf{R}\!\left(\mathbf{h}_{u}^{l-1} \right)
& = \operatorname{diag}\!\left(\mathbf{h}_{u}^{l-1}\right)
\cdot
\mathbf{W}_{qa}^{l}
\cdot
\operatorname{diag}\!\left(\mathbf{q}^{l}\right)^{-1} \cdot \mathbf{R}\!\left( \mathbf{q}^{l} \right) \nonumber \\
\mathbf{R}\!\left(\phi(0) \right)
& = \operatorname{diag}\!\left(\phi(0)\right)
\cdot
\mathbf{W}_{qb}^{l}
\cdot
\operatorname{diag}\!\left(\mathbf{q}^{l}\right)^{-1} \cdot \mathbf{R}\!\left( \mathbf{q}^{l} \right) \nonumber \\
\mathbf{R}\!\left(\mathbf{h}_{v}^{l-1} \right)
&=  \operatorname{diag}\!\left(\mathbf{h}_{v}^{l-1}\right)
\cdot
\mathbf{W}_{KC}^{l}
\cdot
\operatorname{diag}\!\left(\mathbf{K}^{l}[i]\right)^{-1} \cdot \left( \mathbf{R}\!\left( \mathbf{K}^{l} \right)[i]\right) \nonumber \\
\mathbf{R}\! \left(\mathbf{x}_{e_k} \right)
& =  \operatorname{diag}\!\left(\mathbf{x}_{e_k}\right)
\cdot
\mathbf{W}_{KD}^{l}
\cdot
\operatorname{diag}\!\left(\mathbf{K}^{l}[i]\right)^{-1}\cdot \left( \mathbf{R}\!\left( \mathbf{K}^{l} \right) [i]\right)  \nonumber \\
\mathbf{R}\!\left(\phi(t-t_k) \right)
& =  \operatorname{diag}\!\left(\phi(t-t_k)\right)
\cdot
\mathbf{W}_{KE}^{l}
\cdot
\operatorname{diag}\!\left(\mathbf{K}^{l}[i]\right)^{-1} \cdot \left( \mathbf{R}\!\left( \mathbf{K}^{l} \right)[i]\right)
\label{eq:graph_attention_derivation_5}
\end{align}
We suppose that the $i$-th element in $\gN_{u}^n[0, t]$ is node $v$. We can obtain the contributions of $\mathbf{h}_{u}^{l-1}$, $\mathbf{h}_{v}^{l-1}$, $\mathbf{x}_{e_k}$, $\phi(t-t_k)$ and $\phi(0)$ to $\mathbf{z}_{u}^{t}$, denoted as $\mathbf{R}\!\left(\mathbf{h}_{u}^{l-1} \right) \in \mathbb{R}^{d_m\times d_m}$, $\mathbf{R}\!\left(\mathbf{h}_{v}^{l-1} \right) \in \mathbb{R}^{d_m\times d_m}$, $\mathbf{R}\!\left(\mathbf{x}_{e_k} \right)\in \mathbb{R}^{d_e\times d_m}$ and $\mathbf{R}\!\left(\phi(t-t_k)\right)\in \mathbb{R}^{d_t\times d_m}$:

\begin{align}
\mathbf{R}\!\left(\phi(0)\right)
&=
\frac{1}{4\sqrt{d_m}}\,
\operatorname{diag}\!\left(\phi(0)\right)
\cdot
\mathbf{W}_{qb}^{l}
\cdot
\operatorname{diag}\!\left(\mathbf{q}^{l}\right)^{-1}
\cdot
\operatorname{diag}\!\left(\mathbf{q}^{l}\right)
\cdot
\left(\mathbf{K}^{l}\right)^{\top}
\cdot
\operatorname{diag}\!\left(\mathbf{P}^{l}\right)^{-1}
\nonumber \\
&\quad\cdot
\operatorname{diag}\!\left(\hat{\mathbf{P}}^{l}\right)
\cdot
\mathbf{V}^{l}
\cdot
\operatorname{diag}\!\left(\mathbf{O}^{l}\right)^{-1}
\cdot
\operatorname{diag}\!\left(\mathbf{O}^{l}\right)
\cdot
\mathbf{W}_{O}^{l}
\cdot
\operatorname{diag}\!\left(\tilde{\mathbf{h}}_{u}^{l}\right)^{-1}
\nonumber \\
&\quad\cdot
\operatorname{diag}\!\left(\tilde{\mathbf{h}}_{u}^{l}\right)
\cdot
\mathbf{B}^{l}
\cdot
\operatorname{diag}\!\left(\mathbf{h}_{u}^{l}\right)^{-1}
\cdot
\mathbf{R}\!\left(\mathbf{h}_{u}^{l}\right) \nonumber \\
\mathbf{R}\!\left(\mathbf{x}_{e_k}\right)
&=
\frac{1}{4\sqrt{d_m}}\,
\operatorname{diag}\!\left(\mathbf{x}_{e_k}\right)
\cdot
\mathbf{W}_{KD}^{l}
\cdot
\operatorname{diag}\!\left(\mathbf{K}^{l}[i]\right)^{-1}
\cdot
\operatorname{diag}\!\left(\mathbf{K}^{l}[i]\right)
\cdot
\left(\mathbf{q}^{l}\right)^{\top}
\cdot
\left(\mathbf{P}^{l}[i]\right)^{-1}
\nonumber \\
&\quad\cdot
\left[
\operatorname{diag}\!\left(\hat{\mathbf{P}}^{l}\right)
\cdot
\mathbf{V}^{l}
\cdot
\operatorname{diag}\!\left(\mathbf{O}^{l}\right)^{-1}
\cdot
\operatorname{diag}\!\left(\mathbf{O}^{l}\right)
\cdot
\mathbf{W}_{O}^{l}
\cdot
\operatorname{diag}\!\left(\tilde{\mathbf{h}}_{u}^{l}\right)^{-1}
\right.
\nonumber \\
&\qquad\left.
\cdot
\operatorname{diag}\!\left(\tilde{\mathbf{h}}_{u}^{l}\right)
\cdot
\mathbf{B}^{l}
\cdot
\operatorname{diag}\!\left(\mathbf{h}_{u}^{l}\right)^{-1}
\cdot
\mathbf{R}\!\left(\mathbf{h}_{u}^{l}\right)
\right]_{i,:}
\nonumber \\
&\quad+
\frac{1}{2}\,
\operatorname{diag}\!\left(\mathbf{x}_{e_k}\right)
\cdot
\mathbf{W}_{KD}^{l}
\cdot
\operatorname{diag}\!\left(\mathbf{V}^{l}[i]\right)^{-1}
\cdot
\operatorname{diag}\!\left(\mathbf{V}^{l}[i]\right)
\cdot
\left(\hat{\mathbf{P}}^{l}[i]\mathbf{I}_{d_m}\right)
\nonumber \\
&\quad\cdot
\operatorname{diag}\!\left(\mathbf{O}^{l}\right)^{-1}
\cdot
\operatorname{diag}\!\left(\mathbf{O}^{l}\right)
\cdot
\mathbf{W}_{O}^{l}
\cdot
\operatorname{diag}\!\left(\tilde{\mathbf{h}}_{u}^{l}\right)^{-1}
\cdot
\operatorname{diag}\!\left(\tilde{\mathbf{h}}_{u}^{l}\right)
\nonumber \\
&\quad\cdot
\mathbf{B}^{l}
\cdot
\operatorname{diag}\!\left(\mathbf{h}_{u}^{l}\right)^{-1}
\cdot
\mathbf{R}\!\left(\mathbf{h}_{u}^{l}\right) \nonumber \\
\mathbf{R}\!\left(\phi(t-t_k)\right)
&=
\frac{1}{4\sqrt{d_m}}\,
\operatorname{diag}\!\left(\phi(t-t_k)\right)
\cdot
\mathbf{W}_{KE}^{l}
\cdot
\operatorname{diag}\!\left(\mathbf{K}^{l}[i]\right)^{-1}
\cdot
\operatorname{diag}\!\left(\mathbf{K}^{l}[i]\right)
\cdot
\left(\mathbf{q}^{l}\right)^{\top}
\cdot
\left(\mathbf{P}^{l}[i]\right)^{-1}
\nonumber \\
&\quad\cdot
\left[
\operatorname{diag}\!\left(\hat{\mathbf{P}}^{l}\right)
\cdot
\mathbf{V}^{l}
\cdot
\operatorname{diag}\!\left(\mathbf{O}^{l}\right)^{-1}
\cdot
\operatorname{diag}\!\left(\mathbf{O}^{l}\right)
\cdot
\mathbf{W}_{O}^{l}
\cdot
\operatorname{diag}\!\left(\tilde{\mathbf{h}}_{u}^{l}\right)^{-1}
\right.
\nonumber \\
&\qquad\left.
\cdot
\operatorname{diag}\!\left(\tilde{\mathbf{h}}_{u}^{l}\right)
\cdot
\mathbf{B}^{l}
\cdot
\operatorname{diag}\!\left(\mathbf{h}_{u}^{l}\right)^{-1}
\cdot
\mathbf{R}\!\left(\mathbf{h}_{u}^{l}\right)
\right]_{i,:}
\nonumber \\
&\quad+
\frac{1}{2}\,
\operatorname{diag}\!\left(\phi(t-t_k)\right)
\cdot
\mathbf{W}_{KE}^{l}
\cdot
\operatorname{diag}\!\left(\mathbf{V}^{l}[i]\right)^{-1}
\cdot
\operatorname{diag}\!\left(\mathbf{V}^{l}[i]\right)
\cdot
\left(\hat{\mathbf{P}}^{l}[i]\mathbf{I}_{d_m}\right)
\nonumber \\
&\quad\cdot
\operatorname{diag}\!\left(\mathbf{O}^{l}\right)^{-1}
\cdot
\operatorname{diag}\!\left(\mathbf{O}^{l}\right)
\cdot
\mathbf{W}_{O}^{l}
\cdot
\operatorname{diag}\!\left(\tilde{\mathbf{h}}_{u}^{l}\right)^{-1}
\cdot
\operatorname{diag}\!\left(\tilde{\mathbf{h}}_{u}^{l}\right)
\nonumber \\
&\quad\cdot
\mathbf{B}^{l}
\cdot
\operatorname{diag}\!\left(\mathbf{h}_{u}^{l}\right)^{-1}
\cdot
\mathbf{R}\!\left(\mathbf{h}_{u}^{l}\right) \nonumber \\
\mathbf{R}\!\left(\mathbf{h}_{v}^{l-1}\right)
&=
\frac{1}{4\sqrt{d_m}}\,
\operatorname{diag}\!\left(\mathbf{h}_{v}^{l-1}\right)
\cdot
\mathbf{W}_{KC}^{l}
\cdot
\operatorname{diag}\!\left(\mathbf{K}^{l}[i]\right)^{-1}
\cdot
\operatorname{diag}\!\left(\mathbf{K}^{l}[i]\right)
\cdot
\left(\mathbf{q}^{l}\right)^{\top}
\cdot
\left(\mathbf{P}^{l}[i]\right)^{-1}
\nonumber \\
&\quad\cdot
\left[
\operatorname{diag}\!\left(\hat{\mathbf{P}}^{l}\right)
\cdot
\mathbf{V}^{l}
\cdot
\operatorname{diag}\!\left(\mathbf{O}^{l}\right)^{-1}
\cdot
\operatorname{diag}\!\left(\mathbf{O}^{l}\right)
\cdot
\mathbf{W}_{O}^{l}
\cdot
\operatorname{diag}\!\left(\tilde{\mathbf{h}}_{u}^{l}\right)^{-1}
\right.
\nonumber \\
&\qquad\left.
\cdot
\operatorname{diag}\!\left(\tilde{\mathbf{h}}_{u}^{l}\right)
\cdot
\mathbf{B}^{l}
\cdot
\operatorname{diag}\!\left(\mathbf{h}_{u}^{l}\right)^{-1}
\cdot
\mathbf{R}\!\left(\mathbf{h}_{u}^{l}\right)
\right]_{i,:}
\nonumber \\
&\quad+
\frac{1}{2}\,
\operatorname{diag}\!\left(\mathbf{h}_{v}^{l-1}\right)
\cdot
\mathbf{W}_{KC}^{l}
\cdot
\operatorname{diag}\!\left(\mathbf{V}^{l}[i]\right)^{-1}
\cdot
\operatorname{diag}\!\left(\mathbf{V}^{l}[i]\right)
\cdot
\left(\hat{\mathbf{P}}^{l}[i]\mathbf{I}_{d_m}\right)
\nonumber \\
&\quad\cdot
\operatorname{diag}\!\left(\mathbf{O}^{l}\right)^{-1}
\cdot
\operatorname{diag}\!\left(\mathbf{O}^{l}\right)
\cdot
\mathbf{W}_{O}^{l}
\cdot
\operatorname{diag}\!\left(\tilde{\mathbf{h}}_{u}^{l}\right)^{-1}
\cdot
\operatorname{diag}\!\left(\tilde{\mathbf{h}}_{u}^{l}\right)
\nonumber \\
&\quad\cdot
\mathbf{B}^{l}
\cdot
\operatorname{diag}\!\left(\mathbf{h}_{u}^{l}\right)^{-1}
\cdot
\mathbf{R}\!\left(\mathbf{h}_{u}^{l}\right) \nonumber \\
\mathbf{R}\!\left(\mathbf{h}_{u}^{l-1}\right)
&=
\operatorname{diag}\!\left(\mathbf{h}_{u}^{l-1}\right)
\cdot \mathbf{A}^{l}
\cdot
\operatorname{diag}\!\left(\mathbf{h}_{u}^{l}\right)^{-1}
\cdot
\mathbf{R}\!\left(\mathbf{h}_{u}^{l}\right)
\nonumber \\
&\quad+
\frac{1}{4\sqrt{d_m}}\,
\operatorname{diag}\!\left(\mathbf{h}_{u}^{l-1}\right)
\cdot \mathbf{W}_{qa}^{l}
\cdot
\operatorname{diag}\!\left(\mathbf{q}^{l}\right)^{-1}
\cdot
\operatorname{diag}\!\left(\mathbf{q}^{l}\right)
\cdot
\left(\mathbf{K}^{l}\right)^{\top}
\cdot
\operatorname{diag}\!\left(\mathbf{P}^{l}\right)^{-1}
\nonumber \\
&\quad\cdot
\operatorname{diag}\!\left(\hat{\mathbf{P}}^{l}\right)
\cdot
\mathbf{V}^{l}
\cdot
\operatorname{diag}\!\left(\mathbf{O}^{l}\right)^{-1}
\cdot
\operatorname{diag}\!\left(\mathbf{O}^{l}\right)
\cdot
\mathbf{W}_{O}^{l}
\cdot
\operatorname{diag}\!\left(\tilde{\mathbf{h}}_{u}^{l}\right)^{-1}
\nonumber \\
&\quad\cdot
\operatorname{diag}\!\left(\tilde{\mathbf{h}}_{u}^{l}\right)
\cdot
\mathbf{B}^{l}
\cdot
\operatorname{diag}\!\left(\mathbf{h}_{u}^{l}\right)^{-1}
\cdot
\mathbf{R}\!\left(\mathbf{h}_{u}^{l}\right) 
\label{eq:graph_attention_final1}
\end{align}


\subsection{Proof of proposition~\ref{thm:proposition_memory}}
\label{app:GRU}
\begin{proof}
If the $f_{\textnormal{update}}$ is the GRU function, the Eq. (\ref{eq:memory update}) can be rewritten as 
\begin{align*} 
&\text{Reset gate:} \quad \rvr^t_{u} = \sigma\!\left( \bar \rvm^t_{u}\mathbf{W}_r +  \rvs^{t-}_{u} \mathbf{U}_r^t + \mathbf{b}_r\right), \\ 
&\text{Update gate:} \quad \rvg^t_{u} = \sigma\!\left( \bar{\rvm}^t_{u} \mathbf{W}_g +  \rvs^{t-}_{u} \mathbf{U}_g + \mathbf{b}_g\right), \\
&\text{Candidate state:} \quad \tilde{\rvs}^t_{u} = \tanh\!\left( \bar{\rvm}^t_{u} \mathbf{W}_h +  \rvr^t_{u} \odot (\rvs^{t-}_{u}\mathbf{U}_h) +\mathbf{b}_h \right) , \\
&\text{Updated state:} \quad \rvs^t_{u} = (1 - \rvg^t_{u}) \odot \rvs^{t-}_{u} + \rvg^t_{u} \odot \tilde{\rvs}^t_{u}. 
\end{align*}
Where, $\mathbf{W}_r, \mathbf{W}_g, \mathbf{W}_h, \mathbf{U}_r, \mathbf{U}_g, \mathbf{U}_h, \mathbf{b}_r, \mathbf{b}_g, \mathbf{b}_h$ are the parameters in the GRU. 

In GRU computation, multiplicative interactions occur when a gate neuron modulates a signal neuron, e.g., $\rvg^t_{u}[i]\cdot \rvs^{t-}_{u}[i]$. Unlike linear mappings, such interactions pose challenges for relevance redistribution. A widely adopted strategy is the signal-take-all rule, which assigns all relevance to the signal neuron and none to the gate. This reflects the view that the gate controls the flow of
information, but is not information itself~\citep{wu2022layer}.
Thus,  
\begin{align}
    \mathbf{R} \!\left( \rvs^{t-}_{u}  \right) = \operatorname{diag}\!\left(\frac{(1-\rvg^t_{u})\odot \rvs^{t-}_{u}}{\rvs^{t}_{u}}\right) \cdot \mathbf{R} \!\left( \rvs^{t}_{u}  \right)&, \quad
\mathbf{R} \!\left( \tilde{\rvs}^{t}_{u}  \right) = \operatorname{diag}\!\left(\frac{\rvg^t_{u}\odot \tilde{\rvs}^{t}_{u}}{\rvs^{t}_{u}}\right) \cdot \mathbf{R} \!\left( \rvs^{t}_{u}  \right) \nonumber \\
\mathbf{1}^\top \mathbf{R} \!\left( \rvs^{t}_{u} \right)&=
\mathbf{1}^\top \mathbf{R} \!\left( \rvs^{t-}_{u} \right)+ \mathbf{1}^\top \mathbf{R} \!\left( \tilde{\rvs}^{t}_{u}  \right)
\end{align}
We continue to decompose $\mathbf{R} \!\left( \tilde{\rvs}^{t}_{u}  \right)$ and let $\mathbf{A}=\bar \rvm^t_{u}\mathbf{W}_h + \rvr^t_{u}\odot(\rvs^{t-}_{u}\mathbf{U}_h)$: 
\begin{align}
    \mathbf{R} \!\left(\bar{\rvm}^t_{u} \right)
= \operatorname{diag}\!\left(\bar{\rvm}^{t}_{u}\right)
\cdot
\mathbf{W}_{h}
\cdot
\operatorname{diag}\!\left( \mathbf{A}\right)^{-1} \cdot \mathbf{R} \!\left( \tilde{\rvs}^{t}_{u}  \right) &,\quad
\mathbf{R}\!\big(\rvs^{t-}_{u}\big)
= \operatorname{diag}\!\left(\rvr^{t}_{u}\odot \rvs^{t-}_{u}\right)
\cdot
\mathbf{U}_{h}
\cdot
\operatorname{diag}\!\left(\mathbf{A}\right)^{-1} \cdot \mathbf{R} \!\left( \tilde{\rvs}^{t}_{u}  \right) \nonumber\\
\mathbf{1}^\top \mathbf{R} \!\left( \tilde{\rvs}^{t}_{u}  \right)&=\mathbf{1}^\top  \mathbf{R} \!\left(\bar{\rvm}^t_{u} \right)+ \mathbf{1}^\top  \mathbf{R}\!\big(\rvs^{t-}_{u}\big)
\end{align}

Finally, 
\begin{align}
\mathbf{R} \!\left( \rvs^{t-}_{u}  \right) &= \left( \operatorname{diag}\!\left(\frac{(1-\rvg^t_{u})\odot \rvs^{t-}_{u}}{\rvs^{t}_{u}}\right)+ \operatorname{diag}\!\left(\rvr^{t}_{u}\odot \rvs^{t-}_{u}\right)
\cdot
\mathbf{U}_{h}
\cdot
\operatorname{diag}\!\left(\mathbf{A}\right)^{-1} \cdot \operatorname{diag}\!\left(\frac{\rvg^t_{u}\odot \tilde{\rvs}^{t}_{u}}{\rvs^{t}_{u}}\right) \right) \cdot \mathbf{R} \!\left( \rvs^{t}_{u}  \right) \nonumber \\
\mathbf{R} \!\left(\bar{\rvm}^t_{u} \right)
&=\operatorname{diag}\!\left(\bar{\rvm}^{t}_{u}\right)
\cdot
\mathbf{W}_{h}
\cdot
\operatorname{diag}\!\left(\mathbf{A}\right)^{-1} \cdot \operatorname{diag}\!\left(\frac{\rvg^t_{u}\odot \tilde{\rvs}^{t}_{u}}{\rvs^{t}_{u}}\right) \cdot \mathbf{R} \!\left( \rvs^{t}_{u} \right), \nonumber \\
\mathbf{R} \!\left(\bar{\rvm}^t_{u} \right)&=[\mathbf{R} \!\left(\rvs^{t-}_{v} \right), \mathbf{R} \!\left(\rvs^{t-}_{u} \right), \mathbf{R} \!\left(\rvx_{e_k} \right),\mathbf{R} \!\left(t-t_k \right) ] \nonumber \\
\mathbf{1}^\top \mathbf{R} \!\left( \rvs^{t}_{u}  \right)&= 
\mathbf{1}^\top \mathbf{R} \!\left( \rvs^{t-}_{u}  \right) + \mathbf{1}^\top \mathbf{R} \!\left(\bar{\rvm}^t_{u} \right)=\mathbf{1}^\top \mathbf{R} \!\left( \rvs^{t-}_{u}  \right) +\mathbf{1}^\top \mathbf{R} \!\left(\rvs^{t-}_{v} \right) +\mathbf{1}^\top \mathbf{R} \!\left(\rvx_{e_k} \right)+\mathbf{1}^\top \mathbf{R} \!\left(t-t_k \right)
\label{eq:lrp_gru}
\end{align}
Where $\operatorname{diag}(\cdot)$ denotes the diagonalization operator that maps a vector to a diagonal matrix, $-$ represents element-wise division and $\cdot$ denotes matrix multiplication. $\mathbf{R}\!\left(\rvs^{t-}_{u} \right) \in \sR^{d_m\times d_m}$ and
$R \!\left(\bar{\rvm}^t_{u}  \right) \in \sR^{(2d_m+d_e+d_t) \times d_m}$. Splitting $\mathbf{R} \!\left(\bar{\rvm}^t_{u} \right)$ column-wise: the first $d_m$ columns correspond to $\mathbf{R} \!\left(\rvs^{t-}_{v} \right) \in \mathbb{R}^{d_m \times d_m}$, the next $d_m$ to $2d_m$ columns to $\mathbf{R} \!\left(\rvs^{t-}_{u} \right) \in \mathbb{R}^{d_m\times d_m}$, the following $d_e$ columns to $\mathbf{R} \!\left(\rvx_{e_k} \right) \in \mathbb{R}^{d_e\times d_m}$, and the remaining columns to $\mathbf{R} \!\left(t-t_k \right) \in \mathbb{R}^{d_t \times d_m}$. 
\end{proof}

\subsection{Derivation of LRP Applied to RNN function}
\label{app:RNN}
If the $f_{\textnormal{update}}$ is the RNN function, the Eq. (\ref{eq:memory update}) can be rewritten as  
\begin{align*} 
\rvs^t_{u} = \tanh\!\left( \bar{\rvm}^t_{u} \mathbf{W}_h +  \rvs^{t-}_{u}\mathbf{U}_h +\mathbf{b}_h \right) , 
\end{align*}
where $ \mathbf{W}_h,\mathbf{U}_h, \mathbf{b}_h$ are the parameters in the RNN.

Similarly, 
\begin{align}
\mathbf{R}\!\left(\bar{\rvm}^{t}_{u}\right)
&=
\operatorname{diag}\!\left(\bar{\rvm}^{t}_{u}\right)
\cdot
\mathbf{W}_{h}
\cdot
\operatorname{diag}\!\left(
\bar{\rvm}^{t}_{u}\mathbf{W}_{h}
+
\rvs^{t-}_{u}\mathbf{U}_{h}
\right)^{-1}
\cdot
\mathbf{R}\!\left(\rvs^{t}_{u}\right),
\nonumber \\
\mathbf{R}\!\left(\rvs^{t-}_{u}\right)
&=
\operatorname{diag}\!\left(\rvs^{t-}_{u}\right)
\cdot
\mathbf{U}_{h}
\cdot
\operatorname{diag}\!\left(
\bar{\rvm}^{t}_{u}\mathbf{W}_{h}
+
\rvs^{t-}_{u}\mathbf{U}_{h}
\right)^{-1}
\cdot
\mathbf{R}\!\left(\rvs^{t}_{u}\right),
\nonumber \\
\mathbf{R}\!\left(\bar{\rvm}^{t}_{u}\right)
&=
\begin{bmatrix}
\mathbf{R}\!\left(\rvs^{t-}_{v}\right) \\
\mathbf{R}\!\left(\rvs^{t-}_{u}\right) \\
\mathbf{R}\!\left(\rvx_{e_k}\right) \\
\mathbf{R}\!\left(\phi(t-t_k)\right)
\end{bmatrix}.
\label{eq:lrp_rnn}
\end{align}

Where $\operatorname{diag}(\cdot)$ denotes the diagonalization operator that maps a vector to a diagonal matrix, $-$ represents element-wise division and $\cdot$ denotes matrix multiplication. $\mathbf{R}\!\left(\rvs^{t-}_{u} \right) \in \sR^{d_m\times d_m}$ and
$R \!\left(\bar{\rvm}^t_{u}  \right) \in \sR^{(2d_m+d_e+d_t) \times d_m}$. $\mathbf{R} \!\left(\rvs^{t-}_{v} \right) \in \mathbb{R}^{d_m \times d_m}$, $\mathbf{R} \!\left(\rvs^{t-}_{u} \right) \in \mathbb{R}^{d_m\times d_m}$, $\mathbf{R} \!\left(\rvx_{e_k} \right) \in \mathbb{R}^{d_e\times d_m}$, and $\mathbf{R} \!\left(\phi(t-t_k) \right) \in \mathbb{R}^{d_t \times d_m}$ denote the contribution of $\rvs^{t-}_{v}$, $\rvs^{t-}_{u}$, $\rvx_{e_k}$ and $\phi(t-t_k)$ to $\mathbf{z}_{u}^{t}$, respectively.
\subsection{The derivation of selecting important events}
\label{app:derivation_select}
The derivation process of Eq.~(\ref{eq:KL divergence}) is
\begin{gather}
    \text{KL}(p_2 \parallel p_1) = p_2 \log\left( \frac{p_2}{p_1} \right) + (1 - p_2) \log\left( \frac{1 - p_2}{1 - p_1} \right)
    =\sigma(z_2) \left( z_2 - \text{sp}(z_2) - z_1 - \text{sp}(z_1) \right) \nonumber \\
    + (1 - \sigma(z_2)) \left( -\text{sp}(z_2) + \text{sp}(z_1) \right)
    = \sigma(z_2) (z_2 - z_1) - \text{sp}(z_2) + \text{sp}(z_1), 
\end{gather}
In Eq.~(\ref{eq:KL divergence}), we derived the KL divergence between $p_2$ and $p_1$, where $p_1$ and $p_2$ are arbitrary probabilities obtained by applying the sigmoid function to logits. Now we let $p_2$ be the predicted probability on the original dynamic graph,
$p_2 = \hat{y}_{v,u}(\mathcal{E}(t)) = \sigma(z_2)$,
and $p_1$ be the predicted probability after retaining only a subset of critical events and removing the others.
Based on the event-contribution matrix $\mathbf{C}^t$, we approximate the perturbed logit $z_1$ as, $z_1 = \sum_{l=1}^{d_c} d_l\, \mathbf{C}^t_{e_l}$, where $d_c$ is the number of candidate events, and $\mathbf{d} \in \{0,1\}^{d_c}$ is a selection vector whose entry $d_l$ indicates whether event $e_l$ is selected. Substituting $z_1$ into Eq.~(\ref{eq:KL divergence}) gives
\begin{align*}
   \text{KL}(p_2 \parallel p_1) = &\sigma(z_2) (z_2 - z_1) - \text{sp}(z_2) + \text{sp}(z_1)\\
    =&\sigma(z_2) z_2-\hat{y}_{v,u}\big(\gE(t)\big)\sum_{l=1}^{d_c} d_l \rmC^t_{e_l}-\text{sp}(z_2)+\text{sp}(\sum_{l=1}^{d_c} d_l \rmC^t_{e_l})
\end{align*} 
Since $\sigma(z_2) z_2$ and $\mathrm{sp}(z_2)$ do not depend on $\mathbf{d}$, they are constant with respect to the selection and can be dropped from the optimization objective. Minimizing the remaining KL term over $\mathbf{d}$ yields Eq.~(\ref{eq:select edges for link}), which we use to select a small set of key events that best preserves the original prediction.
\subsection{The example of Lemma~\ref{thm:lrp_matrix_linear}}
\label{app:example_lemma}
Let $\mathbf{x}=[1,2,3]$, 
$\mathbf{W}=
\begin{bmatrix}
1 & 2 & 3 \\
4 & 5 & 6 \\
7 & 8 & 9
\end{bmatrix}$, 
and $\mathbf{y}=[30,36,42]$, then
\begin{equation*}
\begin{aligned}
\mathbf{P}
&=
\begin{bmatrix}
1 & 0 & 0 \\
0 & 2 & 0 \\
0 & 0 & 3
\end{bmatrix}
\cdot
\begin{bmatrix}
1 & 2 & 3 \\
4 & 5 & 6 \\
7 & 8 & 9
\end{bmatrix}
\cdot
\begin{bmatrix}
\frac{1}{30} & 0 & 0 \\
0 & \frac{1}{36} & 0 \\
0 & 0 & \frac{1}{42}
\end{bmatrix} 
&\approx
\begin{bmatrix}
0.03 & 0.05 & 0.07 \\
0.27 & 0.28 & 0.28 \\
0.70 & 0.67 & 0.65
\end{bmatrix}.
\end{aligned}
\end{equation*}

Because $\mathbf{1}^{\top}\mathbf{P}=\mathbf{1}^{\top}$, we have
\begin{equation*}
\mathbf{1}^{\top}\cdot\mathbf{R}(\mathbf{x})
=
\mathbf{1}^{\top}\cdot\mathbf{P} \cdot \mathbf{R}(\mathbf{y})
=
\mathbf{1}^{\top}\cdot\mathbf{R}(\mathbf{y}).
\end{equation*}

\subsection{The example of Proposition~\ref{thm:proposition_memory}}
\label{app:example_gru}
\begin{figure}[htbp]
    \centering
    \includegraphics[width=0.8\linewidth]{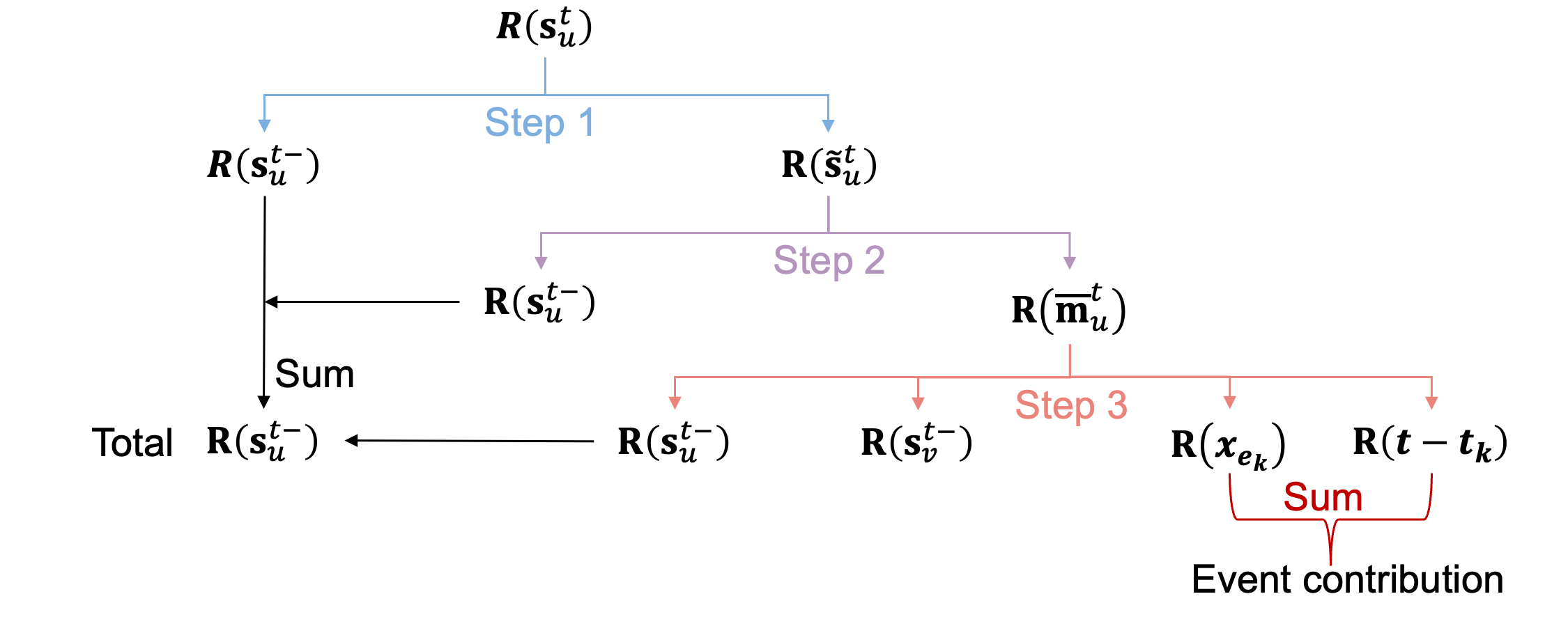}
    \caption{\small  The overview of the LRP decomposition when the $f_\textnormal{update}$ is GRU function}
    \label{fig:gru_decomposition}
\end{figure}
Following the GRU equations, we first decompose
$\mathbf{R}(\mathbf{s}_{u}^{t})$ into $\mathbf{R}(\mathbf{s}_{u}^{t-})$ 
and $\mathbf{R}(\tilde{\mathbf{s}}_{u}^{t})$.
Then, $\mathbf{R}(\tilde{\mathbf{s}}_{u}^{t})$ is further decomposed into $\mathbf{R}(\mathbf{s}_{u}^{t-})$ and $\mathbf{R}(\bar{\mathbf{m}}_{u}^{t})$. Finally, $\mathbf{R}(\bar{\mathbf{m}}_{u}^{t})$ is attributed to $\mathbf{R}(\mathbf{s}_{v}^{t-})$, $\mathbf{R}(\mathbf{s}_{u}^{t-})$, $\mathbf{R}(t-t_k)$, and $\mathbf{R}(\mathbf{x}_{e_k})$. The overview of the decomposition is provided in Figure~\ref{fig:gru_decomposition}.
We provide a toy running example when the $f_\textnormal{update}$ is GRU function. Suppose that 
$\mathbf{s}_{v}^{t-}=[0.4,0.2]$, 
$\mathbf{s}_{u}^{t-}=[0.1,0.3]$, 
$\mathbf{x}_{e_k}=[0.5]$, and 
$t-t_k=[0.6]$. Then,
\[
\bar{\mathbf{m}}_{u}^{t}
=
[\mathbf{s}_{v}^{t-}\Vert \mathbf{s}_{u}^{t-}\Vert \mathbf{x}_{e_k}\Vert t-t_k]
=
[0.4,0.2,0.1,0.3,0.5,0.6].
\]
The GRU parameters are
\[
\mathbf{W}_{r}
=
\begin{bmatrix}
0.1 & 0 \\
0 & 0.1 \\
0.5 & 0 \\
0 & 0.5 \\
0.1 & 0 \\
0 & 0.1
\end{bmatrix},
\quad
\mathbf{W}_{g}
=
\begin{bmatrix}
0.2 & 0 \\
0 & 0.1 \\
0.1 & 0 \\
0 & 0.1 \\
0.1 & 0 \\
0 & 0.2
\end{bmatrix},
\quad
\mathbf{W}_{h}
=
\begin{bmatrix}
0.1 & 0 \\
0 & 0.1 \\
0.05 & 0 \\
0 & 0.05 \\
0.05 & 0 \\
0 & 0.05
\end{bmatrix},
\]
\[
\mathbf{U}_{r}
=
\mathbf{U}_{g}
=
\mathbf{U}_{h}
=
\begin{bmatrix}
0.1 & 0 \\
0 & 0.1
\end{bmatrix},
\quad
\mathbf{b}_{r}
=
\mathbf{b}_{g}
=
\mathbf{b}_{h}
=
[0,0],
\quad
\mathbf{R}(\mathbf{s}_{u}^{t})
=
\begin{bmatrix}
1 & 2 \\
3 & 4
\end{bmatrix}.
\]

\textbf{Forward propagation:}
\[
\mathbf{r}_{u}^{t}=[0.526,0.531],
\quad
\mathbf{g}_{u}^{t}=[0.537,0.549],
\quad
\tilde{\mathbf{s}}_{u}^{t}=[0.075,0.080],
\quad
\mathbf{s}_{u}^{t}=[0.086,0.179].
\]
\textbf{Calculation process:}

\begin{align*}
(1) \mathbf{P}_{1}
&=
\operatorname{diag}
\left(
\frac{(1-\mathbf{g}_{u}^{t})\odot \mathbf{s}_{u}^{t-}}
{\mathbf{s}_{u}^{t}}
\right)
=
\begin{bmatrix}
0.53 & 0 \\
0 & 0.75
\end{bmatrix},
\mathbf{P}_{2}
=
\operatorname{diag}
\left(
\frac{\mathbf{g}_{u}^{t}\odot \tilde{\mathbf{s}}_{u}^{t}}
{\mathbf{s}_{u}^{t}}
\right)
=
\begin{bmatrix}
0.47 & 0 \\
0 & 0.25
\end{bmatrix},
\quad
\mathbf{1}^{\top}\mathbf{P}_{1}
+
\mathbf{1}^{\top}\mathbf{P}_{2}
=
[1,1].
\end{align*}

\begin{equation*}
\begin{aligned}
\mathbf{R}(\mathbf{s}_{u}^{t-})
&=
\mathbf{P}_{1}\cdot \mathbf{R}(\mathbf{s}_{u}^{t})
=
\begin{bmatrix}
0.53 & 1.06 \\
2.25 & 3
\end{bmatrix}, 
\mathbf{R}(\tilde{\mathbf{s}}_{u}^{t})
&=
\mathbf{P}_{2}\cdot \mathbf{R}(\mathbf{s}_{u}^{t})
=
\begin{bmatrix}
0.47 & 0.94 \\
0.75 & 1
\end{bmatrix}, 
\mathbf{1}^{\top}\mathbf{R}(\mathbf{s}_{u}^{t})
&=
\mathbf{1}^{\top}\mathbf{R}(\mathbf{s}_{u}^{t-})
+
\mathbf{1}^{\top}\mathbf{R}(\tilde{\mathbf{s}}_{u}^{t}).
\end{aligned}
\end{equation*}

\begin{align*}
(2)  \mathbf{P}_{3}
&=\operatorname{diag}\!\left(\bar{\rvm}^{t}_{u}\right)
\cdot
\mathbf{W}_{h}
\cdot
\operatorname{diag}\!\left(\bar \rvm^t_{u}\mathbf{W}_h + \rvr^t_{u}\odot(\rvs^{t-}_{u}\mathbf{U}_h)\right)^{-1}
=
\begin{bmatrix}
0.53 & 0 \\
0 & 0.25 \\
0.07 & 0 \\
0 & 0.18 \\
0.33 & 0 \\
0 & 0.37
\end{bmatrix}, \\
\mathbf{P}_{4}
&=\operatorname{diag}\!\left(\rvr^{t}_{u}\odot \rvs^{t-}_{u}\right)
\cdot
\mathbf{U}_{h}
\cdot
\operatorname{diag}\!\left(\bar \rvm^t_{u}\mathbf{W}_h + \rvr^t_{u}\odot(\rvs^{t-}_{u}\mathbf{U}_h)\right)^{-1}
=
\begin{bmatrix}
0.07 & 0 \\
0 & 0.2
\end{bmatrix}, \quad
\mathbf{1}^{\top}\mathbf{P}_{3}
+
\mathbf{1}^{\top}\mathbf{P}_{4}
=
[1,1].
\end{align*}

\begin{equation*}
\begin{aligned}
\mathbf{R}(\bar{\mathbf{m}}_{u}^{t})
&=
\mathbf{P}_{3}\cdot \mathbf{R}(\tilde{\mathbf{s}}_{u}^{t})
=
\begin{bmatrix}
0.25 & 0.5 \\
0.18 & 0.25 \\
0.03 & 0.06 \\
0.14 & 0.19 \\
0.16 & 0.31 \\
0.28 & 0.37
\end{bmatrix}, 
\mathbf{R}(\mathbf{s}_{u}^{t-})
&=
\mathbf{P}_{4}\cdot \mathbf{R}(\tilde{\mathbf{s}}_{u}^{t})
=
\begin{bmatrix}
0.03 & 0.06 \\
0.15 & 0.2
\end{bmatrix}, 
\mathbf{1}^{\top}\mathbf{R}(\tilde{\mathbf{s}}_{u}^{t})
&=
\mathbf{1}^{\top}\mathbf{R}(\bar{\mathbf{m}}_{u}^{t})
+
\mathbf{1}^{\top}\mathbf{R}(\mathbf{s}_{u}^{t-}).
\end{aligned}
\end{equation*}

(3) $\mathbf{R}(\bar{\mathbf{m}}_{u}^{t})(i,j)
\text{ represents the contribution of the } i\text{-th element in }
\bar{\mathbf{m}}_{u}^{t}
\text{ to the } j\text{-th column element of logits.}$

Thus,
\[
\mathbf{R}(\mathbf{s}_{v}^{t-})
=
\begin{bmatrix}
0.25 & 0.5 \\
0.18 & 0.25
\end{bmatrix},
\quad
\mathbf{R}(\mathbf{s}_{u}^{t-})
=
\begin{bmatrix}
0.03 & 0.06 \\
0.14 & 0.19
\end{bmatrix},
\]
\[
\mathbf{R}(\mathbf{x}_{e_k})
=
[0.16,0.31],
\quad
\mathbf{R}(t-t_k)
=
[0.28,0.37].
\]

The event contribution is
\[
\mathbf{1}^{\top}\mathbf{R}(\mathbf{x}_{e_k})
+
\mathbf{1}^{\top}\mathbf{R}(t-t_k)
=
1.12.
\]
The total contribution of $\mathbf{s}_{u}^{t-}$ is
\[
\mathbf{R}(\mathbf{s}_{u}^{t-})
= \begin{bmatrix}
0.53 & 1.06 \\
2.25 & 3
\end{bmatrix}+ \begin{bmatrix}
0.03 & 0.06 \\
0.15 & 0.2
\end{bmatrix}+ \begin{bmatrix}
0.03 & 0.06 \\
0.14 & 0.19
\end{bmatrix}=
\begin{bmatrix}
0.59 & 1.18 \\
2.54 & 3.39
\end{bmatrix},
\]
and
\[
\mathbf{1}^{\top}\mathbf{R}(\mathbf{s}_{u}^{t-})
+
\mathbf{1}^{\top}\mathbf{R}(\mathbf{s}_{v}^{t-})
+
\mathbf{1}^{\top}\mathbf{R}(\mathbf{x}_{e_k})
+
\mathbf{1}^{\top}\mathbf{R}(t-t_k)
=
\mathbf{1}^{\top}\mathbf{R}(\mathbf{s}_{u}^{t}).
\]

\subsection{The Algorithm for constructing the topology attribution tree }
Algorithm~\ref{alm:topology attribution tree}  demonstrates the construction of a topology attribution tree and the propagation of relevance contributions. 
\begin{algorithm}[htbp]
\caption{Construction of the topology attribution tree and relevance propagation}
\label{alm:topology attribution tree}
\begin{algorithmic}[1]
\STATE \textbf{Input}: target node $u$, node embedding $\rvz^t_{u}$, number of layers $L$
\STATE Initialize $\mathcal T_{\text{top}} \gets \{(u,L)\}$ as a tree with root node, set $\mathsf{frontier} \gets \{(u,L)\}$.
\STATE Initialize contribution matrices: $\{\rmM^t_{p_0\to u} \in \sR^{d_m \times d_m} \gets 0, | p_0 \in \gV(t)\}$
\STATE Initialize contribution matrices: $\rmC_{u}^t \in \sR^{|\gE(t)| \times d_m} \gets 0$ for all events
\STATE $p_l$ is the node at the layer $l$ and $p_{l+1}$ is the node at the layer $l+1$.
\STATE $\mathsf{next} \gets \emptyset$
\FOR{$\l = L-1$ down to $0$}
\FOR{each tuple  $(p_{l+1}, l{+}1) \in \mathsf{frontier}$} 
\FOR{each event $e_l=(p_l,p_{l+1},t_l) \in \mathcal N_{p_{l+1}}^n[0,t]$}
\STATE Insert $(p_l,l)$ and $(\rvx_{e_l},l)$ as children of $(p_{l+1},l{+}1)$
\STATE $\mathsf{next} \gets \mathsf{next} \cup \{(p_l,l)\}$
\IF{$f_\text{emb}$ is graph sum function}
\STATE Compute $\mathbf{R}\!\left(\mathbf{h}_{p_l}^{l} \right)$, $\mathbf{R}\!\left(\mathbf{h}_{p_{l+1}}^{l+1} \right)$, $\mathbf{R}\left(\rvx_{e_l} \right)$, and $\mathbf{R}\left(\phi \right)$ via Eq.~(\ref{eq:R_memory_target})
\ELSIF{$f_\text{emb}$ is graph attention function}
\STATE Compute $\mathbf{R}\!\left(\mathbf{h}_{p_{l+1}}^{l+1} \right)$, $\mathbf{R}\!\left(\mathbf{h}_{p_l}^{l} \right)$, $\mathbf{R}\left(\rvx_{e_l} \right)$, and $\mathbf{R}\left(\phi \right)$ via Eq.~(\ref{eq:graph_attention_final1})
\ENDIF
 \STATE $\rmC_{u}^t[e_l] \gets \rmC_{u}^t[e_l] + \mathbf{1}^\top \mathbf{R}(\rvx_{e_l} )+ \mathbf{1}^\top  \mathbf{R}(\phi(t{-}t_l))$
 \IF{$l$=0}
 \STATE $\rmM^t_{p_0\to u}\gets\rmM^t_{p_0\to u}+\mathbf{R}\!\left(\mathbf{h}_{p_0}^{0} \right)$
 \ENDIF
\ENDFOR
\ENDFOR
\STATE $\mathsf{frontier} \gets \mathsf{next}$
\STATE $\mathsf{next} \gets \emptyset$
\ENDFOR
\STATE \textbf{Output:} $\mathcal T_{\text{top}}(u)$, the contribution matrix $\rmC^t_{u}$, and $\{\rmM^t_{p_0 \to u}| p_0 \text{ is a leaf node of }\mathcal T_{\text{top}}(u) \}$ 
\end{algorithmic}
\end{algorithm}

\subsection{Evaluation metrics}
\label{app:metrics}
$\text{Fidelity}_{\text{KL}}=\kl {\hat{y}_{u}(\gE(t))}{\hat{y}_{u}(\gE^*)}$ (node property prediction) and $\text{Fidelity}_{\text{KL}}=\kl {\hat{y}(\gE(t))}{\hat{y}(\gE^*)}$ (graph classification task)  
 $\text{Fidelity}_{\text{prob}}$: $\text{Fidelity}_{\text{prob}}=|\hat{y}_{v, u}(\gE(t))-\hat{y}_{v, u}(\gE^*)|$ (link prediction) , $\text{Fidelity}_{\text{prob}}=|\hat{y}_{u}(\gE(t))-\hat{y}_{u}(\gE^*)|$ (node property prediction) and $\text{Fidelity}_{\text{prob}}=|\hat{y}(\gE(t))-\hat{y}(\gE^*)|$ (graph classification task) as defined in~\citep{survey}. 
\subsection{The selection for node property prediction task.}
\label{app:node select}
Similarly, for the node property prediction, the predicted probability is $\hat{\rvy}=\hat{\rvy}_{u}=\text{softmax}(f_{\textnormal{mlp}}( \rvz^t_{u}))$. Let $\rvz$ denote the logits, and the contribution matrix of events is $\rmC_{u}^t$. The KL divergence is 
\begin{gather}
    \kl{\hat{\rvy}(\gE_1(t))}{\hat{\rvy}(\gE_2(t))}= \sum_{k=1}^c \hat{y}_{k}(\gE_2(t)) \log[ \frac{\hat{y}_{k}(\gE_2(t))}{\hat{y}_{k}(\gE_1(t))}] \nonumber \\
=\sum_{k=1}^c \hat{y}_{k}(\gE_2(t)) [z_k(\gE_2(t)) - z_k(\gE_1(t))]-\log (Z)
=\sum_{k=1}^c \hat{y}_{k}(\gE_2(t)) \Delta z_k-\log (Z), 
\label{eq:KL divergence for node}
\end{gather}
where $Z(\gE_\tau(t))=\sum_{k=1}^c\exp(z_k(\gE_\tau(t)))$ for $\tau=1,2$, and $Z=\frac{Z(\gE_2(t))}{Z(\gE_1(t))}$. The objective function for node property prediction is:
\begin{equation}
\mathbf{d}^\ast=\argmin_{\substack{\rvd \in \{0, 1\}^{d_c} ,  \| \rvd \|_1=n}}
    \hspace{.1in} \sum_{k=1}^c \left(- \hat{y}_{k}(\gE(t))\sum_{l=1}^{|d_c|} d_l \rmC^{t}_{e_l}\right) + \log \sum_{k^\prime=1}^{c}\exp\left(\sum_{l=1}^{|d_c|} d_l \rmC^{t}_{e_l}\right)
    \label{eq:select events for node}
\end{equation}
Solving this equation yields the most important events for the node property prediction task.  The Algorithm shows
the overall process of selecting important layer edges for node property prediction task. 
\begin{algorithm}[H] \caption{The overall framework for node property prediction}
\label{alm:framework_node}
\begin{algorithmic}[1] 
\STATE \textbf{Input}: target node $u$,  time $t$, and the max depth of memory backtracking tree $T_L$
\STATE Compute $\mathcal{T}_{\text{top}}(u)$, $\{\rmM^t_{p_0 \to u}| p_0 \text{ is a leaf node of } \mathcal{T}_{\text{top}}(u) \}$ and $\rmC^t_{u}$ using Algorithm~\ref{alm:topology attribution tree}. \textcolor{blue}{\COMMENT{topology attribution}}
\FOR{each $\rmM^t_{p\to u} \in \{\rmM^t_{p_0 \to u}| p_0 \text{ is a leaf node of } \mathcal{T}_{\text{top}}(u) \}$}
\STATE Update $\rmC^t_{u}$ using Algorithm~\ref{alm: memory backtracking tree} \textcolor{blue}{\COMMENT{Memory attribution}}
\ENDFOR
\STATE Update $\rmC^t_{u}$ using Lemma~\ref{thm:lrp_matrix_linear}, and let $\rmC^t_{u} \to \rmC^t$
\STATE Select important events $\gE^*$ using Eq. (\ref{eq:select events for node})
\STATE \textbf{Output:} The important events $\gE^*$
\end{algorithmic} \end{algorithm}

\subsection{The selection for graph classification prediction task.}
\label{app:graph select}
For the graph classification task, the predicted probability is $\hat{\rvy}=[\hat{y}_1,\cdots,\hat{y}_c]$. The contribution matrix of events to node $u$ is $\rmC^t(u)$.
The objective function for node property prediction is:
\begin{equation}
\mathbf{d}^\ast=\argmin_{\substack{\rvd \in \{0, 1\}^{d_c} ,  \| \rvd \|_1=n}}
    \hspace{.1in} \sum_{k=1}^c \left(- \hat{y}_{k}(\gE(t))\sum_{u \in \gV(t)} \sum_{l=1}^{|d_c|} d_l \rmC^{t}_{e_l}(u)\right) + \log \sum_{k^\prime=1}^{c}\exp\left(\sum_{u \in \gV(t)}\sum_{l=1}^{|d_c|} d_l \rmC^{t}_{e_l}(u)\right)
    \label{eq:select events for graph}
\end{equation}
Solving this equation yields the most important events for the graph classification task. 
The Algorithm shows
the overall process of selecting important layer edges for graph classification task. 
\begin{algorithm}[H] \caption{The overall framework for graph classification task}
\label{alm:framework_graph}
\begin{algorithmic}[1] 
\STATE \textbf{Input}: the node set $\gV(t)$,  time $t$, and the max depth of memory backtracking tree $T_L$
\FOR{$u \in \gV(t)$}
\STATE Compute $\mathcal{T}_{\text{top}}(u)$, $\{\rmM^t_{p_0 \to u}| p_0 \text{ is a leaf node of } \mathcal{T}_{\text{top}}(u) \}$ and $\rmC^t_{u}$ using Algorithm~\ref{alm:topology attribution tree}. \textcolor{blue}{\COMMENT{topology attribution}}
\FOR{each $\rmM^t_{p\to u} \in \{\rmM^t_{p_0 \to u}| p_0 \text{ is a leaf node of } \mathcal{T}_{\text{top}}(u) \}$}
\STATE Update $\rmC^t_{u}$ using Algorithm~\ref{alm: memory backtracking tree} \textcolor{blue}{\COMMENT{Memory attribution}}
\STATE $\rmC^t(u)=\rmC^t_{u}$
\ENDFOR
\ENDFOR
\STATE Update $\rmC^t(u)$ using Lemma~\ref{thm:lrp_matrix_linear}
\STATE Select important events $\gE^*$ using Eq. (\ref{eq:select events for graph})
\STATE \textbf{Output:} The important events $\gE^*$
\end{algorithmic} \end{algorithm}

\subsection{The Algorithm for constructing the  memory backtracking tree }
\begin{algorithm} 
\caption{Construction of the memory backtracking tree and recursive relevance propagation} 
\label{alm: memory backtracking tree} 
\begin{algorithmic}[1] 
\STATE \textbf{Input}: target node $u$, node memory contributions $\rmM^t_{u\to u} \in \sR^{d_m \times d_m} $, time $t$,  $\rmC^t_{u_k} \in \mathbb{R}^{|\gE(t)| \times d_m}$, and the max depth $T_L$
\STATE Initialize tree $\mathcal T_{\text{memory}} \gets \{(u,t)\}$ with root $(u,t)$ 
\STATE \textbf{procedure DFS} $\left(\text{node: } u, \text{timestamp: } t,\, \text{layer: } T_l,\text{contribution matrix: } \mathbf{R}(\mathbf{s}_u^{t})\right)$
\STATE \quad \textbf{if} $T_l \geq T_L$ \textbf{then}
\STATE \quad \quad \textbf{return}
\STATE \quad \textbf{end if}
\STATE \quad Let $\mathcal H(u,t)=\{e_1,\dots,e_m\}$ sorted by $t_1<\cdots<t_m\le t$ 
\STATE \quad \textbf{if} $\mathcal H(u,t)=\emptyset$ \textbf{then}
\STATE \quad \quad \textbf{return}
\STATE \quad \textbf{end if}
\STATE \quad \textbf{for} $j = m$ \textbf{down to} 1 \textbf{do}
\STATE \quad \quad  $e_j = (v_j, u, t_j)$
\STATE \quad \quad Insert $(v_j,t_j)$, $(u,t_j)$ and $(e_j,t_j)$ as children of $(u,t)$ in $\mathcal T_{\text{memory}}$ 
\STATE \quad \quad  \textbf{if} $f_{\textnormal{update}}$ is the GRU function \textbf{then}
\STATE \quad \quad \quad  Compute $\mathbf{R}\!\left(\mathbf{s}_{v_j}^{t_j-} \right)$, $\mathbf{R}\!\left(\mathbf{s}_{u}^{t_j-} \right)$, $\mathbf{R}\!\left(\mathbf{x}_{e_j} \right)$, and $\mathbf{R}\!\left(t-t_j \right)$ via Eq.~(\ref{eq:lrp_gru})
\STATE \quad \quad  \textbf{else if} $f_{\textnormal{update}}$ is the RNN function \textbf{then}
\STATE \quad \quad \quad  Compute $\mathbf{R}\!\left(\mathbf{s}_{v_j}^{t_j-} \right)$, $\mathbf{R}\!\left(\mathbf{s}_{u}^{t_j-} \right)$, $\mathbf{R}\!\left(\mathbf{x}_{e_j} \right)$, and $\mathbf{R}\!\left(t-t_j \right)$ via Eq.~(\ref{eq:lrp_rnn})
\STATE \quad \quad  \textbf{end if}
\STATE \quad \quad  $\rmC_{u}^t[e_j] \gets \rmC_{u}^t[e_j] + \mathbf{1}^\top \mathbf{R}(\rvx_{e_j} )+ \mathbf{1}^\top  \mathbf{R}(\phi(t-t_j))$
\STATE \quad \quad \textbf{procedure DFS} $\left(\text{node: } v_j, \text{timestamp: } t_j,\, \text{layer: } T_l+1,\text{contribution matrix: } \mathbf{R}(\mathbf{s}_{v_j}^{t_j-})\right)$
\STATE \quad \quad \textbf{procedure DFS} $\left(\text{node: } u, \text{timestamp: } t_j,\, \text{layer: } T_l+1,\text{contribution matrix: } \mathbf{R}(\mathbf{s}_{u}^{t_j-})\right)$
\STATE \quad \textbf{end for}
\STATE \textbf{end procedure}
\STATE \textbf{procedure DFS} $\left(\text{node: } u, \text{timestamp: } t,\, \text{layer: } 0 ,\text{contribution matrix: } \rmM^t_{u\to u} \right)$
\STATE \textbf{Output:} $\mathcal T_{\text{memory}}(u,t)$, contributions $\rmC^{t}_{u}$ 
\end{algorithmic} 
\end{algorithm}

\subsection{Dataset}
\label{app:datasets}
We select several real-world temporal graph datasets for link prediction. These datasets cover a wide range of real-world applications and domains, including social networks,
political networks, communication networks, etc. The brief introduction of the six datasets is listed as
follows. Data statistics are given in Table~\ref{tab:datasets_summary}
\begin{itemize}
    \item Wikipedia~\citep{kumar2019predicting}:  A one-month interaction network with nodes as editors and pages, and edges as timestamped posting requests.
\item Reddit~\citep{kumar2019predicting}: Records one-month activity in subreddits, with nodes as users/posts and edges as timestamped posts. Edge features are 172-dimensional LIWC vectors from edit texts.
\item Enron~\citep{shetty2004enron}: Email interaction network of ENRON employees over three years, with no attributes.
\item UCI~\citep{panzarasa2009patterns}: Unattributed social network among UCI students, tracking online forum posts with second-level timestamps.
\item tgbn-trade~\citep{huang2023temporal}: International agriculture trade network between UN nations (1986-2016), with edges representing annual trade values.
\item tgbn-genre~\citep{huang2023temporal}: Bipartite network between users and music genres, with edge weights denoting genre preferences. 
\item tgbn-reddit~\citep{huang2023temporal}: Interaction network between users and subreddits (2005-2019), with edges representing posts. 
\item HMDB51~\cite{hmdb}: a large benchmark for human action recognition with 51 action categories and ~6,766 annotated video clips collected from movies and web videos. We selected four categories: climbing, sit-ups, running, and pull-up, as the classification task.
\item Penn Action~\cite{penn}: a large-scale collection designed for human action recognition in video. It contains over 2,300 video clips with 15 different action categories, including running, jumping, and kicking. Each video is annotated with action labels, start and end times, and multiple frames per action. 
\end{itemize}

\begin{table}[ht]
\caption{The details of datasets. }
\centering
\begin{tabular}{c|c|c|c}
\toprule
\textbf{Datasets} & \textbf{Domains} & \textbf{Nodes} &  \textbf{Links}  \\ 
\midrule
Wikipedia & Social &  9,227 & 157,474 \\ 
\midrule
Reddit & Social &  10,984 & 672,447 \\ 
\midrule
Enron &  Communication & 184 & 125,235  \\ 
\midrule
UCI &  Social & 1,899 & 59,835 \\ 
\midrule
tgbn-trade & trade & 255 & 468,245\\
\midrule
tgbn-genre & interact. & 1,505 & 17,858,395\\
\midrule
tgbn-reddit & social & 11,766 & 27,174,118 \\
\bottomrule
\end{tabular}
\label{tab:datasets_summary}
\end{table}

\subsection{Running time}
\label{app:running_time}
We partition the computation time into three components: \textbf{topology time}, for constructing the topology attribution tree and computing neighbor contributions; \textbf{memory time}, for constructing the memory backtracking tree and computing contributions of historical events; and \textbf{selection time}, for solving Eq.~(\ref{eq:select edges for link}) or Eq.~(\ref{eq:select events for node}). Figure~\ref{fig:running_time} reports the average computation time on all datasets. Topology time is omitted for the enron, UCI, tgbn-trade, tgbn-genre and tgbn-reddit datasets because it accounts for less than 1\% of the total runtime and is therefore not shown in the stacked bars.

For link prediction, \textbf{memory time} is the dominant cost. It increases with the memory depth $T$ but remains practical: when $T = 10$, constructing the memory backtracking tree and computing historical contributions takes at most 5 seconds. For node property prediction, \textbf{selection time} dominates because the output is higher-dimensional than in link prediction (where the logit is one-dimensional), making the final solving step more expensive. Even so, when $T = 10$, event selection takes at most 3 seconds, which is still acceptable.

  
\begin{figure}[htbp]
    \centering
    \includegraphics[width=1\linewidth]{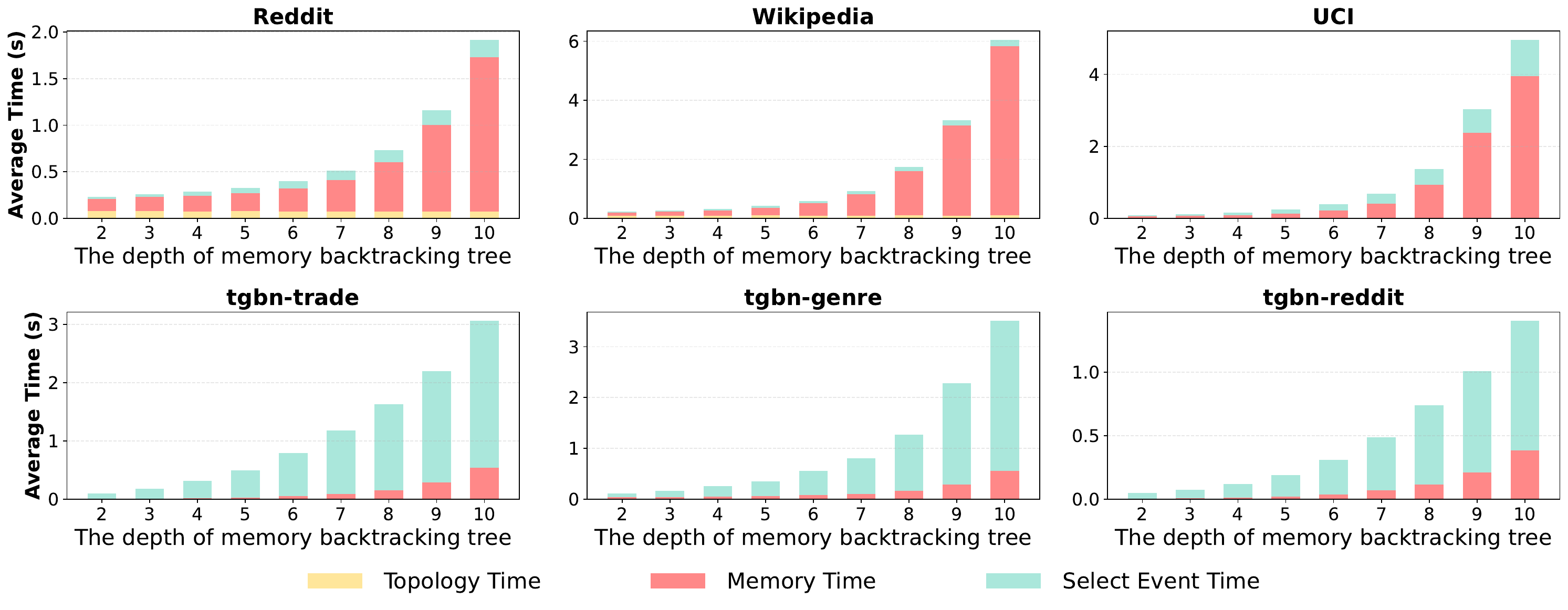}
    \caption{\small Running time decomposition: each figure represents a dataset. The x-axis represents the depth of the memory backtracking tree, and the y-axis represents the running time.}
    \label{fig:running_time}
\end{figure}

\subsection{The average and standard deviation of $\text{Fidelity}_{\text{prob}}$ and $\text{Fidelity}_{\text{KL}}$.}
\label{app:t-test}
Tables \ref{tab:statistics_kl} and \ref{tab:statistics_prob} report the mean and standard deviation of $\text{Fidelity}_{\text{KL}}$ and $\text{Fidelity}_{\text{prob}}$ corresponding to Figures \ref{fig:kl} and \ref{fig:prob}. We further performed a t-test between our method and the second-best baseline, and found statistically significant in 77\% of the cases for $\text{Fidelity}_{\text{KL}}$ and in 74\% of the cases for $\text{Fidelity}_{\text{prob}}$.

\begin{table}[htbp]
\centering
\caption{\small The average and standard deviation of $\text{Fidelity}_{\text{KL}}$ under different sparsity level. $*$ indicates that our results are significantly different from the runner-up baseline under a t-test.}
\label{tab:statistics_kl}. 
\resizebox{\textwidth}{!}{
\begin{tabular}{cc*{8}{c}}
\toprule
Dataset & Ratio &MemExplainer & TempME & TGNNExplainer & GNNExplainer & PGExplainer & w/o memory  & w/o topology & w/o selection \\
\midrule
\multirow{5}{*}{Wikipedia}  & 0.02 & \textbf{0.059}$\pm$0.113$^*$ & 0.103$\pm$0.216 & 0.120$\pm$0.236 & 0.146$\pm$0.294 & 0.110$\pm$0.193 & 0.079$\pm$0.132 & 0.088$\pm$0.162 & 0.104$\pm$0.234 \\
   & 0.04 & \textbf{0.056}$\pm$0.104$^*$ & 0.095$\pm$0.186 & 0.124$\pm$0.235 & 0.141$\pm$0.280 & 0.105$\pm$0.188 & 0.079$\pm$0.142 & 0.083$\pm$0.168 & 0.098$\pm$0.239 \\
   & 0.06 & \textbf{0.058}$\pm$0.114 & 0.090$\pm$0.182 & 0.128$\pm$0.270 & 0.142$\pm$0.307 & 0.105$\pm$0.192 & 0.077$\pm$0.142 & 0.070$\pm$0.142 & 0.094$\pm$0.236 \\
   & 0.08 & \textbf{0.059}$\pm$0.116 & 0.081$\pm$0.158 & 0.119$\pm$0.265 & 0.150$\pm$0.313 & 0.107$\pm$0.189 & 0.072$\pm$0.133 & 0.070$\pm$0.143 & 0.091$\pm$0.226 \\
   & 0.10 & \textbf{0.060}$\pm$0.119 & 0.077$\pm$0.149 & 0.123$\pm$0.251 & 0.146$\pm$0.337 & 0.120$\pm$0.237 & 0.072$\pm$0.142 & 0.068$\pm$0.140 & 0.088$\pm$0.221 \\
\midrule
\multirow{5}{*}{Reddit}  & 0.02 & \textbf{0.164}$\pm$0.257$^*$ & 0.341$\pm$0.501 & 0.494$\pm$0.553 & 0.436$\pm$0.534 & 0.534$\pm$0.570 & 0.359$\pm$0.439 & 0.583$\pm$0.628 & 0.285$\pm$0.378 \\
   & 0.04 & \textbf{0.158}$\pm$0.252$^*$ & 0.325$\pm$0.516 & 0.472$\pm$0.574 & 0.428$\pm$0.549 & 0.506$\pm$0.560 & 0.315$\pm$0.414 & 0.801$\pm$0.675 & 0.242$\pm$0.353 \\
   & 0.06 & \textbf{0.150}$\pm$0.258$^*$ & 0.316$\pm$0.584 & 0.445$\pm$0.555 & 0.418$\pm$0.547 & 0.474$\pm$0.542 & 0.303$\pm$0.407 & 0.942$\pm$0.665 & 0.219$\pm$0.331 \\
   & 0.08 & \textbf{0.139}$\pm$0.253$^*$ & 0.297$\pm$0.586 & 0.425$\pm$0.529 & 0.391$\pm$0.512 & 0.458$\pm$0.536 & 0.303$\pm$0.446 & 0.942$\pm$0.665 & 0.193$\pm$0.276 \\
   & 0.10 & \textbf{0.132}$\pm$0.260$^*$ & 0.264$\pm$0.509 & 0.397$\pm$0.480 & 0.387$\pm$0.527 & 0.439$\pm$0.529 & 0.291$\pm$0.410 & 0.942$\pm$0.665 & 0.176$\pm$0.286 \\
\midrule
\multirow{5}{*}{UCI}  & 0.02 & \textbf{0.060}$\pm$0.095$^*$ & 0.083$\pm$0.116 & 0.095$\pm$0.135 & 0.095$\pm$0.132 & 0.097$\pm$0.125 & 0.080$\pm$0.102 & 0.071$\pm$0.115 & 0.072$\pm$0.105 \\
   & 0.04 & \textbf{0.049}$\pm$0.082$^*$ & 0.077$\pm$0.117 & 0.096$\pm$0.141 & 0.097$\pm$0.143 & 0.094$\pm$0.135 & 0.073$\pm$0.098 & 0.075$\pm$0.114 & 0.059$\pm$0.101 \\
   & 0.06 & \textbf{0.044}$\pm$0.078 & 0.069$\pm$0.104 & 0.095$\pm$0.146 & 0.095$\pm$0.150 & 0.095$\pm$0.148 & 0.069$\pm$0.095 & 0.084$\pm$0.105 & 0.052$\pm$0.095 \\
   & 0.08 & \textbf{0.041}$\pm$0.079 & 0.066$\pm$0.102 & 0.092$\pm$0.143 & 0.094$\pm$0.149 & 0.093$\pm$0.144 & 0.070$\pm$0.096 & 0.089$\pm$0.109 & 0.048$\pm$0.097 \\
   & 0.10 & \textbf{0.039}$\pm$0.076 & 0.064$\pm$0.095 & 0.093$\pm$0.145 & 0.092$\pm$0.147 & 0.094$\pm$0.150 & 0.074$\pm$0.096 & 0.095$\pm$0.108 & 0.041$\pm$0.081 \\
\midrule
\multirow{5}{*}{Enron}  & 0.02 & \textbf{0.073}$\pm$0.160$^*$ & 0.138$\pm$0.230 & 0.204$\pm$0.218 & 0.187$\pm$0.220 & 0.201$\pm$0.213 & 0.177$\pm$0.225 & 0.120$\pm$0.212 & 0.146$\pm$0.224 \\
   & 0.04 & \textbf{0.071}$\pm$0.164$^*$ & 0.122$\pm$0.225 & 0.196$\pm$0.225 & 0.183$\pm$0.219 & 0.201$\pm$0.215 & 0.177$\pm$0.227 & 0.101$\pm$0.212 & 0.129$\pm$0.228 \\
   & 0.06 & \textbf{0.070}$\pm$0.140$^*$ & 0.111$\pm$0.223 & 0.185$\pm$0.225 & 0.183$\pm$0.226 & 0.196$\pm$0.235 & 0.177$\pm$0.245 & 0.085$\pm$0.188 & 0.107$\pm$0.228 \\
   & 0.08 & \textbf{0.065}$\pm$0.137$^*$ & 0.105$\pm$0.229 & 0.183$\pm$0.228 & 0.181$\pm$0.228 & 0.195$\pm$0.236 & 0.180$\pm$0.245 & 0.078$\pm$0.214 & 0.101$\pm$0.260 \\
   & 0.10 & \textbf{0.070}$\pm$0.143 & 0.100$\pm$0.227 & 0.180$\pm$0.230 & 0.179$\pm$0.228 & 0.197$\pm$0.237 & 0.184$\pm$0.251 & 0.076$\pm$0.215 & 0.099$\pm$0.261 \\
\midrule
\multirow{5}{*}{tgbn-trade}  & 0.02 & \textbf{0.153}$\pm$0.063 & 0.165$\pm$0.052 & 0.203$\pm$0.071 & 0.198$\pm$0.067 & 0.197$\pm$0.065 & 0.155$\pm$0.059 & 0.171$\pm$0.066 & 0.181$\pm$0.063 \\
   & 0.04 & \textbf{0.143}$\pm$0.055$^*$ & 0.161$\pm$0.049 & 0.203$\pm$0.071 & 0.198$\pm$0.067 & 0.197$\pm$0.065 & 0.148$\pm$0.055 & 0.165$\pm$0.065 & 0.177$\pm$0.063 \\
   & 0.06 & \textbf{0.137}$\pm$0.053$^*$ & 0.158$\pm$0.046 & 0.203$\pm$0.071 & 0.197$\pm$0.066 & 0.196$\pm$0.064 & 0.142$\pm$0.055 & 0.158$\pm$0.060 & 0.173$\pm$0.061 \\
   & 0.08 & \textbf{0.130}$\pm$0.048$^*$ & 0.155$\pm$0.044 & 0.203$\pm$0.071 & 0.196$\pm$0.064 & 0.195$\pm$0.062 & 0.137$\pm$0.056 & 0.151$\pm$0.055 & 0.169$\pm$0.061 \\
   & 0.10 & \textbf{0.125}$\pm$0.046$^*$ & 0.152$\pm$0.044 & 0.203$\pm$0.071 & 0.196$\pm$0.063 & 0.193$\pm$0.059 & 0.131$\pm$0.057 & 0.144$\pm$0.055 & 0.166$\pm$0.061 \\
\midrule
\multirow{5}{*}{tgbn-genre}  & 0.02 & \textbf{0.638}$\pm$0.133$^*$ & 0.705$\pm$0.095 & 0.745$\pm$0.103 & 0.751$\pm$0.099 & 0.760$\pm$0.119 & 0.714$\pm$0.122 & 0.671$\pm$0.114 & 0.729$\pm$0.118 \\
   & 0.04 & \textbf{0.614}$\pm$0.132$^*$ & 0.687$\pm$0.101 & 0.751$\pm$0.103 & 0.751$\pm$0.097 & 0.754$\pm$0.119 & 0.709$\pm$0.123 & 0.656$\pm$0.120 & 0.721$\pm$0.124 \\
   & 0.06 & \textbf{0.604}$\pm$0.133$^*$ & 0.678$\pm$0.104 & 0.744$\pm$0.104 & 0.750$\pm$0.096 & 0.752$\pm$0.117 & 0.706$\pm$0.125 & 0.645$\pm$0.121 & 0.705$\pm$0.129 \\
   & 0.08 & \textbf{0.589}$\pm$0.128$^*$ & 0.666$\pm$0.107 & 0.738$\pm$0.107 & 0.749$\pm$0.094 & 0.752$\pm$0.117 & 0.694$\pm$0.126 & 0.628$\pm$0.121 & 0.700$\pm$0.129 \\
   & 0.10 & \textbf{0.577}$\pm$0.125$^*$ & 0.659$\pm$0.106 & 0.735$\pm$0.107 & 0.748$\pm$0.094 & 0.755$\pm$0.119 & 0.690$\pm$0.127 & 0.618$\pm$0.118 & 0.694$\pm$0.130 \\
\midrule
\multirow{5}{*}{tgbn-reddit}  & 0.02 & \textbf{0.682}$\pm$0.414$^*$ & 1.090$\pm$0.727 & 1.732$\pm$0.686 & 1.535$\pm$0.674 & 1.872$\pm$0.769 & 0.850$\pm$0.676 & 0.858$\pm$0.610 & 0.979$\pm$0.525 \\
   & 0.04 & \textbf{0.565}$\pm$0.368$^*$ & 0.967$\pm$0.714 & 1.722$\pm$0.721 & 1.693$\pm$0.721 & 1.871$\pm$0.766 & 0.773$\pm$0.659 & 0.734$\pm$0.577 & 0.912$\pm$0.491 \\
   & 0.06 & \textbf{0.470}$\pm$0.325$^*$ & 0.868$\pm$0.703 & 1.768$\pm$0.729 & 1.708$\pm$0.719 & 1.869$\pm$0.761 & 0.719$\pm$0.663 & 0.635$\pm$0.536 & 0.835$\pm$0.454 \\
   & 0.08 & \textbf{0.406}$\pm$0.291$^*$ & 0.788$\pm$0.695 & 1.825$\pm$0.755 & 1.722$\pm$0.722 & 1.834$\pm$0.745 & 0.672$\pm$0.651 & 0.548$\pm$0.504 & 0.799$\pm$0.447 \\
   & 0.10 & \textbf{0.341}$\pm$0.254$^*$ & 0.726$\pm$0.693 & 1.766$\pm$0.773 & 1.767$\pm$0.727 & 1.806$\pm$0.731 & 0.647$\pm$0.646 & 0.477$\pm$0.457 & 0.748$\pm$0.434 \\

\bottomrule
\end{tabular}}
\end{table}

\begin{table}[htbp]
\centering
\caption{\small The average and standard deviation of $\text{Fidelity}_{\text{prob}}$ under different sparsity level. $*$ indicates that our results are significantly different from the runner-up baseline under a t-test.}
\label{tab:statistics_prob}
\resizebox{\textwidth}{!}{
\begin{tabular}{cc*{8}{c}}
\toprule
Dataset & Ratio & MemExplainer & TempME & TGNNExplainer & GNNExplainer & PGExplainer & w/o memory  & w/o topology & w/o selection \\
\midrule
\multirow{5}{*}{Wikipedia}  & 0.02 & \textbf{0.085}$\pm$0.108$^*$ & 0.122$\pm$0.144 & 0.131$\pm$0.150 & 0.144$\pm$0.164 & 0.129$\pm$0.142 & 0.106$\pm$0.114 & 0.110$\pm$0.127 & 0.116$\pm$0.147 \\
   & 0.04 & \textbf{0.084}$\pm$0.105$^*$ & 0.118$\pm$0.140 & 0.133$\pm$0.149 & 0.141$\pm$0.157 & 0.123$\pm$0.139 & 0.104$\pm$0.112 & 0.105$\pm$0.128 & 0.108$\pm$0.144 \\
   & 0.06 & \textbf{0.083}$\pm$0.106 & 0.113$\pm$0.136 & 0.130$\pm$0.153 & 0.138$\pm$0.160 & 0.121$\pm$0.140 & 0.101$\pm$0.114 & 0.094$\pm$0.119 & 0.106$\pm$0.141 \\
   & 0.08 & \textbf{0.083}$\pm$0.106 & 0.107$\pm$0.129 & 0.123$\pm$0.147 & 0.139$\pm$0.164 & 0.123$\pm$0.141 & 0.098$\pm$0.108 & 0.091$\pm$0.122 & 0.103$\pm$0.139 \\
   & 0.10 & \textbf{0.084}$\pm$0.108 & 0.103$\pm$0.125 & 0.124$\pm$0.147 & 0.136$\pm$0.157 & 0.126$\pm$0.147 & 0.096$\pm$0.109 & 0.090$\pm$0.121 & 0.100$\pm$0.137 \\
\midrule
\multirow{5}{*}{Reddit}  & 0.02 & \textbf{0.181}$\pm$0.173$^*$ & 0.279$\pm$0.228 & 0.354$\pm$0.249 & 0.321$\pm$0.249 & 0.373$\pm$0.256 & 0.293$\pm$0.242 & 0.379$\pm$0.285 & 0.260$\pm$0.215 \\
   & 0.04 & \textbf{0.175}$\pm$0.171$^*$ & 0.264$\pm$0.229 & 0.335$\pm$0.252 & 0.312$\pm$0.252 & 0.357$\pm$0.255 & 0.268$\pm$0.231 & 0.474$\pm$0.300 & 0.232$\pm$0.203 \\
   & 0.06 & \textbf{0.169}$\pm$0.167$^*$ & 0.247$\pm$0.228 & 0.320$\pm$0.254 & 0.305$\pm$0.250 & 0.339$\pm$0.256 & 0.256$\pm$0.231 & 0.539$\pm$0.285 & 0.218$\pm$0.192 \\
   & 0.08 & \textbf{0.159}$\pm$0.162$^*$ & 0.232$\pm$0.224 & 0.310$\pm$0.253 & 0.292$\pm$0.248 & 0.330$\pm$0.256 & 0.248$\pm$0.234 & 0.539$\pm$0.285 & 0.204$\pm$0.183 \\
   & 0.10 & \textbf{0.152}$\pm$0.158$^*$ & 0.219$\pm$0.212 & 0.299$\pm$0.245 & 0.287$\pm$0.249 & 0.319$\pm$0.255 & 0.245$\pm$0.233 & 0.539$\pm$0.285 & 0.188$\pm$0.181 \\
\midrule
\multirow{5}{*}{Enron}  & 0.02 & \textbf{0.115}$\pm$0.127$^*$ & 0.192$\pm$0.133 & 0.252$\pm$0.141 & 0.234$\pm$0.144 & 0.250$\pm$0.141 & 0.226$\pm$0.151 & 0.180$\pm$0.121 & 0.203$\pm$0.133 \\
   & 0.04 & \textbf{0.111}$\pm$0.124$^*$ & 0.173$\pm$0.134 & 0.242$\pm$0.146 & 0.229$\pm$0.146 & 0.248$\pm$0.145 & 0.223$\pm$0.154 & 0.151$\pm$0.123 & 0.178$\pm$0.138 \\
   & 0.06 & \textbf{0.117}$\pm$0.117$^*$ & 0.159$\pm$0.131 & 0.231$\pm$0.147 & 0.226$\pm$0.152 & 0.240$\pm$0.149 & 0.219$\pm$0.159 & 0.135$\pm$0.116 & 0.150$\pm$0.137 \\
   & 0.08 & \textbf{0.104}$\pm$0.118$^*$ & 0.149$\pm$0.133 & 0.226$\pm$0.153 & 0.223$\pm$0.154 & 0.239$\pm$0.150 & 0.220$\pm$0.161 & 0.119$\pm$0.121 & 0.134$\pm$0.142 \\
   & 0.10 & \textbf{0.111}$\pm$0.121 & 0.141$\pm$0.134 & 0.220$\pm$0.156 & 0.220$\pm$0.156 & 0.238$\pm$0.154 & 0.218$\pm$0.169 & 0.116$\pm$0.124 & 0.128$\pm$0.142 \\
\midrule
\multirow{5}{*}{UCI}  & 0.02 & \textbf{0.100}$\pm$0.101$^*$ & 0.127$\pm$0.116 & 0.135$\pm$0.121 & 0.136$\pm$0.122 & 0.140$\pm$0.120 & 0.125$\pm$0.104 & 0.111$\pm$0.112 & 0.115$\pm$0.108 \\
   & 0.04 & \textbf{0.086}$\pm$0.091$^*$ & 0.119$\pm$0.117 & 0.133$\pm$0.126 & 0.133$\pm$0.128 & 0.133$\pm$0.123 & 0.115$\pm$0.102 & 0.117$\pm$0.114 & 0.096$\pm$0.103 \\
   & 0.06 & \textbf{0.081}$\pm$0.087 & 0.109$\pm$0.112 & 0.129$\pm$0.127 & 0.128$\pm$0.129 & 0.129$\pm$0.127 & 0.110$\pm$0.101 & 0.130$\pm$0.113 & 0.087$\pm$0.097 \\
   & 0.08 & \textbf{0.075}$\pm$0.085 & 0.105$\pm$0.111 & 0.125$\pm$0.125 & 0.126$\pm$0.129 & 0.126$\pm$0.127 & 0.110$\pm$0.102 & 0.137$\pm$0.114 & 0.079$\pm$0.096 \\
   & 0.10 & \textbf{0.073}$\pm$0.083 & 0.103$\pm$0.109 & 0.126$\pm$0.127 & 0.124$\pm$0.127 & 0.125$\pm$0.128 & 0.117$\pm$0.105 & 0.145$\pm$0.111 & 0.073$\pm$0.087 \\
\midrule
\multirow{5}{*}{tgbn-trade}  & 0.02 & \textbf{0.072}$\pm$0.042 & 0.074$\pm$0.043 & 0.077$\pm$0.047 & 0.076$\pm$0.046 & 0.076$\pm$0.045 & 0.073$\pm$0.043 & 0.075$\pm$0.042 & 0.077$\pm$0.045 \\
   & 0.04 & \textbf{0.070}$\pm$0.038$^*$ & 0.073$\pm$0.042 & 0.077$\pm$0.047 & 0.076$\pm$0.046 & 0.076$\pm$0.045 & 0.072$\pm$0.043 & 0.074$\pm$0.041 & 0.076$\pm$0.045 \\
   & 0.06 & \textbf{0.069}$\pm$0.037$^*$ & 0.072$\pm$0.041 & 0.077$\pm$0.047 & 0.076$\pm$0.045 & 0.076$\pm$0.045 & 0.071$\pm$0.043 & 0.072$\pm$0.040 & 0.076$\pm$0.044 \\
   & 0.08 & \textbf{0.067}$\pm$0.035$^*$ & 0.072$\pm$0.040 & 0.077$\pm$0.047 & 0.076$\pm$0.045 & 0.076$\pm$0.044 & 0.070$\pm$0.043 & 0.071$\pm$0.037 & 0.075$\pm$0.044 \\
   & 0.10 & \textbf{0.066}$\pm$0.034$^*$ & 0.071$\pm$0.040 & 0.077$\pm$0.047 & 0.076$\pm$0.045 & 0.076$\pm$0.044 & 0.069$\pm$0.043 & 0.070$\pm$0.037 & 0.075$\pm$0.044 \\
\midrule
\multirow{5}{*}{tgbn-genre}  & 0.02 & \textbf{0.115}$\pm$0.017$^*$ & 0.118$\pm$0.017 & 0.119$\pm$0.017 & 0.119$\pm$0.017 & 0.120$\pm$0.018 & 0.118$\pm$0.018 & 0.117$\pm$0.017 & 0.119$\pm$0.018 \\
   & 0.04 & \textbf{0.114}$\pm$0.017$^*$ & 0.117$\pm$0.016 & 0.119$\pm$0.017 & 0.119$\pm$0.017 & 0.119$\pm$0.018 & 0.118$\pm$0.018 & 0.116$\pm$0.018 & 0.118$\pm$0.018 \\
   & 0.06 & \textbf{0.113}$\pm$0.017$^*$ & 0.117$\pm$0.016 & 0.119$\pm$0.017 & 0.119$\pm$0.017 & 0.119$\pm$0.018 & 0.118$\pm$0.018 & 0.115$\pm$0.018 & 0.118$\pm$0.018 \\
   & 0.08 & \textbf{0.113}$\pm$0.016$^*$ & 0.116$\pm$0.016 & 0.119$\pm$0.017 & 0.119$\pm$0.017 & 0.119$\pm$0.018 & 0.117$\pm$0.018 & 0.115$\pm$0.018 & 0.118$\pm$0.018 \\
   & 0.10 & \textbf{0.112}$\pm$0.016$^*$ & 0.116$\pm$0.016 & 0.118$\pm$0.017 & 0.119$\pm$0.017 & 0.119$\pm$0.018 & 0.117$\pm$0.018 & 0.114$\pm$0.018 & 0.117$\pm$0.018 \\
\midrule
\multirow{5}{*}{tgbn-reddit}  & 0.02 & \textbf{0.245}$\pm$0.088 & 0.295$\pm$0.108 & 0.374$\pm$0.073 & 0.357$\pm$0.078 & 0.380$\pm$0.074 & 0.255$\pm$0.114 & 0.265$\pm$0.103 & 0.296$\pm$0.085 \\
   & 0.04 & \textbf{0.221}$\pm$0.086$^*$ & 0.276$\pm$0.113 & 0.371$\pm$0.076 & 0.368$\pm$0.078 & 0.380$\pm$0.073 & 0.239$\pm$0.118 & 0.243$\pm$0.103 & 0.287$\pm$0.084 \\
   & 0.06 & \textbf{0.199}$\pm$0.082$^*$ & 0.257$\pm$0.116 & 0.374$\pm$0.076 & 0.370$\pm$0.077 & 0.380$\pm$0.074 & 0.227$\pm$0.121 & 0.224$\pm$0.103 & 0.276$\pm$0.084 \\
   & 0.08 & \textbf{0.183}$\pm$0.078$^*$ & 0.241$\pm$0.120 & 0.377$\pm$0.075 & 0.371$\pm$0.076 & 0.379$\pm$0.074 & 0.216$\pm$0.123 & 0.204$\pm$0.103 & 0.270$\pm$0.085 \\
   & 0.10 & \textbf{0.165}$\pm$0.073$^*$ & 0.226$\pm$0.123 & 0.371$\pm$0.078 & 0.375$\pm$0.075 & 0.377$\pm$0.074 & 0.210$\pm$0.123 & 0.189$\pm$0.100 & 0.261$\pm$0.085 \\
\bottomrule
\end{tabular}}
\end{table}

\subsection{The performance of different $f_{\text{emb}}$ and $f_{\text{update}}$ in TGNs}
\label{app:different functions in tgn}
In Figure \ref{fig:kl_attention} and Figure \ref{fig:prob_attention}, we show the performance of $\text{Fidelity}_{\text{prob}}$ and $\text{Fidelity}_{\text{KL}}$ when the $f_{\text{emb}}$ is graph attention model. In Figure \ref{fig:kl_rnn} and Figure \ref{fig:prob_rnn}, we report $\text{Fidelity}_{\text{prob}}$ and $\text{Fidelity}_{\text{KL}}$ for the setting where the memory updater $f_{\text{update}}$ is implemented as an RNN. Our method, MemExplainer, significantly outperform baseline explainers across all datasets for both metrics. 
\begin{figure}[htbp]
    \centering
    \includegraphics[width=1\linewidth]{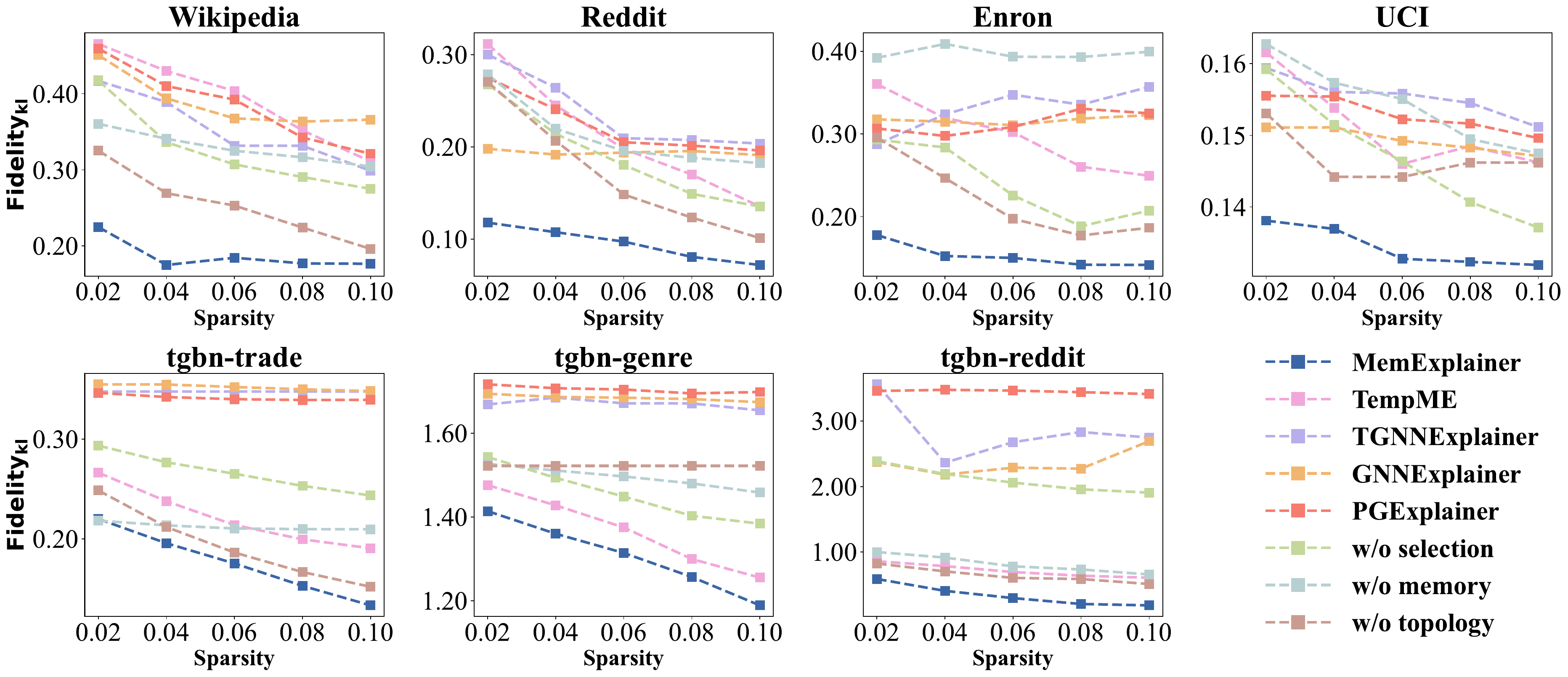}
    \caption{\small The performance of $\text{Fidelity}_{\text{KL}}$. Each figure corresponds to a different dataset. First and 
second rows represent link prediction and node property prediction, respectively. Lower value indicates better performance. The $f_{\text{emb}}$ is graph attention model.}
    \label{fig:kl_attention}
\end{figure}

\begin{figure}[htbp]
    \centering
    \includegraphics[width=1\linewidth]{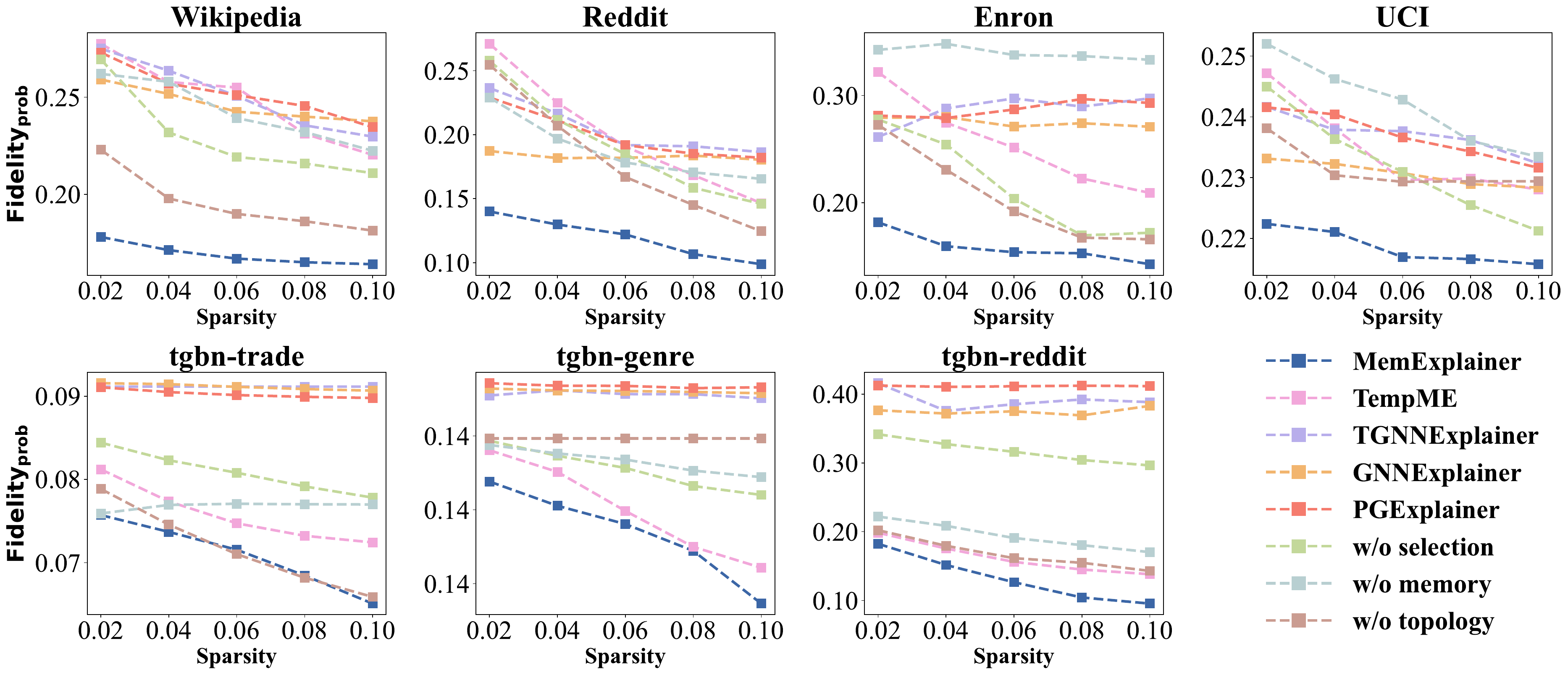}
    \caption{\small The performance of $\text{Fidelity}_{\text{prob}}$. Each figure corresponds to a different dataset. First and 
second rows represent link prediction and node property prediction, respectively. Lower value indicates better performance. The $f_{\text{emb}}$ is graph attention model.}
    \label{fig:prob_attention}
\end{figure}

\begin{figure}[htbp]
    \centering
    \includegraphics[width=1\linewidth]{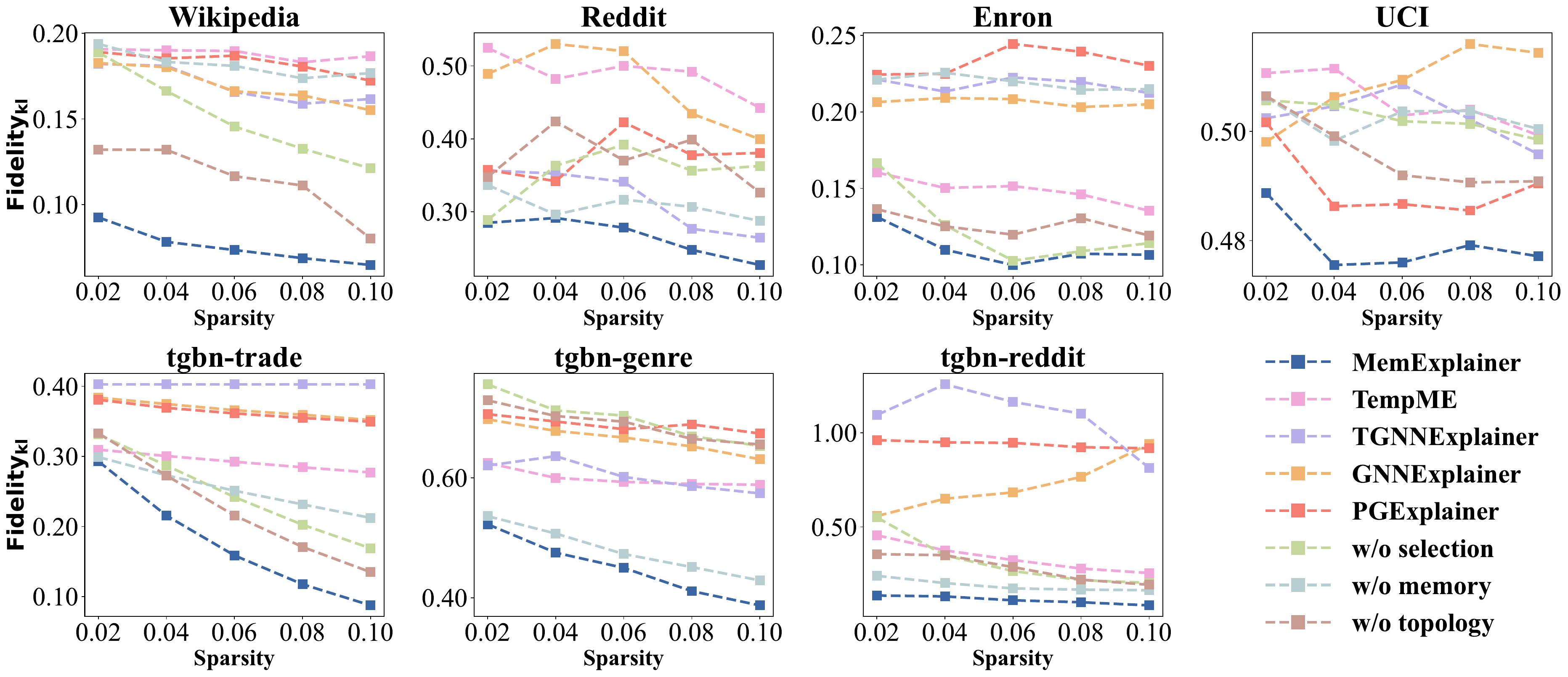}
    \caption{\small The performance of $\text{Fidelity}_{\text{KL}}$. Each figure corresponds to a different dataset. First and 
second rows represent link prediction and node property prediction, respectively. Lower value indicates better performance. The $f_{\text{update}}$ is RNN model.}
    \label{fig:kl_rnn}
\end{figure}

\begin{figure}
    \centering
    \includegraphics[width=1\linewidth]{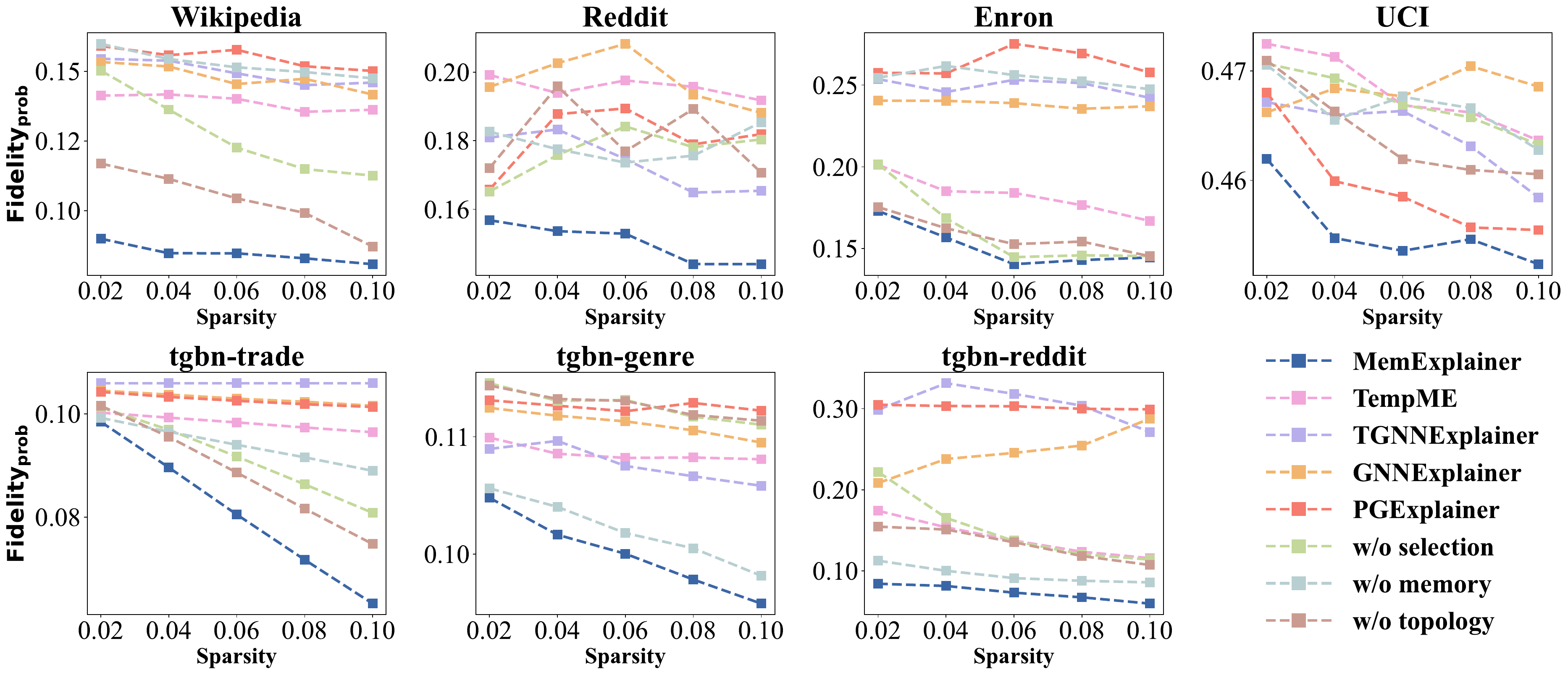}
    \caption{\small The performance of $\text{Fidelity}_{\text{prob}}$. Each figure corresponds to a different dataset. First and 
second rows represent link prediction and node property prediction, respectively. Lower value indicates better performance. The $f_{\text{update}}$ is RNN model.}
    \label{fig:prob_rnn}
\end{figure}

\subsection{Sensitivity analysis}
In Figure \ref{fig:kl_degree} and Figure \ref{fig:prob_degree}, we report $\text{Fidelity}_{\text{prob}}$ and $\text{Fidelity}_{\text{KL}}$ when the number of events from neighboring samples is fixed to $n=20$. Our method, MemExplainer, outperforms all baseline methods across all datasets on both metrics. We conduct a sensitivity analysis on the maximum depth of the memory backtracking tree. 
On the Enron, UCI, tgbn-trade, and tgbn-genre datasets, we set the maximum depth of the memory backtracking tree from 2 to 10, selecting the same number of events for each target node or edge across all depths. The performance of $\text{Fidelity}{\text{KL}}$ and $\text{Fidelity}{\text{prob}}$ is shown in Figures~\ref{fig:ablation_kl} and~\ref{fig:ablation_prob}, respectively. As shown in these figures, in tgbn-trade and tgbn-genre datasets, $\text{Fidelity}_{\text{KL}}$ and  $\text{Fidelity}_{\text{prob}}$ initially decreases and then increases with increasing maximum depth. This trend occurs because deeper backtracking allows for more accurate tracing of historical events, resulting in more faithful explanations. However, this also increases the number of candidate explanation events, making the optimization problem more complex. The expanded search space complicates the identification of relevant events, leading to the decrease in fidelity. While on the Enron and UCI datasets, the lower depth can achieve better performance. The best performing on different datasets is different, which is due to the different characteristics of the datasets. The optimal maximum depth varies across different datasets, which can be attributed to the unique characteristics of each dataset. 
\begin{figure}[htbp]
    \centering
    \includegraphics[width=1\linewidth]{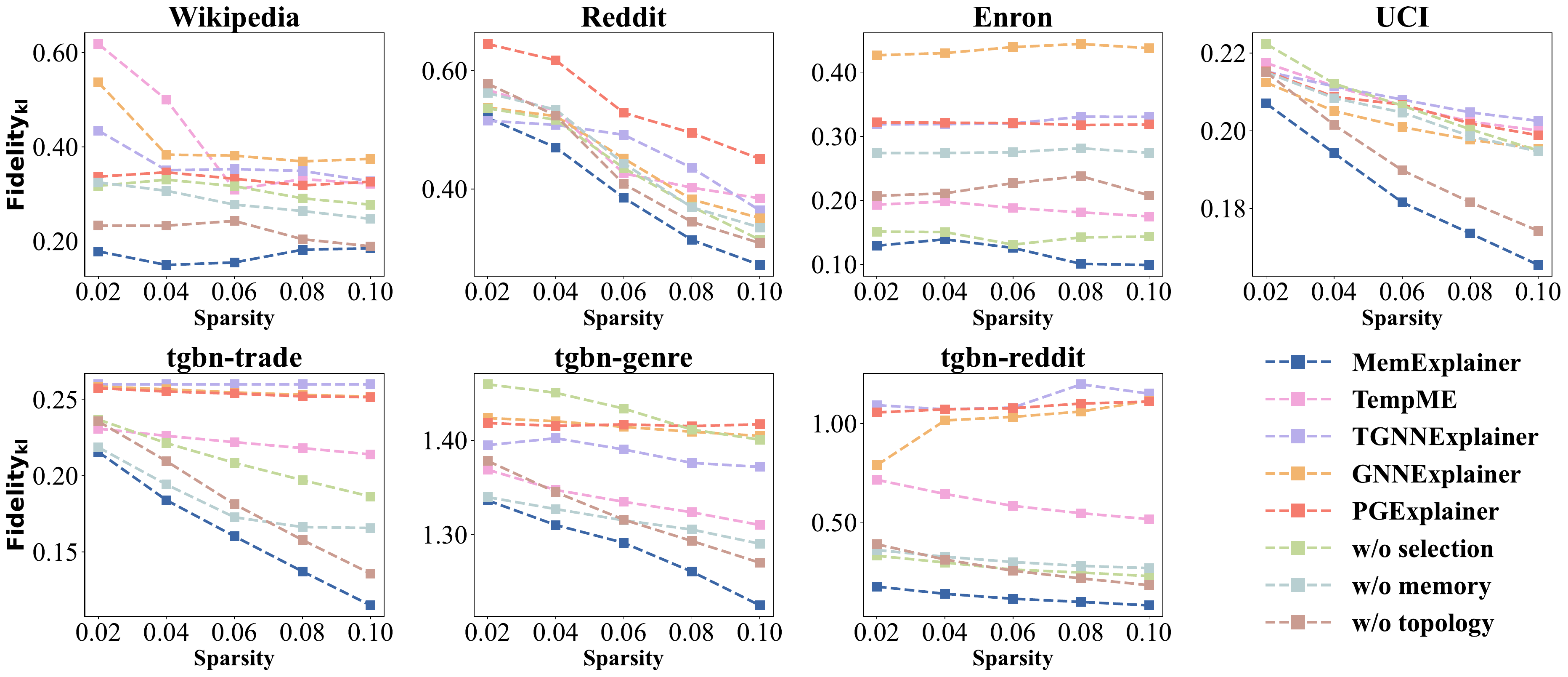}
    \caption{\small The performance of $\text{Fidelity}_{\text{KL}}$. Each figure corresponds to a different dataset. First and 
second rows represent link prediction and node property prediction, respectively. Lower value indicates better performance. The number of events from neighboring samples $n$ is 20.}
    \label{fig:kl_degree}
\end{figure}

\begin{figure}[htbp]
    \centering
    \includegraphics[width=1\linewidth]{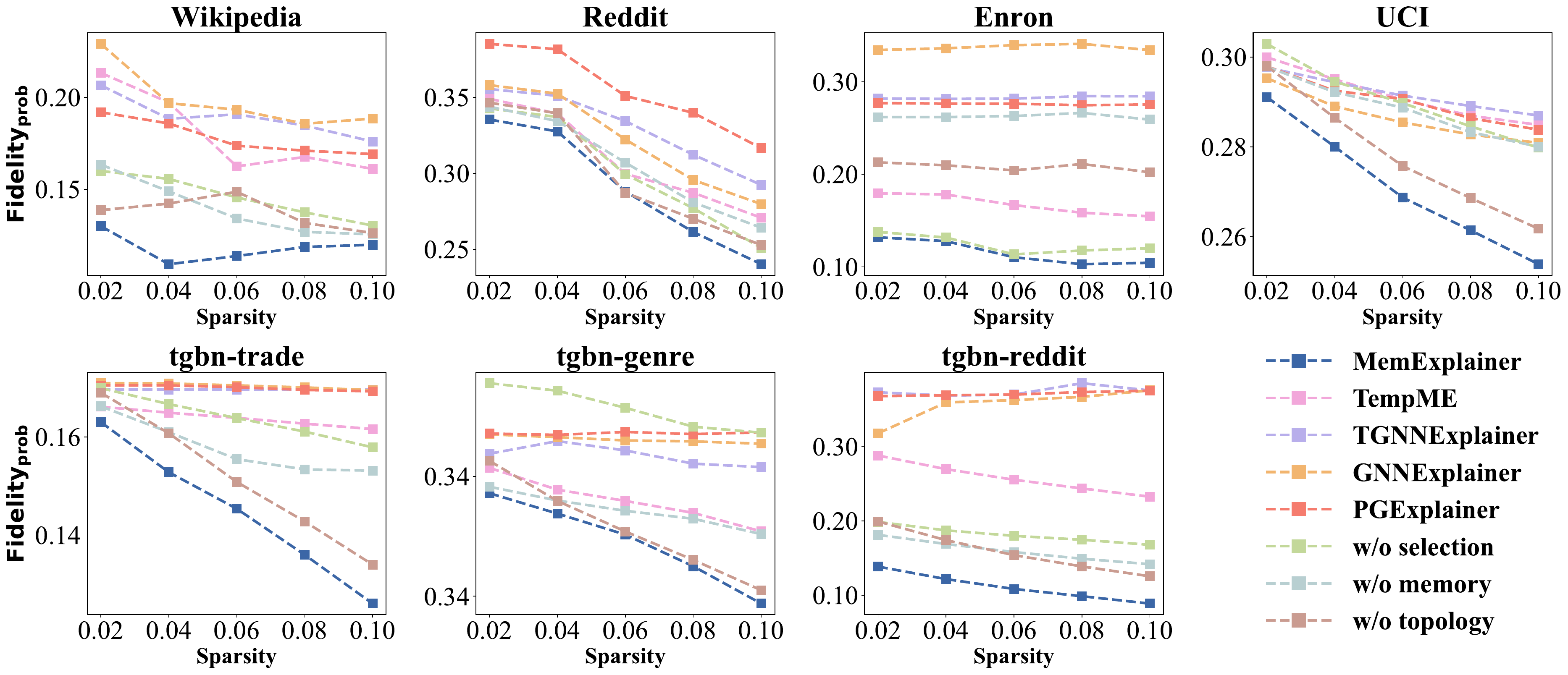}
    \caption{\small The performance of $\text{Fidelity}_{\text{prob}}$. Each figure corresponds to a different dataset. First and 
second rows represent link prediction and node property prediction, respectively. Lower value indicates better performance. The number of events from neighboring samples $n$ is 20.}
    \label{fig:prob_degree}
\end{figure}

We conduct a sensitivity analysis on the maximum depth of the memory backtracking tree. 
On the Enron, UCI, tgbn-trade, and tgbn-genre datasets, we set the maximum depth of the memory backtracking tree from 2 to 10, selecting the same number of events for each target node or edge across all depths. The performance of $\text{Fidelity}{\text{KL}}$ and $\text{Fidelity}{\text{prob}}$ is shown in Figures~\ref{fig:ablation_kl} and~\ref{fig:ablation_prob}, respectively. As shown in these figures, in tgbn-trade and tgbn-genre datasets, $\text{Fidelity}_{\text{KL}}$ and  $\text{Fidelity}_{\text{prob}}$ initially decreases and then increases with increasing maximum depth. This trend occurs because deeper backtracking allows for more accurate tracing of historical events, resulting in more faithful explanations. However, this also increases the number of candidate explanation events, making the optimization problem more complex. The expanded search space complicates the identification of relevant events, leading to the decrease in fidelity. While on the Enron and UCI datasets, the lower depth can achieve better performance. The best performing on different datasets is different, which is due to the different characteristics of the datasets. The optimal maximum depth varies across different datasets, which can be attributed to the unique characteristics of each dataset. 
\begin{figure}[htbp]
    \centering
    \includegraphics[width=1\linewidth]{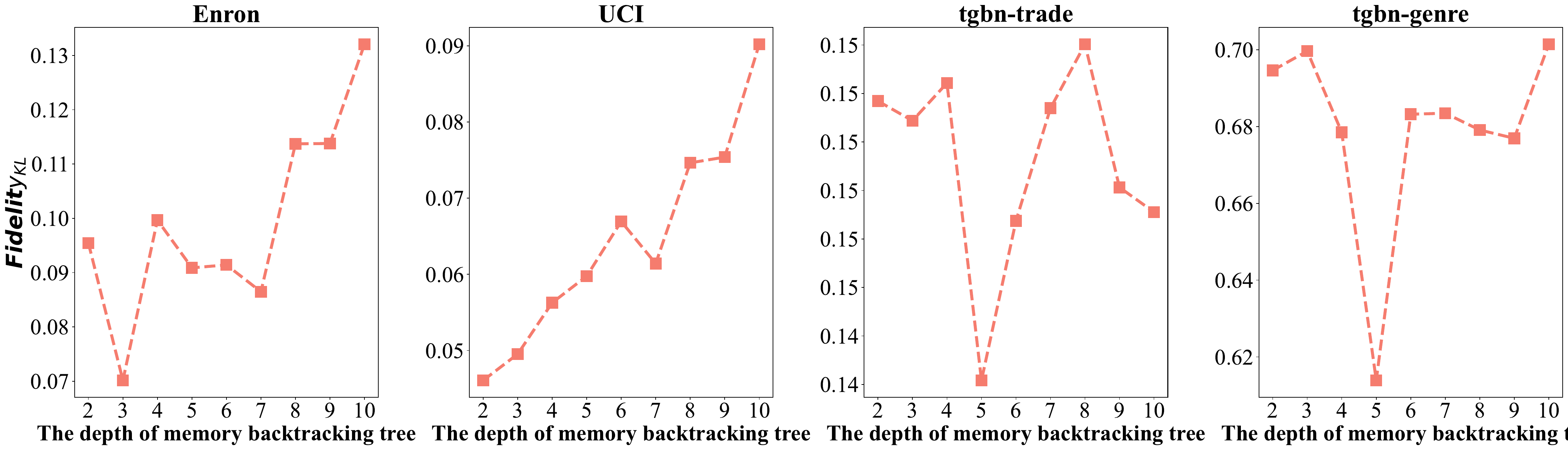}
    \caption{\small Performance of $\text{Fidelity}_{\text{KL}}$ at different maximum depths of the memory backtracking tree, with each figure representing a different dataset.}
    \label{fig:ablation_kl}
\end{figure}

\begin{figure}[htbp]
    \centering
    \includegraphics[width=1\linewidth]{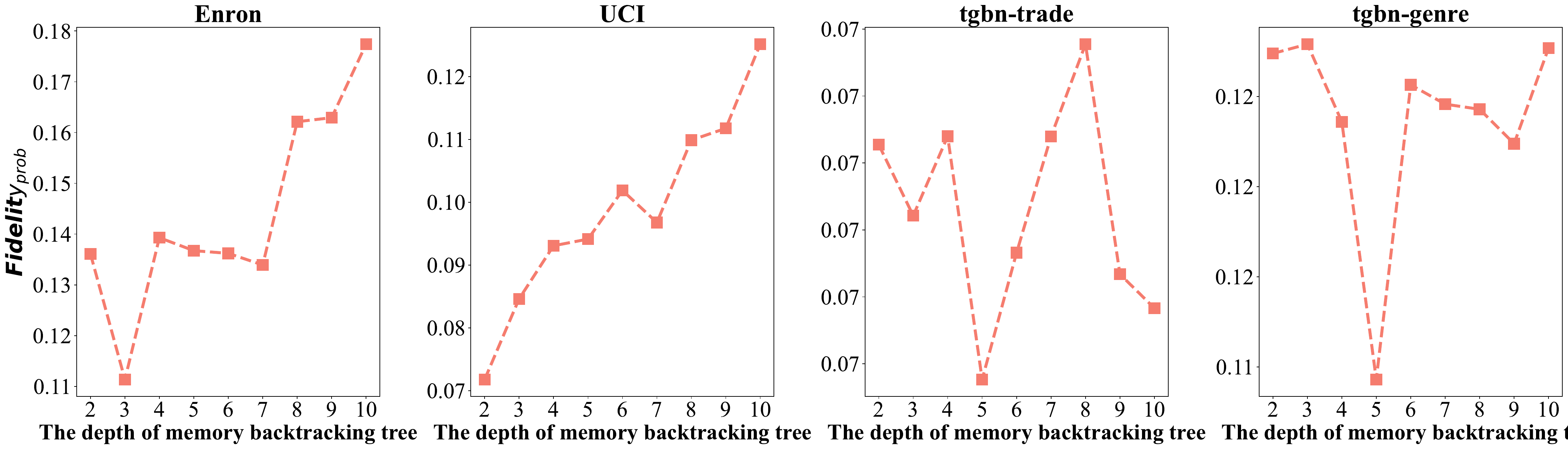}
    \caption{\small Performance of $\text{Fidelity}_{\text{prob}}$ at different maximum depths of the memory backtracking tree, with each figure representing a different dataset.}
    \label{fig:ablation_prob}
\end{figure}

\subsection{Case study}
\label{app:case_study}
We show the explanation results on pose-based
action classification task. Table \ref{tab:video_runs}, Table \ref{tab:video_pullup}, Table \ref{tab:video_climbs}, and Table \ref{tab:video_situps} show case studies for the pull-up, run, climb, and sit-up classes. For a fair comparison, all explanation methods operate on the same input sequence of skeleton graphs for each video. We also fix a \textbf{global edge budget}, i.e., the total number of key human-pose edges that each method can select. Since different methods select different explanation edges, these edges may appear on different frames, meaning the explanation results \textbf{may not} be frame-aligned across methods.

Our method typically select a much smaller subset of edges per frame, which leads to more visualized frames under the same total edge budget. 
For pull-ups, the yellow edges focus on the main kinematic chains (e.g., shoulder–elbow–wrist and hip–knee) during both lifting and lowering.
For running, they concentrate on the lower limb chain (hip–knee–ankle) that drives forward motion, showing when the discriminative gait pattern appears.
For climbing, our method mainly selects the supporting arm and leg that hold the body and control the center of mass, instead of uniformly highlighting the whole skeleton. For sit-ups, the selected edges lie along the torso–hip–knee chain, capturing how the upper body bends around relatively fixed legs.

In contrast, across all four actions, PGExplainer, GNNExplainer, and TGNNExplainer usually select most skeletal links in one frame, making it hard to see which joints actually drive the prediction. Under a fixed edge budget, this behavior naturally yields fewer visualized frames.
TempME often selects edges that are unstable over time and misaligned with the key biomechanics of the movement. The selected edges do not form a consistent pattern across frames.
\begin{table*}[htbp]
    \centering
    \caption{\small Visualization of explanation results for the pull-up action with different methods. Green nodes denote joint (bone) nodes, red edges denote original connections between joints, 
    yellow edges denote the selected important edges, and each image corresponds to one frame in the video.}
    \label{tab:video_pullup}
    \begin{tabular}{>{\centering\arraybackslash}m{0.15\textwidth}
                    >{\centering\arraybackslash}m{0.8\textwidth}}
        \toprule
        Methods & Explanation results \\ 
        \midrule
        MemExplainer &
        \includegraphics[width=0.8\textwidth]{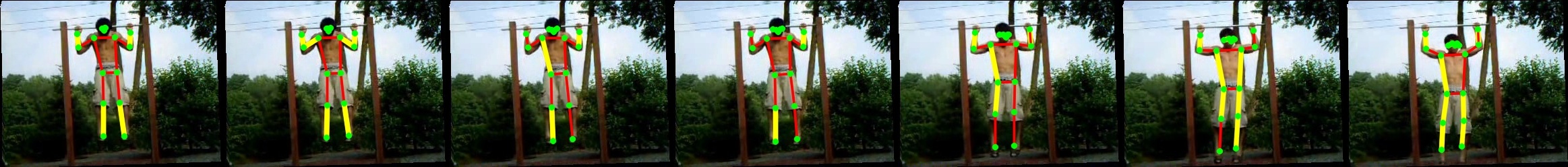} \\
        \midrule
        TempME &
        \includegraphics[width=0.8\textwidth]{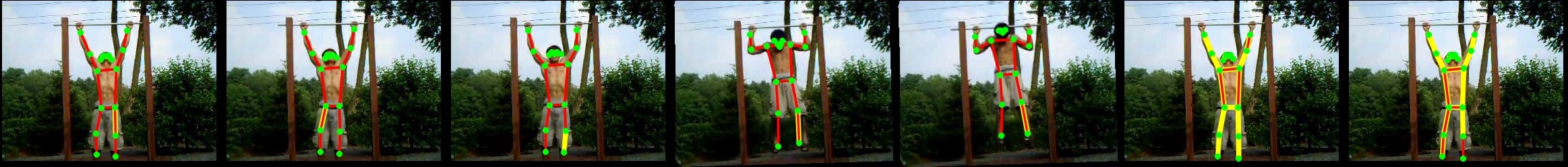} \\
        \midrule
        TGNNExplainer &
        \includegraphics[width=0.35\textwidth]{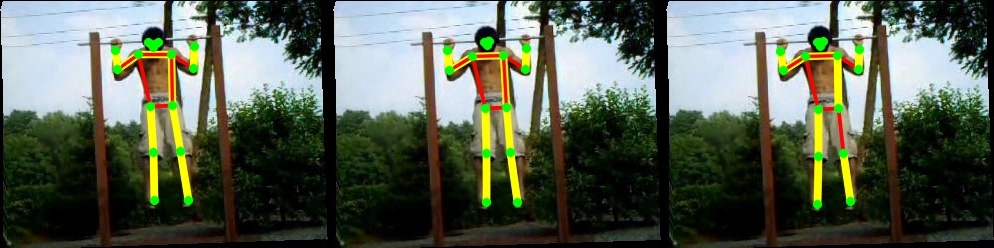} \\
        \midrule
        GNNExplainer &
        \includegraphics[width=0.35\textwidth]{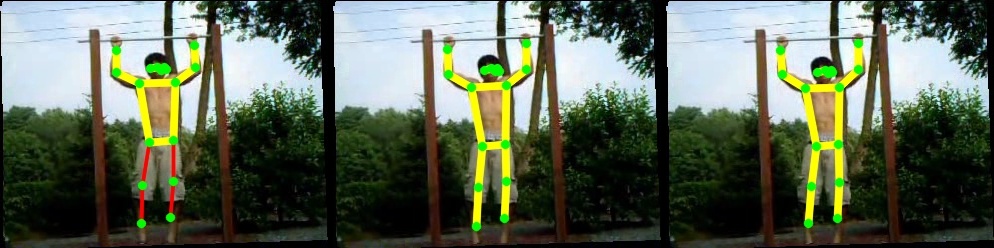} \\
        \midrule
        PGExplainer &
        \includegraphics[width=0.35\textwidth]{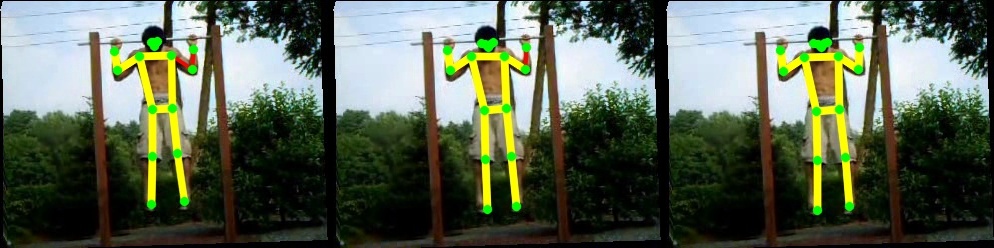} \\
        \bottomrule
    \end{tabular}
\end{table*}

\begin{table}[htbp]
    \centering
    \caption{\small Visualization of explanation results for the climbing action with different methods.Green nodes denote joint (bone) nodes, red edges denote original connections between joints, 
    yellow edges denote the selected important edges, and each image corresponds to one frame in the video.}
    \label{tab:video_climbs}
    \begin{tabular}{>{\centering\arraybackslash}m{0.15\textwidth}
                    >{\centering\arraybackslash}m{0.8\textwidth}}
        \toprule
        Methods & Explanation results \\ 
        \midrule
        MemExplainer &
        \includegraphics[width=0.8\textwidth]{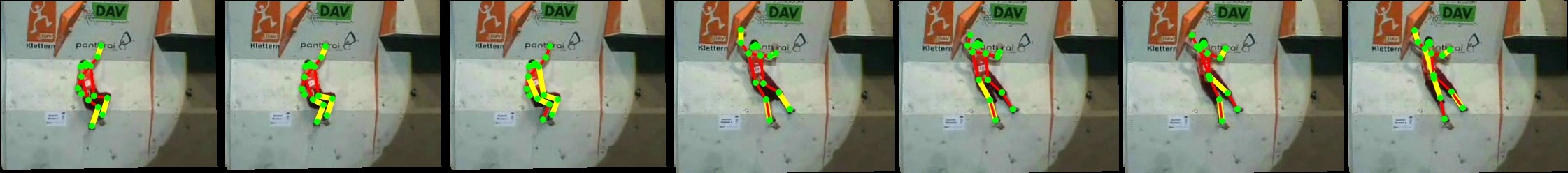} \\
        \midrule
        TempME &
        \includegraphics[width=0.8\textwidth]{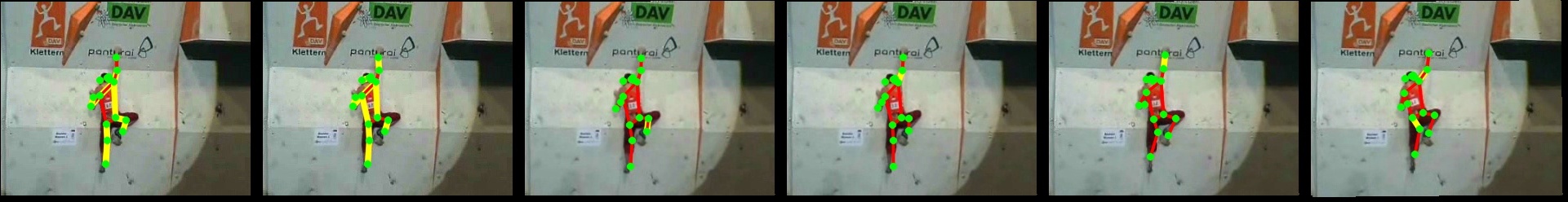} \\
        \midrule
        TGNNExplainer &
        \includegraphics[width=0.35\textwidth]{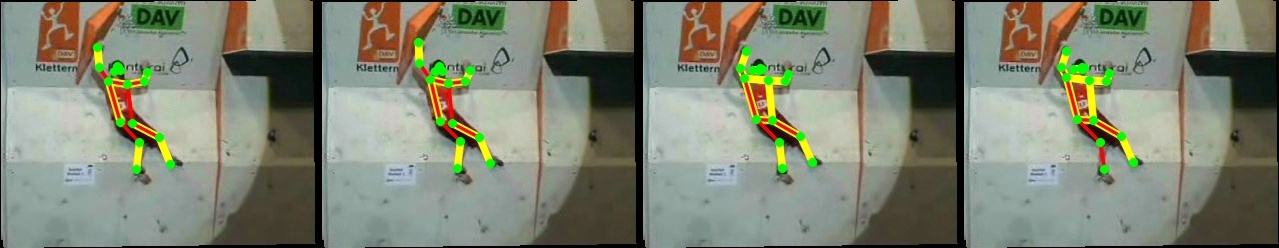} \\
        \midrule
        GNNExplainer &
        \includegraphics[width=0.35\textwidth]{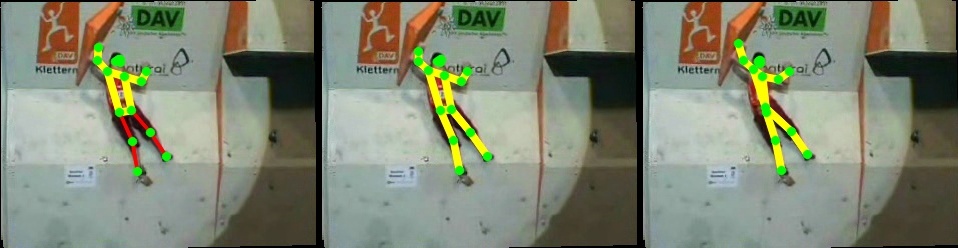} \\
        \midrule
        PGExplainer &
        \includegraphics[width=0.35\textwidth]{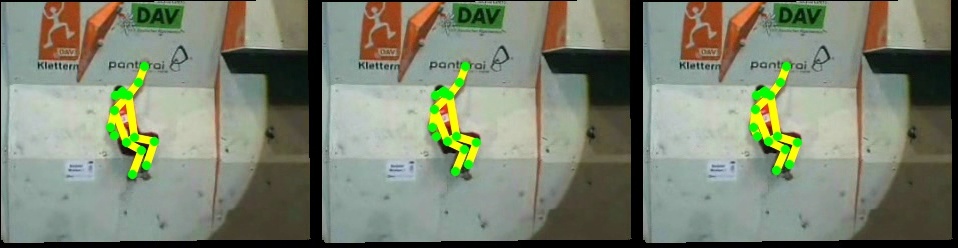} \\
        \bottomrule
    \end{tabular}
\end{table}

\begin{table}[htbp]
    \centering
    \caption{\small Visualization of explanation results for the sit-ups action with different methods.Green nodes denote joint (bone) nodes, red edges denote original connections between joints, 
    yellow edges denote the selected important edges, and each image corresponds to one frame in the video.}
    \label{tab:video_situps}
    \begin{tabular}{>{\centering\arraybackslash}m{0.15\textwidth}
                    >{\centering\arraybackslash}m{0.8\textwidth}}
        \toprule
        Methods & Explanation results \\ 
        \midrule
        MemExplainer &
        \includegraphics[width=0.8\textwidth]{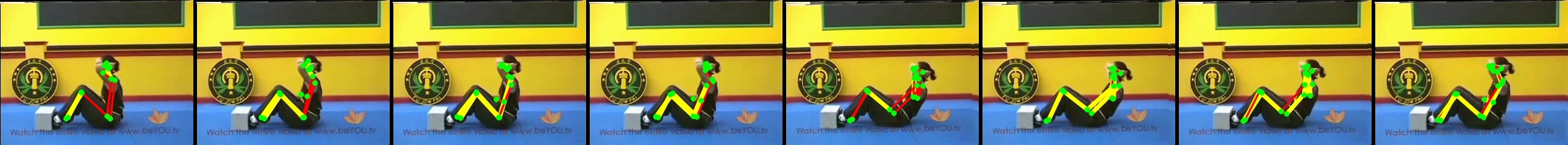} \\
        \midrule
        TempME &
        \includegraphics[width=0.8\textwidth]{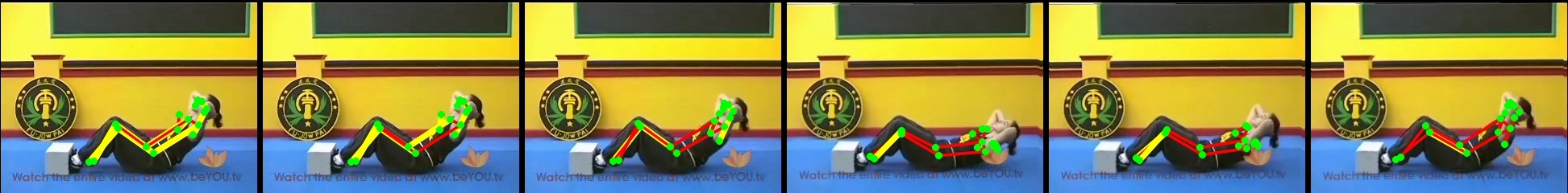} \\
        \midrule
        TGNNExplainer &
        \includegraphics[width=0.35\textwidth]{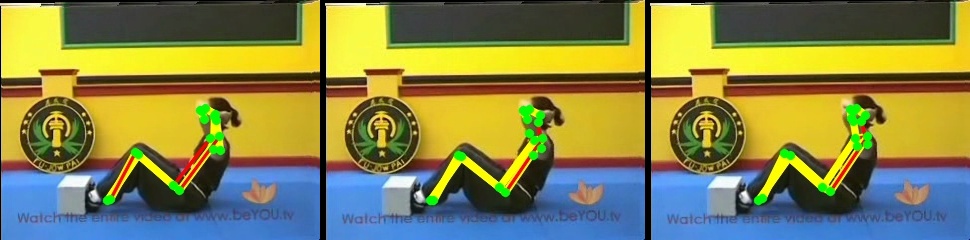} \\
        \midrule
        GNNExplainer &
        \includegraphics[width=0.35\textwidth]{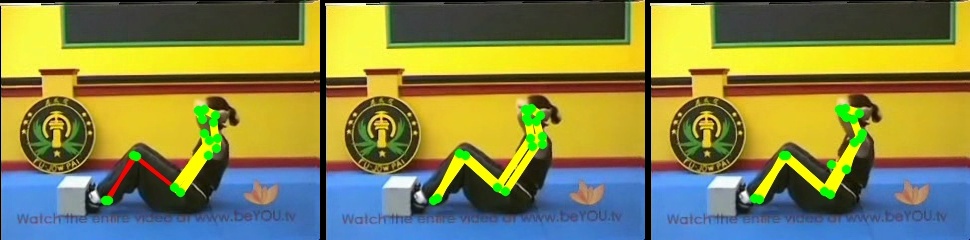} \\
        \midrule
        PGExplainer &
        \includegraphics[width=0.35\textwidth]{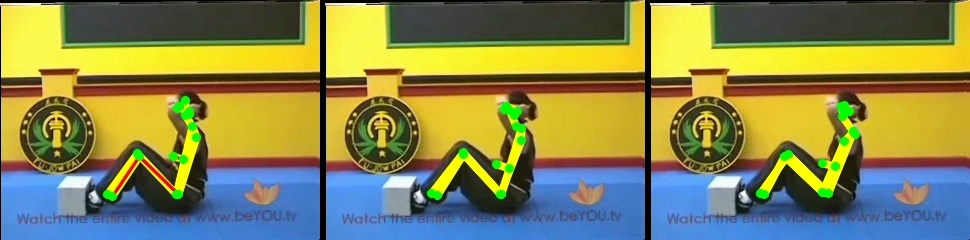} \\
        \bottomrule
    \end{tabular}
\end{table}

\begin{table*}[h]
    \centering
    \caption{\small Visualization of explanation results for the running action with different methods.Green nodes denote joint (bone) nodes, red edges denote connections between joints, 
    yellow edges denote the selected important edges, and each image corresponds to one frame.}
    \label{tab:video_runs}
    \begin{tabular}{>{\centering\arraybackslash}m{0.15\textwidth}
                    >{\centering\arraybackslash}m{0.8\textwidth}}
        \toprule
        Methods & Explanation results \\ 
        \midrule
        MemExplainer &
        \includegraphics[width=0.8\textwidth]{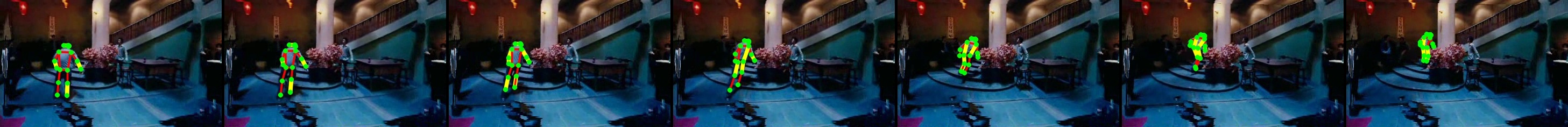} \\
        \midrule
        TempME &
        \includegraphics[width=0.8\textwidth]{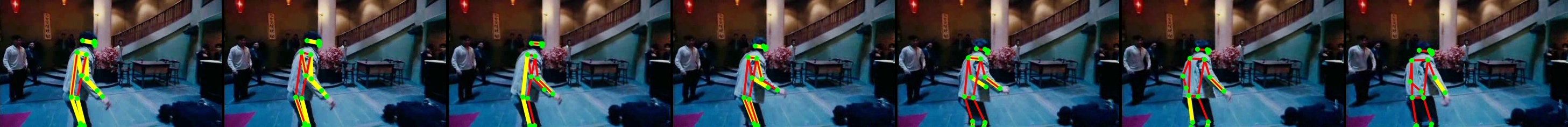} \\
        \midrule
        TGNNExplainer &
        \includegraphics[width=0.35\textwidth]{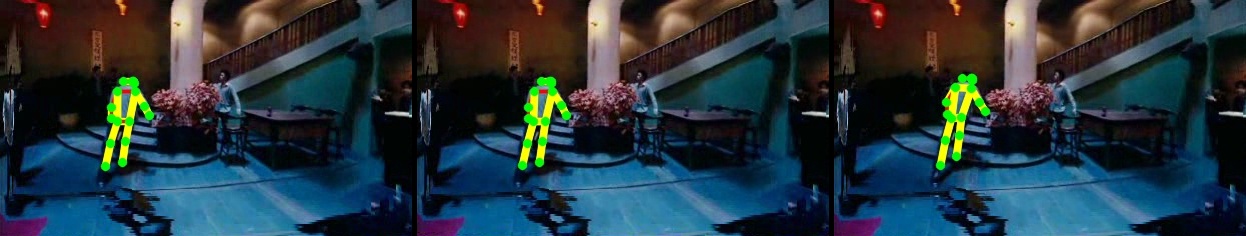} \\
        \midrule
        GNNExplainer &
        \includegraphics[width=0.35\textwidth]{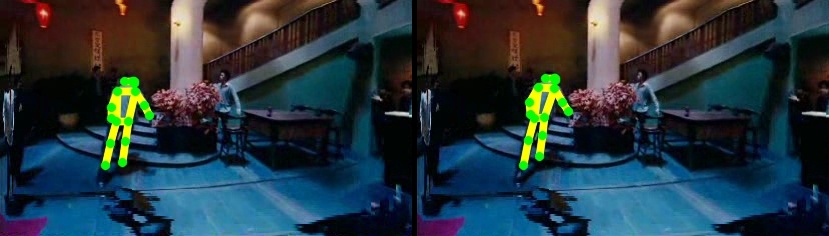} \\
        \midrule
        PGExplainer &
        \includegraphics[width=0.35\textwidth]{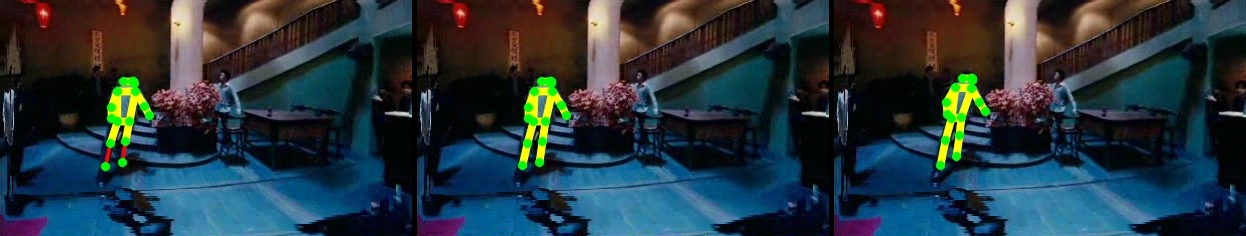} \\
        \bottomrule
    \end{tabular}
\end{table*}